\documentclass[opre, nonblindrev, hyphens]{ssrn_arxiv}
\OneAndAHalfSpacedXI 

\usepackage{graphicx}
\usepackage{times}
\usepackage{natbib}
\bibpunct[, ]{(}{)}{,}{a}{}{,}%
\def\bibfont{\small}%
\usepackage{amsmath}
\usepackage{algpseudocode}
\usepackage{algorithm}
\usepackage{mathrsfs} 
\usepackage{caption}
\usepackage{mathtools}
\usepackage{booktabs}
\usepackage{appendix}
\usepackage[flushleft]{threeparttable}
\usepackage{float}
\usepackage{adjustbox}
\usepackage{varwidth}
\usepackage{tikz}
\usepackage{pgfplots}
\usepackage{braket}
\usepackage{enumerate}
\usepackage{subfigure}
\pgfplotsset{compat = 1.9}
\usepgfplotslibrary{external}
\usepackage{anyfontsize}

\usepackage{tikz-3dplot}
\tdplotsetmaincoords{60}{115}
\pgfplotsset{compat=newest}

\newcommand{\R}{\mathbb{R}}

\newcommand{\E}{\mathbb{E}}
\newcommand{\I}{\mathbb{I}}
\newcommand{\Z}{\mathbb{Z}}
\newcommand{\prob}{\mathbb{P}}
\newcommand{\cS}{\mathcal{S}}
\newcommand{\cC}{\mathcal{C}}
\newcommand{\cM}{\Delta\mathcal{M}}
\newcommand{\cR}{\mathcal{R}}
\newcommand{\cx}{\mathbf{x}^{\textup{H}}}
\newcommand{\cT}{\mathcal{T}}
\newcommand{\cE}{\mathcal{E}}
\newcommand{\cG}{\mathcal{G}}
\newcommand{\cH}{\mathcal{H}}
\newcommand{\sS}{\mathscr{S}}
\newcommand{\sM}{\mathscr{M}}
\newcommand{\bfe}{\mathbf{e}}
\newcommand{\bC}{\mathbf{C}}
\newcommand{\bV}{\mathbf{V}}
\newcommand{\bU}{\mathbf{U}}
\newcommand{\hS}{S^{\pi}}
\newcommand{\pS}{\hat{S}}
\newcommand{\hv}{\hat{v}}
\newcommand{\hu}{\hat{u}}
\newcommand{\av}{\Bar{v}}
\newcommand{\au}{\Bar{u}}
\newcommand{\Eu}{\mathbb{E}\big[\Bar{u}_{i, z}(m)\big]}

\newcommand{\Ev}{\mathbb{E}\big[\Bar{v}_{z, i}(t)\big]}

\newcommand{\ep}{\epsilon}
\newcommand{\tS}{\Tilde{S}}
\newcommand{\uM}{\Delta\widetilde{\mathcal{M}}}
\newcommand{\bR}{\gamma}

\usepackage[UKenglish,USenglish]{babel}
\usepackage{verbatim}
\TheoremsNumberedThrough     
\ECRepeatTheorems

\EquationsNumberedThrough    

\MANUSCRIPTNO{}

\begin{document}


\RUNAUTHOR{}

\RUNTITLE{Data-Driven Dynamic Assortment in Online Platforms: Learning about Two Sides}
\TITLE{Data-Driven Dynamic Assortment in Online Platforms: Learning about Two Sides}

\ARTICLEAUTHORS{%
\AUTHOR{Rahul Roy}
\AFF{IE Business School, IE University, Madrid 28006, \EMAIL{rroy@faculty.ie.edu} (RR)}
\AUTHOR{Nur Sunar, Jayashankar M. Swaminathan}
\AFF{Kenan-Flagler Business School, The University of North Carolina at Chapel Hill, NC  27599, \EMAIL{Nur\_Sunar@kenan-flagler.unc.edu} (NS); \EMAIL{msj@unc.edu} (JS)} 
} 

\ABSTRACT{We study a dynamic assortment problem on a two-sided service platform with incomplete information and heterogeneous customers in a discrete-time setting. In each period, a customer arrives seeking service, and the platform chooses an assortment of sellers to display. The customer then proposes a transaction to at most one seller in the assortment according to a multinomial logit choice model. After a fixed number of periods, sellers review the proposals they have received and each chooses at most one customer according to another multinomial logit choice model, after which the cycle repeats. A key challenge is that the platform does not know the choice-model parameters of either customers or sellers in advance. To our knowledge, this is the first study of a dynamic assortment problem in which both sides’ choice parameters are unknown. We develop a data-driven algorithm that learns these parameters while optimizing the platform’s objective over time. We evaluate performance using regret, which measures revenue loss relative to a clairvoyant benchmark that knows all parameters and customer arrivals in advance. We show that the algorithm’s worst-case regret grows polylogarithmically over time, and we derive a matching lower bound, establishing its rate optimality.

}


\KEYWORDS{assortment optimization; two-way learning; two-sided platform; multi-armed bandit; sequential decision making under uncertainty; data-driven decision making} 

\maketitle


%


\section{Introduction}
\label{introduction}


Assortment selection problems have been extensively studied in the literature, with applications ranging from physical retail environments to online platforms \textcolor{black}{\citep{rooderkerk2013optimizing, fisher2014demand, mitrofanov2024assortment, Zizhuo1, nageswaran2025buyer,  xie2025integrating, xu2025personalized}}. In the classical assortment selection problem, a decision maker selects a subset of products or services to maximize a total reward, such as revenue or profit. The total reward depends on customers' preferences over the offered items, which are typically captured through a choice model.

Two-sided online platforms are reshaping the landscape of various sectors. Unlike traditional marketplaces such as Amazon, participants on the demand side (e.g., customers), the supply side (e.g., providers), or both can be ``active'' on a two-sided platform. When participants are active, they must initiate interaction with agents on the other side of the platform to facilitate a transaction or match, which occurs only when both sides agree. Many online service marketplaces, such as EnergySage, HomeAdvisor, Freelancer, and Airbnb, are examples of such platforms with active participants.

Enhancing supply-side dynamics by offering an effective assortment of service providers is a key focus for two-sided online service platforms. Successful assortment decisions, which typically refer to the platform's choice of which service providers to showcase, are essential for maximizing transactions. For example, in EnergySage, the assortment may be interpreted as the set of solar installers shown to customers; in Freelancer and HomeAdvisor, as the set of providers displayed for a given project type; and in Airbnb, as the set of listings presented to travelers. Reports suggest that suppliers may decline service requests for various reasons, most prominently because they prefer not to serve certain customer types
\citep{edelman2017racial, levin2017airbnb, schall2024airbnb, airbnb2025decline, homeadvisor2025worth}.  In practice, rejection rates by service providers are substantial and can significantly affect platforms' financial performance. For example, using Airbnb as a context, \citet{fradkin2017search} found that in 2014, hosts rejected 42\% of guest requests, leading to a 40\% reduction in guests' transaction probability. Because providers are active on many two-sided platforms, unlike in traditional settings, understanding only customer preferences is not sufficient for making effective assortment decisions. The platform must also account for provider preferences and behaviors, since successful transactions depend not only on customer choice and supply availability but also on mutual agreement between customers and providers. A key challenge, however, is that the preferences of both customers and sellers are unknown to platform managers and must therefore be learned over time. Moreover, platforms often lack adaptive decision support systems that facilitate successful matches between supply- and demand-side participants when information about both sides' preferences is incomplete. This paper aims to bridge this gap by formulating a dynamic assortment selection problem faced by the manager of a two-sided online platform.

In our model, there are multiple customer types. In each period, a customer arrives at the platform, and the customer's type is unknown to the platform prior to arrival. In that same period, the platform manager chooses an \textit{assortment}, that is, a subset of sellers from a set of $N \in \mathbb{Z}_{+}$ sellers, to display to the customer. 
Customers are active, meaning that an arriving customer must initiate (propose) a transaction with a seller on the platform  to acquire services. This is a common feature of many marketplaces. For example, on Airbnb, guests actively reach out to hosts to book stays, and on Upwork, clients initiate contact with freelancers to request services. Upon observing the assortment, the customer proposes to at most one seller in the assortment, and the customer's choice follows a multinomial logit (MNL) model. At the end of every $K$ periods, sellers review the proposals they have received and each decides whether to match with at most one customer according to another MNL model; this cycle then repeats. In this setting, as in practice,  sellers have a specified time frame to match with a customer \citep{vrbo2025response, airbnb2025response}, and the platform manager does not filter out customer proposals, since these proposals represent potential business opportunities for both sellers and the platform; filtering them out would therefore risk losing transactions and revenue. The platform manager's objective is to maximize expected total reward over a planning horizon of $T$ periods. The challenge is that the platform does not know the choice-model parameters of either customers or sellers and must learn them dynamically over time. Our goal is therefore to devise an algorithm that minimizes the platform manager's \textit{regret} over the planning horizon $T$. Regret quantifies the platform's revenue loss relative to the clairvoyant benchmark, namely, the maximum achievable revenue if the platform knew all choice parameters and customer arrival sequences.

The literature on dynamic assortment selection problems with incomplete information has solely analyzed one-way learning scenarios, where the typical focus is to understand the preferences of customers, one side of market participants. However, that formulation assumes either non-existent or known preferences of the other side, like sellers or service providers. Our study diverges from this research stream by studying a dynamic assortment selection problem with two-way learning. Specifically, in our study, the decision maker (the platform manager) dynamically chooses assortments under incomplete information to learn about preference parameters of both customers and sellers while maximizing her total reward. To our knowledge, our paper is the first to examine a dynamic assortment selection problem with two-way  sequential learning. In the remainder of the manuscript, we refer to the described dynamic assortment selection problem with two-sided incomplete information as the platform's \emph{Two-Way Learning} problem or simply, \emph{TWL} problem. Moreover, we will use the terms \emph{policy} and \emph{algorithm} interchangeably throughout the manuscript.

Our model captures the customer and the seller preferences via MNL choice models. While assortment optimization problems have considered several choice models over the years to capture customer preferences, the MNL model \citep{mcfadden1978modeling} still remains one of the most popular parametric choice models in the literature due to its tractability, and therefore has been widely studied in the assortment optimization problems \citep{ryzin1999relationship, rusmevichientong2010dynamic, davis2013assortment, agrawal2019mnl, aznag2021mnl, miao2022online, aouad2023online, chen2024robust}. In fact, the vast majority of papers that study dynamic assortment under incomplete information about customers' choice parameters use an MNL choice model to capture customer preferences \citep{agrawal2019mnl, chen2020dynamic, aznag2021mnl, perivier2022dynamic, chen2024robust, sun2024unified}. Following the literature, we also use an MNL choice model in our formulation.

\subsection{Summary of Main Results and Contributions}
\label{contributions}

To our knowledge, our paper is the first to study a dynamic assortment selection problem where the decision maker dynamically chooses an assortment of sellers while learning about the choice parameters of both customers and sellers in an online platform. Formulating and analyzing this novel problem yield the following distinctive results.  

First, we develop an online algorithm for the TWL problem based on an upper confidence bound (UCB) approach. We show (in Theorem \ref{thm_regret_upper_bound}) that our algorithm's worst-case regret is $\mathcal{O}\big(\log^2(NT)\big)$, which is polylogarithmic in $T$.\footnote{Appendix \ref{notations} includes the formal definitions of the notations used to indicate the order of the regret in terms of $T$.} Our algorithm's polylogarithmic regret bound is a significant improvement over the worst-case regret results established in dynamic assortment selection problems with one-way learning in the literature. 

Specifically, studying a dynamic assortment selection problem with one-way learning and an MNL choice model, existing studies (\citet{agrawal2019mnl, aznag2021mnl}) identified algorithms with a worst-case regret of $\mathcal{\Tilde{O}}\big(\sqrt{T}\big).$\footnote{$\mathcal{\Tilde{O}}(\cdot)$ suppresses logarithmic factors, i.e, $\mathcal{\Tilde{O}}\big(g(T)\big) = \mathcal{O}\big(g(T)\text{polylog}(T)\big)$.} Overall, our study improves the worst-case regret bounds of these studies by identifying a polylogarithmic worst-case regret. This improvement is particularly noteworthy given the complexities inherent in the dynamic assortment selection problem with two-way learning, where assortment decisions must be made without knowing the preference parameters of either customers or sellers. Apart from this, we show (in Theorem \ref{thm_regret_lower_bound}) that there exists an instance of the TWL problem such that the lower bound of any admissible policy's worst-case regret is $\widetilde \Omega\big(\log^2(NT)\big)$. These matching upper and lower regret bounds establish the rate optimality\footnote{Appendix \ref{notations} includes the definition of rate optimality of an admissible policy.} of our algorithm.

In \S \ref{numerical_exp}, we present a simulation study to evaluate the performance of our algorithm for several scenarios of the TWL problem. We show that our algorithm consistently performs well in all scenarios. To assess how algorithms designed for dynamic assortment selection under a one-way learning framework perform in a two-way learning setup, we adapt the UCB-based algorithm from \citet{agrawal2019mnl} to our setting. We find that our algorithm performs substantially better than the one proposed by \citet{agrawal2019mnl} for all of the considered problem instances. This finding highlights that algorithms developed for one-way learning problems are ineffective in two-way learning scenarios, underscoring the need to develop new algorithms tailored to two-way learning frameworks. Furthermore, for a fixed set of sellers, we numerically study the relationship between the platform's total reward  and the upper bound of the assortment size, i.e., the maximum number of sellers that can be displayed to a customer. Our study reveals that increasing the upper bound of the assortment size beyond a threshold does not yield any considerable improvement in the platform's total reward. 

Finally, in \S \ref{extensions}, we analyze two extensions of our base model to assess the robustness of our algorithm's performance. In the first extension, we show (in Theorem \ref{thm_info_transparency}) that the performance of our algorithm is robust to the type and amount of customer-specific information the platform manager shares with the sellers. In our second extension, we establish (in Theorem \ref{thm_cust_reward}) that the performance of our algorithm is unaffected by whether the platform's reward is dependent or independent of the type of matched customer.

\subsection{Related Literature}
\label{literature}

There is a rich literature on assortment optimization across a wide range of contexts. Many studies in this literature focus on static optimization problems; see, e.g., \citet{ryzin1999relationship, talluri2004revenue, davis2013assortment, gallego2014constrained, li2015d, desir2022capacitated}. Our paper contributes to the stream on dynamic assortment optimization. In this stream, several studies examine settings in which customer preferences are unknown to the decision maker a priori and must be learned over time while maximizing total reward \textcolor{black}{\citep[e.g.,][]{rusmevichientong2010dynamic, saure2013optimal, agrawal2019mnl, aznag2021mnl, miao2022online, foussoul2023mnl, chen2024robust, li2025online}}. Existing work on dynamic assortment selection with incomplete information, spanning contexts such as retail product purchases and online advertising, has focused on one-way learning settings in which customer preferences are the only unknowns to be learned. Moreover, studies within this stream of literature focusing on one-way learning under an MNL choice model has developed algorithms with a worst-case regret of $\mathcal{\Tilde{O}}\big(\sqrt{T}\big)$. For instance, while \citet{aznag2021mnl} and \citet{miao2022online} prove a worst-case regret of $\mathcal{O}\big(\sqrt{T}\big)$, \citet{agrawal2019mnl} propose a UCB-based algorithm where they show that the upper bound on the regret of the algorithm is  $\mathcal{O}\big(\sqrt{NT\log(NT)}\big)$, where $N$ is the number of products. Similarly, \citet{agrawal2017thompson} develop another algorithm for the aforementioned one-way learning problem that achieves a worst-case regret bound of $\mathcal{O}\big(\sqrt{NT}\log(TK)\big)$, where $K \leq N$ is the upper bound on the assortment size. Our paper differs from this stream of literature in several important ways. First, we study a two-way learning problem in which an online platform dynamically learns the preferences of both customers and sellers. Second, unlike several prior studies that assume homogeneous customers (e.g., \citet{saure2013optimal, agrawal2017thompson, agrawal2019mnl, aznag2021mnl}), we allow for heterogeneous customer types. Methodologically, our work is most closely related to \citet{agrawal2019mnl, aznag2021mnl} and \citet{miao2022online}, whose policies follow an ``exploration--exploitation'' approach, rather than the ``explore--then--exploit'' approach used by \citet{rusmevichientong2010dynamic} and \citet{saure2013optimal}. However, unlike these studies, our model features a substantially more complex payoff-realization structure motivated by online service platforms in practice. In particular, platform payoffs are not realized immediately upon customer choice. Instead, the platform manager receives a payoff only when sellers make decisions at the end of each fixed-length recurring cycle. Consequently, in every period, the manager must optimize assortments under incomplete information and without immediate payoff feedback. This feature makes the problem significantly more challenging and necessitates a new approach to formulating and analyzing the TWL problem.
 
Overall, the two-way learning (TWL) problem we develop is fundamentally distinct from existing dynamic assortment selection problems with one-way learning, such as the MNL-Bandit problem studied by \citet{agrawal2019mnl}. Attempting to reduce the former to the latter, even under various assumptions and simplifications, proves infeasible due to the inherent complexity of the TWL problem. Even in scenarios involving homogeneous customers, known seller preferences, and single-period epochs, the payoff dynamics of the TWL problem differ significantly from those of the MNL-Bandit problem. In the latter, the platform receives immediate payoff following the customer’s decision. In contrast, in our context, although the customer extends a proposal to the seller, the payoff is contingent upon the seller's subsequent response. This adds a layer of stochasticity to the payoff calculation, as the outcome after the customer's decision is influenced by the uncertainty about the seller's reaction. Thus, even the simplest form of the TWL problem remains considerably more complex than the MNL-Bandit framework due to the interplay of customer and seller choices within a probabilistic framework.

\textcolor{black}{Our study is also related to the multi-armed bandit literature; see, e.g., \citet{robbins1952some, katehakis1987multi, auer2002finite, auer2002nonstochastic, slivkins2019introduction, bastani2020online}, and \citet{simchi2025simple} for important foundational work and illustrative examples}. Our work is more closely related to those by \citet{gai2012combinatorial, chen2013combinatorial, combes2015combinatorial, harrison2015investment, chen2016combinatorial}, and \citet{jourdan2021efficient} who consider a \emph{combinatorial bandit} framework. \textcolor{black}{However, unlike our study that proposes a novel two-way learning framework, the aforementioned studies examine the combinatorial bandit problem via a one-way learning framework.} Furthermore, unlike some studies who consider a linear reward function with respect to the unknown parameters in the problem \citep{rusmevichientong2010linearly, gai2012combinatorial}, our objective function is nonlinear in the unknown problem parameters: customers' and sellers' MNL parameters. 

While our primary focus is on dynamic assortment selection in two-sided platforms, the dynamics of learning in online markets, including two-sided markets, have attracted attention in various contexts \textcolor{black}{\citep[e.g.,][]{sarkar2021bandit, liu2021bandit, johari2021matching, li2022rate, wang2022bandit, avadhanula2022bandits, feng2023asymptotically, birge2024interfere, singhvi2025online}}. However, none of the prior work investigates a two-sided platform operator's two-way dynamic learning problem in a combinatorial framework as we do. \citet{sarkar2021bandit} adopt a multi-armed bandit (MAB) approach to devise investment strategies for a centralized lending platform, facilitating matches between lenders and borrowers. In their model, borrowers' preferences are assumed to be fixed, while lenders adapt based on matching outcomes in each period. \citet{liu2021bandit} explore a decentralized two-sided matching market, where players on one side of the market learn their preferences over time through repeated interactions. \citet{li2022rate} investigate a market  within an MAB framework, assuming known job-seeker preferences but unknown agent preferences. Furthermore, \citet{singhvi2025online} study sequential learning on online platforms where user preferences are unknown, explicitly addressing the sample selection bias that arises when outcomes are only observed for users who actively engage with a recommendation. Notably, these studies and others like \citet{johari2021matching} and \citet{wang2022bandit} primarily address one-sided uncertainty in preferences. 

\subsection{Organization}
\label{organization}

The remainder of our paper is structured as follows. \S \ref{problem} presents a formal description of the TWL problem, which is followed by an elaborate discussion of our proposed algorithm in \S \ref{policy}. \S \ref{upper_bound} presents our first key result, Theorem \ref{thm_regret_upper_bound}, along with supporting lemmas, and provides an outline of Theorem \ref{thm_regret_upper_bound}. \S \ref{lower_bound}, presents our second key result, Theorem \ref{thm_regret_lower_bound}, along with supporting lemmas.  \S \ref{numerical_exp} presents our simulation study that evaluates the performance of our algorithm across various instances of the two-way learning problem, and compares our algorithm's performance with that of another related UCB-based algorithm. Moreover, \S \ref{numerical_exp} investigates how the upper bound on the assortment size affects the platform's total reward. \S \ref{extensions} discusses two extensions of our algorithm and establishes its robustness under those extensions via Theorems \ref{thm_info_transparency} and \ref{thm_cust_reward}. Finally, \S \ref{conclusions}  includes concluding remarks.

\section{Problem Formulation}
\label{problem}

Consider a two-sided platform that dynamically selects a set of sellers---an \textit{assortment}---to be displayed to each arriving customers over $T$ periods. The set of sellers on the marketplace is $\cS \doteq \set{1, \ldots, N}$. At the beginning of each period $t \in \cT \doteq \set{1, \ldots, T}$, a customer arrives at the platform. There are different customers types, and the set of customer types is denoted by $\cC$.  Upon customer arrival, customer's type $c_t \in \cC$ is revealed to the platform manager. Yet, the sequence of customer arrivals is unknown to the platform a priori. Customer types can be defined in various ways. Practitioners typically form these types based on observable characteristics, such as zipcodes. Therefore, the set of customer types, $\cC$, is known to the platform manager. Observing $c_t$, the platform manager chooses an assortment of sellers $S_t \subseteq \cS$ and shows this assortment to the customer arrived in period $t$ ($t^{\text{th}}$ customer). The size of the assortment  offered to an arriving customer must satisfy $|S_t| \leq B \leq N$, where $B$ is a constant, implying a capacity constraint for the platform manager. Seeing the offered assortment $S_t$, the customer either extends a proposal to a seller in $S_t$ or leaves the platform without making any proposal. Customer $t$'s choice within assortment $S_t$ is a random variable $\zeta_t(S_t)$ such that  the probability that the customer $t$ of type $c_t = z$ chooses seller $i$ from $S_t$ at time $t$ is:
\begin{align}
    p_{z, i}(S_t) \doteq \prob(\zeta_t(S_t) = i \mid c_t = z) = 
    \begin{cases}
        \dfrac{v_{z, i}}{1 + \sum_{j \in S_t}v_{z, j}} & \text{for} \;\; i \in S_t,\\
        0 & \text{otherwise},
    \end{cases} \label{eq:proposal_prob}
\end{align}
where $v_{z, j}$ represents the choice parameter of customer type $z$ for seller $j$.  Note that customers' choices follow an MNL model. The caveat is the matrix of the MNL parameters of the customers, that is, $\bV \doteq [v_{z, i}]_{|\cC| \times N}$,  is unknown to the platform manager. 

Every $K \geq 1$ period, seller  $i \in \cS$ reviews all customer proposals received during that time. Consequently, we split the time horizon $T$ into $M$ \textit{epochs}, each consisting of $K$ periods. Without loss of generality, we assume $T= MK$. The set of epochs over $T$ periods is denoted by $\sM \doteq \set{1, \ldots, M}$. At the end of each epoch  $m$, all sellers assess the proposals received during that epoch (i.e., over the preceding $K$ periods) and decide whether to match with a customer. If a seller chooses to match with any customer in epoch $m \in \sM$, the platform receives a reward $\gamma_m$, where $\gamma_m \in (0, \bR]$ and $\bR \in \R_+$ (In \S \ref{customer_based_reward}, we allow the reward $\gamma_m$ to depend on both epoch $m \in \sM$ and customer type $z \in \cC$, i.e., $\gamma_m$ takes the form $\gamma_{m, z}$). Customer proposals received in epoch $m$ are lost before the start of the next epoch $m + 1$. Figure \ref{fig:timeline} depicts the sequence of events on the platform over $T = MK$ periods. 

\begin{figure}[htbp]
\begin{center}
\includegraphics[width=0.95\linewidth]{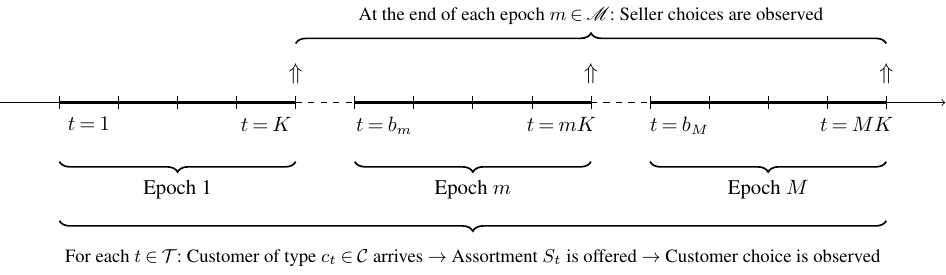}
\caption{\scriptsize Sequence of Events on Online Platform}
\label{fig:timeline}
\end{center}
\end{figure}

We now explain sellers' preferences. Define $b_m$ as the first period of each epoch $m$, i.e., $b_m \doteq (m-1)K + 1$. Take any time period $t \in \set{b_m, \ldots, mK}$ in epoch $m$  and let $\bC_i(t) \in \mathbb{Z}_{+}^{1\times|\cC|}$ be the row vector of customer proposals received by seller $i$ starting from $b_m$ up to $t$. Each coordinate of $\bC_i(t)$ corresponds to the number of proposals by a different customer type in $\cC$. Let $\bC(t) = (\bC_i(t))_{i \in \cS}$ be the matrix comprising of the vectors $\bC_i(t)$ for all $i \in \cS$. The matrix $\bC(t)$ is of size $N \times |\cC| $, and each row of $\bC(t)$ corresponds to a seller $i \in \cS$, while each column of $\bC(t)$ corresponds to a customer type in $\cC$. We henceforth refer to $\bC(t)$ as the \emph{proposal matrix} at time $t$.

At time $t$, suppose that a customer of type $c_t \in \cC$ arrives at the platform. If the customer sends a proposal to some seller $j \in S_t$, that only leads to a change in $\bC_{j}(t-1)$ at most by 1 at the coordinate $c_t$. Therefore, $\bC_{j}(t) = \bC_{j}(t-1) + \bfe_{c_t} \cdot \I\big[\zeta_t(S_t) = j\big]$, where $\bfe_{c_t}$ is the unit basis vector at coordinate $c_t$. Then, the proposal matrix $\bC(t) = \big(\bC_{i}(t)\big)_{i \in \cS}$ evolves as follows:
\begin{align}
    \bC_i(t) =  
    \begin{cases}
        \bC_{i}(t-1) + \bfe_{c_t} & \text{if} \;\; \I\big[\zeta_t(S_t) = i\big],\\
        \bC_{i}(t-1) & \text{otherwise}.
    \end{cases} \label{eq:state_transition}
\end{align}

Seller $i$'s choice within the vector $\bC_i(t)$ at time $t$ is represented by a random variable $\xi_i(\bC_i(t))$ such that  the probability that seller $i$ chooses a customer of type $c$ from $\bC_i(t)$ at time $t$ is $\phi_{i, c}(\bC_i(t))$. Define $n_{c, i}(t) \doteq (\bC_i(t))_{c}$ where $(\bC_i(t))_{c}$ represents the $c^{\text{th}}$ \textcolor{black}{coordinate} of $\bC_i(t)$. Then, we capture sellers' preferences using an MNL model, where $\phi_{i, c}(\bC_i(t))$ is given by: 
\begin{align}
    \phi_{i, c}(\bC_i(t)) \doteq \prob(\xi_i(\bC_i(t)) = c) = 
    \begin{cases}
        \dfrac{n_{c, i}(t)u_{i, c}}{1 + \sum_{z \in \cC}n_{z, i}(t)u_{i, z}} & \text{for} \;\; n_{c, i}(t)  > 0,\\
        0 & \text{otherwise}.
    \end{cases} \label{eq:single_match_prob}
\end{align}

In \eqref{eq:single_match_prob}, $u_{i, z}$ represents the preference parameter of seller $i$ for customer type $z$, and $n_{z, i}(t) = (\bC_i(t))_{z}$ represents the number of proposals received by seller $i$ from customer of type $z$ up to time $t$. As in the matrix of customers' choice parameters $\bV$, the platform manager also has no prior information about $\bU \doteq [u_{i, z}]_{N \times |\cC|}$, i.e., the matrix of parameters in sellers' choice models. Notably, the seller's choice probability, as defined by \eqref{eq:single_match_prob}, aligns with standard formulations in existing literature \citep{aouad2023online}.

The total reward received by the platform at the end of each epoch $m$ is given by:
\begin{align}
    \cR(m, \bU, \bV) &= \gamma_m\sum_{i \in \cS} \E_{\xi} \left[\I\big[\xi_i(\bC_i(mK)) \in \bC_i(mK) \big] \mid \bC_i(mK) \right] \nonumber\\
    &= \gamma_m\sum_{i \in \cS}\bigg\{\dfrac{\sum_{z \in \cC}n_{z, i}(mK)u_{i, z}}{1 + \sum_{z \in \cC}n_{z, i}(mK)u_{i, z}}\bigg\}, \label{eq:reward_epoch_simple} 
\end{align}
where $\I\big[\xi_i(\bC_i(mK)) \in \bC_i(mK)\big]$ indicates that seller $i$ matches with any customer within $\bC_i(mK)$ at the end of epoch $m$.

\textcolor{black}{We can simplify \eqref{eq:reward_epoch_simple} by introducing new functions $\cM(\cdot)$ and $\cM_{i}(\cdot)$ defined below. Recall that at the beginning of period $t$, seller $i$'s proposal vector is $\bC_i(t-1)$, and customer type $c_t$ is observed. Based on $\bC_i(t-1)$, seller $i$'s probability to match with a customer of type $z \in \cC$ is $\phi_{i, z}(\bC_i(t-1))$, while her total probability to match with any customer is $\phi_i(\bC_i(t-1)) = \sum_{z \in \cC}\phi_{i, z}(\bC_i(t-1))$.} Recall also that the platform offers an assortment $S_t$ to the arriving customer of type $c_t\in \cC$ in period $t$. Using these, and given the observed customer type $c_t$, we define:
\begin{align}
    &\cM_i(S_{t}, \bC(t-1), c_{t}) \doteq \E_{\zeta}\bigg[\phi_i\big(\bC_i(t-1) + \bfe_{c_{t}}\cdot\I\big[\zeta_t(S_t) = i\big]\big) - \phi_i\big(\bC_i(t-1)\big) \bigg] \label{eq:deltaM_i} \\
    &\cM(S_{t}, \bC(t-1), c_{t}) \nonumber\\
    &\doteq \sum_{i \in S_t}\cM_i(S_{t}, \bC(t-1), c_{t}) \nonumber \\  
    & = \sum_{i \in S_t}\E_{\zeta}\bigg[\phi_i\big(\bC_i(t-1) + \bfe_{c_{t}}\cdot\I\big[\zeta_t(S_t) = i\big]\big) - \phi_i\big(\bC_i(t-1)\big) \bigg]  \nonumber \\ 
    &= \sum_{i \in S_t}p_{c_t, i}(S_t)\bigg\{\phi_i\big(\bC_i(t-1) + \bfe_{c_{t}}\big) - \phi_i\big(\bC_i(t-1)\big)\bigg\}\nonumber \\
    &= \sum_{i \in S_t}\bigg[\dfrac{v_{c_t, i}}{1 + \sum_{j \in S_t} v_{c_t, j}}\bigg]\bigg[\dfrac{\sum_{z \in \cC}n_{z, i}(t-1)u_{i, z} + u_{i, c_t}}{1 + \sum_{z \in \cC}n_{z, i}(t-1)u_{i, z} + u_{i, c_t}} - \dfrac{\sum_{z \in \cC}n_{z, i}(t-1)u_{i, z}}{1 + \sum_{z \in \cC}n_{z, i}(t-1)u_{i, z}} \bigg] \nonumber \\
    &= \sum_{i \in S_t}\bigg[\dfrac{v_{c_t, i}u_{i, c_t}}{(1 + \sum_{j \in S_t} v_{c_t, j})(1 + \sum_{z \in \cC}n_{z, i}(t)u_{i, z})(1 + \sum_{z \in \cC}n_{z, i}(t-1)u_{i, z})}\bigg] \nonumber \\
    &= \sum_{i \in S_t}\bigg[\dfrac{v_{c_t, i}r_{i, c_t}(t)}{1 + \sum_{j \in S_t}v_{c_t, j}}\bigg], \label{eq:deltaM}
\end{align}
where 
\begin{align}
    r_{i, c_t}(t) \doteq \dfrac{u_{i, c_t}}{(1 + \sum_{z \in \cC}n_{z, i}(t)u_{i, z})(1 + \sum_{z \in \cC}n_{z, i}(t-1)u_{i, z})} <1. \label{eq:rit}
\end{align}
 
The term $\cM(S_t,\bC(t-1),c_t)$ can be interpreted as the marginal increase in expected matches at the end of period $t$, relative to expected matches at the end of period $t-1$, when assortment $S_t$ is offered at time $t$ to an arriving customer of type $c_t$, given proposal matrix $\bC(t-1)$. 

Lemma \ref{lemma_exp_reward} (in \S \ref{supplementary_lemmas}) establishes that the total reward in epoch $m$ equals $\gamma_m$ times $\cM(S_t,\bC(t-1),c_t)$ for all periods in the epoch. The total reward received by the platform in epoch $m$ is therefore given by:
\begin{align}
    \cR(m, \bU, \bV) = \gamma_m\sum_{t = b_m}^{mK} \cM(S_t, \bC(t-1), c_t).\label{eq:reward_epoch}
\end{align}

Recall that the platform manager does not have any prior knowledge of the preference parameters of both customers' and sellers' MNL models. Additionally, the sequence of customer arrivals is unknown to the platform manager a priori. In this setting, the platform manager adaptively offers an assortment of sellers to each arriving customer (of a certain type) over a planning horizon while dynamically learning the MNL parameters of both customers and sellers, with an objective to maximize the platform's total reward. We refer to this sequential assortment selection problem under incomplete information as the platform's \emph{Two-Way Learning} Problem, or simply \emph{TWL} problem.

\subsection{Clairvoyant Policy: A Dynamic Programming Approach}
\label{dynamic_prog}

Assume that there is a clairvoyant who knows $v_{z, i}$, $u_{i, z}$ for all $i \in \cS, z \in \cC$, and the sequence of customer arrivals $\set{c_t}_{t=b_m}^{mK}$ a priori for each epoch $m$. For a given epoch $m$, let $\bC(t-1) = \big(\bC_{i}(t-1)\big)_{i \in \cS}$ be the proposal matrix at time $t-1$. We define the platform's value function  at time $t$ in epoch $m$ under a given policy  as follows:
\begin{align}
    V_m\big(t, \bC(t-1)\big) &= \E\bigg[\gamma_m\sum_{l = t}^{mK}\cM(S_{l}, \bC(l-1), c_{l}) \mid t, \bC(t-1)\bigg] \nonumber\\
    &= \gamma_m \cM(S_{t}, \bC(t-1), c_{t}) + \E\bigg[V_m\big(t+1, \bC(t)\big) \mid t,  \bC(t-1)\bigg],
    \label{eq:value}
\end{align}
where the expectation is taken under the policy, and proposal matrix at $t$, given by $\bC(t)$, is a function of $\bC(t-1)$, $c_{t}$, and $S_{t}$, as defined in \eqref{eq:state_transition}. In \eqref{eq:value}, $V_m\big(t, \bC(t-1)\big)$ represents the platform's remaining reward measured at the beginning of period $t$ in epoch $m$ when the state is $\big(t, \bC(t-1)\big)$. Then, in the clairvoyant setting, the maximum value function at time $t$ (in epoch $m$) satisfies the following Bellman equation:
\begin{align}
    V_m^*\big(t, \bC(t-1)\big) &= \max_{S_{t} \in \sS} \Bigg\{\gamma_m\cM(S_{t}, \bC(t-1), c_{t}) + \E\bigg[V_m^*\big(t+1, \bC(t)\big) \mid t, \bC(t-1)\bigg]\Bigg\}, \label{eq:optimal_value}
\end{align}
where \(\sS \doteq \set{S \subseteq \cS \mid |S| \leq B}\) is the set of feasible assortments.  The term $ V_m^*$ is called the \textit{optimal clairvoyant value function},  and the policy \{$S^{\ast}_{t}, t=1,2,\ldots, T\}$ where  $S_t^*$ is defined in \eqref{eq:optimal_assortment_oracle} will be referred to as the optimal clairvoyant policy, or \textit{the clairvoyant policy}, in short.
\begin{align}
    S_t^* &= \argmax_{S_{t} \in \sS} V_m^*\big(t, \bC(t-1)\big) \nonumber\\ 
    &= \argmax_{S_{t} \in \sS} \Bigg\{\gamma_m\cM(S_{t}, \bC(t-1), c_{t}) + \E\bigg[V_m^*\big(t+1, \bC(t)\big) \mid t, \bC(t-1)\bigg]\Bigg\}.\label{eq:optimal_assortment_oracle}
\end{align}

Recall that at the end of an epoch, all unaccepted customer proposals are lost. Therefore, $V_m\big(mK+1, \bC(mK)\big) = 0$, implying that:
\begin{align}
    &V_m^*\big(mK, \bC(mK-1)\big) \nonumber\\ 
    &= \max_{S_{mK} \in \sS} \bigg\{\gamma_m\cM(S_{mK}, \bC(mK-1), c_{mK})\bigg\} \nonumber \\ 
    &= \max_{S_{mK} \in \sS} \left\{\sum_{i \in S_{mK}}\dfrac{\gamma_m v_{c_{mK}, i}u_{i, c_{mK}}}{(1 + \sum_{j \in S_{mK}}v_{c_{mK}, j})(1 + \sum_{z \in \cC}n_{z, i}(mK)u_{i, z})(1 + \sum_{z \in \cC}n_{z, i}(mK-1)u_{i, z})}\right\}.
    \label{eq:boundary_condition}
\end{align}

Using these, the clairvoyant can recursively solve \eqref{eq:optimal_value} to obtain $V_m^*\big(t, \bC(t-1)\big)$, and therefore $S_t^*$, for all $t$ in $m \in \sM$. 

\subsection{Performance Evaluation via Regret}
\label{regret}

\textcolor{black}{We now study the original incomplete information setting described in \S \ref{problem}.} Consider a period $t \in \set{b_m, \ldots, mK}$ within an epoch $m$. Let \(\{c_l\}_{l=1}^t\), \(\{\bC(l)\}_{l=1}^{t-1}\), \(\{\hS_l\}_{l=1}^{t-1}\), and \(\{\zeta_l(\hS_l)\}_{l=1}^{t-1}\) denote, respectively, the observed sequences of customer arrivals, proposal matrices, offered assortments, and customer choices under policy \(\pi\) up to and including the beginning of period \(t\). In addition, let $\left\{\left\{\xi_i\left(\bC_i(\Tilde{m}K)\right)\right\}_{i \in \cS}\right\}_{\Tilde{m}=1}^{m-1}$ denote the observed sequence of seller choices up to and including epoch $m-1$.  Then, the filtration associated with these sequences up to and including \textcolor{black}{the beginning of} period $t$ in epoch $m$ is given by:
\begin{align}
    \mathscr{H}_{t} = \left\{\mathscr{H}_{0}, \sigma\left(\left\{\left\{c_l\right\}_{l=1}^t, \left\{\bC(l), \hS_l, \zeta_l(\hS_l)\right\}_{l=1}^{t-1}, \left\{\left\{\xi_i\left(\bC_i(\Tilde{m}K)\right)\right\}_{i \in \cS}\right\}_{\Tilde{m}=1}^{m-1}\right\}\right)\right\}. \label{eq:filtration}
\end{align}
where $\mathscr{H}_{0}$ indicates any prior information available to the platform manager at time $t=0$, and $\sigma(\set{\cdot})$ represents the sigma-algebra set over the sequence $\set{\cdot}$. The sequence of functions $\pi = \set{\pi_1, \ldots \pi_T}$ is called an admissible policy when $\pi_t$ is adapted to $\mathscr{H}_{t}$, that is,
\begin{align}
    \pi_t : \mathscr{H}_{t} \rightarrow \set{\hS_t \subseteq \cS \mid |\hS_t| \leq B}. \label{eq:policy_def}
\end{align}

The value function under an admissible policy $\pi$ is given by:
\begin{align}
    V^{\pi}_m\big(t, \bC(t-1)\big) &= \E\bigg[\sum_{l = t}^{mK}\gamma_m\cM(\hS_{l}, \bC(l-1), c_{l}) \mid \mathscr{H}_t \bigg], \quad \forall t \in [(m-1) K +1, mK]. \label{eq:vf_policy_pi} 
\end{align}

Platform manager's objective is to find an admissible policy $\pi$ that maximizes the value function over $M$ epochs or equivalently $T$ periods. Recall that the notation $b_m$ refers to the first period in an epoch $m$, i.e., $b_m \doteq (m-1)K + 1$. Using \eqref{eq:vf_policy_pi}, the value function under the policy $\pi$ at the start of epoch $m$ (i.e., at time $t= b_{m}$) is given by:
\begin{align}
    V^{\pi}_m\big(b_m, \bC(b_m-1)\big)
    &= \E\left[\sum_{t = b_m}^{mK}\gamma_m \cM(\hS_t, \bC(t-1), c_t) \mid \mathscr{H}_{b_m}\right]. \label{eq:value_policy}
\end{align}

\textcolor{black}{Note that in \eqref{eq:value_policy}, the parameters $v_{z, i}$ and $u_{i, z}$ for all $i \in \cS, z \in \cC$ as well as the sequence of customer arrivals are unknown a priori.} We therefore can no longer use the clairvoyant policy to solve the problem. To identify a well-performing admissible policy $\pi$ under such incomplete information, we use the concept of \emph{regret} to evaluate policy performance. 
The regret represents the platform’s expected total revenue loss relative to the clairvoyant revenue — the maximum revenue achievable if the platform knew all choice parameters and customer arrival sequences. Our objective is to develop a policy $\pi$ that minimizes the regret over $T$ periods. By \eqref{eq:optimal_value} and \eqref{eq:value_policy}, the platform's regret of implementing the policy $\pi$ under incomplete information scenario is given by: 
\begin{align}
    Reg_{\pi}(T) &= \E\left[\sum_{m=1}^M\left\{ V_m^*\big(b_m, \bC(b_m-1)\big) - V^{\pi}_m\big(b_m, \bC(b_m-1)\big)\right\}\right] \nonumber \\
    &= \sum_{m=1}^M \E\left[\E\left[\sum_{t=b_m}^{mK}\gamma_m \cM(S_t^*, \bC(t-1), c_t) \mid b_m, \bC(b_m-1) \right]\right] \nonumber\\
    &- \sum_{m=1}^M \E\left[\E\left[\sum_{t=b_m}^{mK}\gamma_m \cM(\hS_t, \bC(t-1), c_t) \mid \mathscr{H}_{b_m} \right]\right] \nonumber \\
    &= \sum_{m=1}^M \E \left[\sum_{t = b_m}^{mK}\gamma_m \left\{\cM(S_t^*, \bC(t-1), c_t) - \cM(\hS_t, \bC(t-1), c_t)\right\}\right], \label{eq:regret_def}
\end{align}
where $S_t^*$ is the assortment offered in period $t$ under the optimal clairvoyant policy discussed in \S \ref{dynamic_prog}. 

Previous research on dynamic assortment selection under incomplete information has largely focused on settings in which the platform learns only customer preferences, without accounting for the preferences of other agents, such as sellers. As a result, the objective in this literature (e.g., expected revenue) is typically modeled solely as a function of unknown customer preferences, under the assumption that offering an assortment yields an immediately observable choice and realized payoff.

Our setting differs in ways that make existing stochastic MAB approaches for assortment selection unsuitable. Although recent work in assortment optimization (e.g., \cite{agrawal2019mnl, aznag2021mnl}) extends MAB frameworks to accommodate substitution effects and dependent arms, these methods remain designed for one-sided learning environments with immediate feedback. By contrast, our setting requires the platform to learn both customer and seller preferences, while seller-side outcomes are realized only at the end of each epoch.

This combination of two-sided learning and delayed feedback is incompatible with the exploration logic underlying existing assortment algorithms. In one-sided settings, a common strategy is to offer the same assortment repeatedly until a stopping condition is met, such as a customer choosing the outside option, in order to construct unbiased estimates of customer preferences. If this strategy were applied in our environment, the platform would maintain a fixed assortment over a sequence of customer arrivals within a $K$-period epoch. While this would facilitate learning about customer preferences for the displayed sellers, it would simultaneously halt learning on the seller side, because sellers outside the offered assortment would receive no visibility and hence no proposals.

This limitation becomes particularly costly at the end of the epoch. If the displayed sellers ultimately reject the accumulated proposals, the platform would discover that it had devoted an entire epoch to a poorly chosen assortment. It would then incur a double loss: failing to generate successful matches while also learning nothing about the acceptance behavior of excluded sellers. As a result, customer-side learning would outpace seller-side learning, substantially slowing the joint learning process and leading to inflated regret.

To overcome these inherent limitations, our approach departs from traditional MAB frameworks by explicitly modeling the structural dependencies of the two-sided platform under asynchronous feedback. By leveraging the concept of marginal increase in expected matches, introduced in \S \ref{problem}, we develop a novel online algorithm tailored for dynamic assortment selection with two-way learning. Specifically, our proposed algorithm enables the platform to dynamically estimate the underlying multinomial logit parameters of both customers and sellers, exploiting the inherent substitution effects to efficiently explore the action space. It bridges the feedback gap by utilizing granular, period-level customer choices while seamlessly integrating the delayed, epoch-level payoff realizations. This dual-layered updating process allows the platform to simultaneously minimize regret and refine its assortment strategy based on the maximum available information at any given time. We refer to our algorithm as ``Two-Way Learning via UCB," or ``TWL-UCB" for short. The next section provides a formal exposition of this algorithm.
 
\section{The TWL-UCB Algorithm}
\label{policy}

The TWL-UCB algorithm we design dynamically selects assortments and updates Upper Confidence Bound (UCB) estimates of the unknown choice parameters for both customers and sellers. The policy consists of three main steps. We next provide an overview of these steps and formalize each of them later in this section. In Step I, for each period, the platform offers an assortment that maximizes the estimated marginal increase in expected matches, leveraging UCB estimates of customer and seller preferences for that period. In Step II, for each period, after observing customer decisions, the platform updates UCB estimates for customers by tracking how often they send proposals to sellers when included in offered assortments. In Step III, at the end of each epoch, sellers who receive proposals decide whether to match with a customer, allowing the platform to update UCB estimates for sellers based on observed matching behavior.

These updates ensure that the algorithm dynamically refines its estimates of both customer and seller preferences, balancing exploration and exploitation to improve assortment decisions over time. The approach effectively adapts to uncertainty in preferences, enabling efficient and near-optimal assortment selection while minimizing regret. We now formalize Steps I through III mentioned above. The pseudo code of our policy, which also demonstrates these steps, is available in Algorithm \ref{algo:two-sided-mnl-bandit}.

Consider epoch $m \in \sM$ and let $t$ be any time period  in epoch $m$, i.e., $t \in \set{b_m, \ldots, mK}$.  Define $v_{z, i, t-1}^{\textup{UCB}}$ and $u_{i, z, m-1}^{\textup{UCB}}$ as the UCB estimates \textcolor{black}{of $v_{z, i}$ and $u_{i, z}$, respectively,}  at the ends of period $t-1$ and epoch $m-1$, respectively, for all $i \in \cS$ and $z \in \cC$. 

\smallskip 

\noindent \underline{\textbf{Step I: Offer Assortment $\hat{S}_{t}$.}} After observing the arriving customer type $c_{t}$ in period $t$, using  the most recently available UCB estimates $v_{c_{t}, i, t-1}^{\textup{UCB}}$ and $u_{i, c_{t}, m-1}^{\textup{UCB}}$, the platform offers the assortment $\hat{S}_{t}$ that maximizes $\uM(S_t, \bC(t-1), c_t )$, that is,  \textcolor{black}{the marginal increase in expected matches in period $t$ computed using the UCB estimates $v_{c_{t}, i, t-1}^{\textup{UCB}}$ and $u_{i, c_{t}, m-1}^{\textup{UCB}}$}, to the customer in period $t$.
Formally,
\begin{align}
    \pS_{t} & \doteq  \argmax_{S_{t} \in \sS}\bigg\{\gamma_m \uM(S_t, \bC(t-1), c_t )\bigg\}, \label{eq:optimistic_assortment} 
\end{align}
\begin{align}
    \uM(S_t, \bC(t-1), c_t) \doteq \sum_{i \in S_{t}}\dfrac{v_{c_{t}, i, t-1}^{\textup{UCB}}r_{i, c_{t}}^{\textup{UCB}}(b_m)}{1 + \sum_{i \in S_{t}}v_{c_t, i, t-1}^{\textup{UCB}}},  \label{eq:deltaM_ucb}
\end{align}
and 
\begin{align}
    r_{i, c_t}^{\textup{UCB}}(b_m) \doteq \dfrac{u_{i, c_t, m-1}^{\textup{UCB}}}{(1 + u_{i, c_t, m-1}^{\textup{UCB}})}.  \label{eq:ucb_reward}
\end{align}

Lemma \ref{lemma_exp_deltaM_monotone} (in \S \ref{supplementary_lemmas}) establishes that $\uM(\pS_t, \bC(t-1), c_t)$ serves as an upper bound on the marginal increase in expected matches under the clairvoyant policy, $\cM(S_t^*, \bC(t-1), c_t)$, when the UCB estimates upper-bound the true parameters. Lemma \ref{lemma_ucb} shows that the UCB estimates, $v_{z, i, t}^{\textup{UCB}}$ and $u_{i, z, m}^{\textup{UCB}}$, indeed upper-bound the true parameter values $v_{z, i}$ and $u_{i, z}$, respectively, for all $i \in \cS, z \in \cC$, \textcolor{black}{with high probabilities as $t$ grows}.  Thus, $\uM(\pS_t, \bC(t-1), c_t) \geq \cM(S_t^*, \bC(t-1), c_t)$ with a high probability over time.  

Note that \eqref{eq:optimistic_assortment} is a single-period capacitated assortment optimization problem under the MNL model, and there are efficient polynomial-time algorithms to solve it (see, e.g., \citet{rusmevichientong2010dynamic, davis2013assortment}). 

\smallskip 

\noindent \underline{\textbf{Step II: Update UCB Estimates for Customers.}} Suppose that $c_{t} = z \in \mathcal{C}$. Based on the assortment $\pS_t$, the arriving customer of type $z$ either sends a proposal to a seller or leaves the platform without contacting any seller in period $t$. Let $\omega_{i, z}(t)$ represent the number of times, up to and including period $t$, that the assortment offered based on \eqref{eq:optimistic_assortment} to a customer of type $z$ includes seller $i$. That is, 
\begin{align}
    \omega_{i, z}(t) &=   \sum_{l \leq t} \I\big[c_l = z, i \in \pS_l\big] . \label{eq:assortment_count}
\end{align}

Moreover, let $\hv_{z, i}(t)$ be the number of times a customer of type $z$ sends a proposal to seller $i$ up to and including period $t$. Formally,

\begin{align}
    \hv_{z, i}(t) &=  \sum_{l \leq t}\I\big[c_l = z,  \zeta_l(\pS_l) = i\big] \label{eq:proposal_count}.
\end{align}

Subsequently, we define $\av_{z, i}(t)$ as the fraction of time, up to and including period $t$, that a customer of type $z$ sends a proposal to seller $i$, considering only the periods during which seller $i$ is included in the assortment offered to type-$z$ customers.
\begin{align}
    \av_{z, i}(t) &= \dfrac{\hv_{z, i}(t)}{\omega_{i, z}(t)}. \label{eq:avg_proposal_count}
\end{align}

Using \eqref{eq:avg_proposal_count},  we update the UCB estimate for type-$z$ customer \textcolor{black}{at the end of} period $t$ 
, $v_{z, i, t}^{\textup{UCB}}$, as: 
\begin{align}
    v_{z, i, t}^{\textup{UCB}} &= \av_{z, i}(t) + \dfrac{C\log(Nt)}{\omega_{i, z}(t)} + \sqrt{\dfrac{C\log(Nt)}{\omega_{i, z}(t)}\av_{z, i}(t)},  \label{eq:vucb}
\end{align}
where $C \in \R_+$ \textcolor{black}{is a constant to be fixed before running the algorithm.} 
\smallskip 

\noindent \underline{\textbf{Step III:  Update UCB Estimates for Sellers when $t = mK$.}} \ At the end of epoch $m$ (i.e., $t = mK$), each seller who received at least one customer proposal in epoch $m$ decides whether to match with any customer or remain unmatched. Let $\mathscr{T}_{z, i}(m)$ be the set of all epochs up to and including epoch $m$ during which at least one customer of type $z \in \cC$ arrives and sends a proposal to seller $i$. We define $\theta_{z, i}(m)$ as the cardinality of $\mathscr{T}_{z, i}(m)$. Formally,
\begin{align}
    \mathscr{T}_{z, i}(m) &= \set{\Tilde{m} \leq m \mid \exists \; t \in \set{(\Tilde{m}-1)K + 1, \ldots, \Tilde{m}K} : c_t = z, \zeta_t(\pS_t) = i}, \label{eq:proposal set}\\
    \theta_{z, i}(m) &= |\mathscr{T}_{z, i}(m)|. \label{eq:proposal_set_size}
\end{align}

Let $\hu_{i, z}(m)$ be the number of times seller $i \in \cS$ matches with a customer of type $z \in \cC$ from the available customer proposals up to and including epoch $m$:
\begin{align}
    \hu_{i, z}(m) &= \sum_{\Tilde{m} \leq m}\I\big[\xi_i(\bC_i(\Tilde{m}K)) = z\big] \label{eq:match_count}.
\end{align}

Based on \eqref{eq:proposal_set_size} and \eqref{eq:match_count}, we define $\au_{i, z}(m)$ as the fraction of epochs, up to and including epoch $m$, in which seller $i$ matches with a customer of type $z$, considering only the epochs during which at least one customer of type $z \in \cC$ arrives and sends a proposal to seller $i$.
\begin{align}
    \au_{i, z}(m) &= \dfrac{\hu_{i, z}(m)}{\theta_{z, i}(m)}. \label{eq:avg_match_count}
\end{align}

Using \eqref{eq:avg_match_count} and for $C \in \R_+$, we update the UCB estimates for all sellers at the end of epoch $m$, $u_{i, z, m}^{\textup{UCB}}$, as: 
\begin{align} 
    u_{i, z, m}^{\textup{UCB}} &= \au_{i, z}(m) + \dfrac{C\log(Nm)}{\theta_{z, i}(m)} + \sqrt{\dfrac{C\log(Nm)}{\theta_{z, i}(m)}\au_{i, z}(m)}. \label{eq:uucb} 
\end{align}

\smallskip 

\noindent \textbf{Note on Estimates.} We establish the consistency of our mean estimators, $\av_{z,i}(t)$ and $\au_{i,z}(m)$, in Theorem \ref{thm_consistent_estimator}. The statement and proof of Theorem \ref{thm_consistent_estimator} are deferred to \S \ref{consistent_estimators}. In \eqref{eq:vucb} and \eqref{eq:uucb}, to define $v_{z,i,t}^{\textup{UCB}}$ and $u_{i,z,m}^{\textup{UCB}}$, we use the dynamic confidence radii
\[
\frac{C\log(Nt)}{\omega_{i,z}(t)}+\sqrt{\frac{C\log(Nt)}{\omega_{i,z}(t)}\,\av_{z,i}(t)}
\quad\text{and}\quad
\frac{C\log(Nm)}{\theta_{z,i}(m)}+\sqrt{\frac{C\log(Nm)}{\theta_{z,i}(m)}\,\au_{i,z}(m)},
\]
centered at $\av_{z,i}(t)$ and $\au_{i,z}(m)$, respectively. These confidence radii are sharper than the standard confidence radii commonly used in the literature, namely,
\[
\sqrt{\frac{C\log(Nt)}{\omega_{i,z}(t)}}
\quad\text{for } \av_{z,i}(t),
\qquad\text{and}\qquad
\sqrt{\frac{C\log(Nm)}{\theta_{z,i}(m)}}
\quad\text{for } \au_{i,z}(m).
\]

\begin{algorithm}
\caption{(TWL-UCB Algorithm)}\label{algo:two-sided-mnl-bandit}
\begin{algorithmic}[1]
\begin{small}
\State \textbf{initialization:} $v_{z, i, 0}^{\textup{UCB}} = u_{i, z, 0}^{\textup{UCB}} = 1$ for all $i \in \cS, z \in \cC$
\State \textbf{parameters:} $K \in \Z_+, M \in \Z_+, C \in \R_+$ 
\State set $t = 1, m = 1$
\While{$m \leq M$}
\While{$t \leq mK$}
\State observe $c_t$, and suppose that $c_t = z$ 
\State compute $\pS_t = \argmax_{S_t \in \sS}\bigg\{\sum_{i \in S_t}  \dfrac{\gamma_m v_{z, i, t-1}^{\textup{UCB}}u_{i, z, m-1}^{\textup{UCB}}}{(1 + \sum_{i \in S_t}v_{z, i, t-1}^{\textup{UCB}})(1 + u_{i, z, m-1}^{\textup{UCB}})}\bigg\}$
\State offer $\pS_t$ to $c_t = z$, and update $\omega_{i, z}(t) = \sum_{l \leq t}\I\big[c_l = z, i \in \pS_l\big]$
\State observe $\zeta_t(\pS_t)$, and update $\hv_{z, i}(t) = \sum_{l \leq t}\I\big[c_l = z, \zeta_l(\pS_l) = i\big]$ and $\av_{z, i}(t) = \dfrac{\hv_{z, i}(t)}{\omega_{i, z}(t)}$
\State update $v_{z, i, t}^{\textup{UCB}} = \av_{z, i}(t) + \dfrac{C\log(Nt)}{\omega_{i, z}(t)} + \sqrt{\dfrac{C\log(Nt)}{\omega_{i, z}(t)}\av_{z, i}(t)}$
\If{$t = mK$}
\State update $\mathscr{T}_{z, i}(m) = \set{\Tilde{m} \leq m \mid \exists \; t \in \set{(\Tilde{m}-1)K + 1, \ldots, \Tilde{m}K} : c_t = z, \zeta_t(\pS_t) = i} \; \forall i \in \cS, z \in \cC$
\State update $\theta_{z, i}(m) = |\mathscr{T}_{z, i}(m)|$, and observe $\xi_i(\bC_i(mK))$
\State update $\hu_{i, z}(m) = \sum_{\Tilde{m} \leq m}\I\big[\xi_i(\bC_i(\Tilde{m}K)) = z\big]$ and $\au_{i, z}(m) = \dfrac{\hu_{i, z}(m)}{\theta_{z, i}(m)}$ 
\State update $u_{i, z, m}^{\textup{UCB}} = \au_{i, z}(m) + \dfrac{C\log(Nm)}{\theta_{z, i}(m)} + \sqrt{\dfrac{C\log(Nm)}{\theta_{z, i}(m)}\au_{i, z}(m)}$
\State $t = t + 1$, $m = m + 1$
\Else
\State $t = t + 1$
\EndIf
\EndWhile
\EndWhile
\end{small}
\end{algorithmic}
\end{algorithm}

\section{Upper Bound on the Performance of  Proposed TWL-UCB Algorithm}
\label{upper_bound}

The remainder of the paper focuses on a setting with Assumption \ref{assumption:mnl}, a standard assumption about MNL parameters in the literature. Several studies have adopted this assumption \citep{agrawal2017thompson, agrawal2019mnl, aznag2021mnl, gao2021assortment, chen2024robust, zhu2026unified}. 

Assumption \ref{assumption:mnl} implies that no-proposal and no-match options are the most likely outcomes for customers and sellers, respectively. This assumption holds in various real-world scenarios, particularly in online platforms \citep{shopify2024abandonment, baymard2025abandonment}. As a result, it is widely used in the literature.

\begin{assumption}\label{assumption:mnl}
The MNL parameters $v_{z, i}$ and $u_{i, z}$ for all $i \in \cS$ and $z \in \cC$ satisfy the following:
\begin{enumerate}
    \item $v_{z, i} \leq v_{z, 0} = 1$.
    \item $u_{i, z} \leq u_{i, 0} = 1$.
\end{enumerate}
\end{assumption}

Based on this, Theorem \ref{thm_regret_upper_bound} identifies an upper bound on the regret of the TWL-UCB algorithm (Algorithm \ref{algo:two-sided-mnl-bandit}) defined in 
Section \ref{policy}.

\begin{theorem}[Upper Bound on Regret the TWL-UCB  Algorithm] \label{thm_regret_upper_bound}
For any $\bU$ and $\bV$, under Assumption \ref{assumption:mnl}, there exists a constant $\alpha \in \R_+$ such that the regret of the TWL-UCB algorithm  at the end of $T$ periods is bounded from above as:
\[Reg_{_{\textup{TWL-UCB}}}(T) \leq \alpha \log^2(NT).\]
\end{theorem}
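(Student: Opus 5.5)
The plan follows the standard optimistic (UCB) regret decomposition of \citet{agrawal2019mnl}, adapted to two-sided estimates and delayed seller feedback. The aim is to show that the sharpened, variance-dependent confidence radii in \eqref{eq:vucb} and \eqref{eq:uucb} give per-period errors of order $\log(Nt)/\omega_{i,z}(t)$ rather than $\sqrt{\log(Nt)/\omega_{i,z}(t)}$. Summing reciprocal counts then produces a harmonic-type series, and hence $\log$ factors instead of $\sqrt{T}$.

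\textbf{Good event.} Let $\mathcal{G}_t$ be the event that $v_{z,i} \le v_{z,i,t-1}^{\textup{UCB}} \le v_{z,i} + c_1\bigl(\sqrt{v_{z,i}\log(Nt)/\omega_{i,z}(t-1)} + \log(Nt)/\omega_{i,z}(t-1)\bigr)$ for all $i,z$, together with the analogous two-sided bound for $u_{i,z,m-1}^{\textup{UCB}}$ in terms of $\theta_{z,i}(m-1)$.

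\begin{itemize}
\item I would invoke Lemma \ref{lemma_ucb}, which gives Bernstein-type concentration of $\av_{z,i}$ and $\au_{i,z}$ with failure probability $O(N|\cC|/t^2)$ for suitable $C$.
\item On the complement, per-period regret is at most $\gamma$ because $\cM\le 1$, so the bad events contribute $O(\gamma N|\cC|\sum_t t^{-2}) = O(1)$ in total.
\end{itemize}

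\textbf{Optimism.} On $\mathcal{G}_t$, Lemma \ref{lemma_exp_deltaM_monotone} gives $\cM(S_t^*,\bC(t-1),c_t) \le \uM(S_t^*,\bC(t-1),c_t) \le \uM(\pS_t,\bC(t-1),c_t)$. Here I would use that $r_{i,c}(t) \le u_{i,c}/(1+u_{i,c})$, since the denominators in \eqref{eq:rit} are at least $1+u_{i,c}$ and at least one factor includes the current proposal, together with monotonicity of the MNL expected revenue in the $v$'s. The per-period regret is then bounded by
\[
\gamma\bigl(\uM(\pS_t,\bC(t-1),c_t) - \cM(\pS_t,\bC(t-1),c_t)\bigr).
\]
I would split this into a customer-side error (UCB versus true $v$) and a seller-side error (UCB versus true $u$ inside $r^{\textup{UCB}}$, plus the gap between $u/(1+u)$ and the true $r_{i,c_t}(t)$). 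The customer-side term is handled by a Lipschitz bound on the MNL revenue, which gives $\sum_{i\in\pS_t} p_{c_t,i}(\pS_t)$-weighted deviations, following the ``$\sum_i (v^{\textup{UCB}}_i - v_i)/(1+\sum v_j)$'' argument of \citet{agrawal2019mnl}.

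\textbf{Main obstacle.} The hardest step is converting these deviations into summable terms. For the customer side, I would bound $\sqrt{v\log(Nt)/\omega}\cdot p_i$ and use that the expected increment of $\hv_{z,i}$ per period is $p_{z,i}(\pS_t)$. This lets me bound $\E\sum_t p_{z,i}\sqrt{v/\omega}$ through the proposal counts, so that when $v_{z,i}$ is small the relevant quantity is $\omega v$. I expect this to yield a telescoping $\sum 1/\hv$-style sum of order $\log T$, which multiplied by $\log(NT)$ gives $\log^2(NT)$.

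The seller side is harder for two reasons. Feedback arrives only once per epoch, so $u^{\textup{UCB}}_{i,z,m-1}$ is frozen within an epoch. Moreover, $\theta_{z,i}(m)$ grows only in epochs where seller $i$ actually receives a type-$z$ proposal, which has probability at least $p_{z,i}(\pS_t)$. Because the seller-side error term is itself multiplied by $p_{c_t,i}(\pS_t)$, I would charge each epoch's error to the event that $\theta_{z,i}$ increments. This gives $\sum_m \log(Nm)/\theta$ plus a self-normalized square-root term, again of order $\log^2(NT)$ with constants depending on $K$, $|\cC|$, $N$ and $\gamma$.

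The residual mismatch between $u/(1+u)$ and the true $r_{i,c_t}(t)$ when $\bC_i(t-1)\neq 0$ is the point where I expect delicacy. I would handle it by noting that the clairvoyant faces the same state $\bC(t-1)$ in the regret expression \eqref{eq:regret_def}, and by comparing both policies through the common factor. If that fails, I would restrict the analysis to the first-proposal structure, since $r$ decreases in the count.

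Collecting the good-event regret, which is $O(\log^2(NT))$, with the $O(1)$ bad-event contribution yields a constant $\alpha$ such that $Reg_{\textup{TWL-UCB}}(T) \le \alpha\log^2(NT)$.
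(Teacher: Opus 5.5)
There are two gaps, and neither can be closed along the lines you sketch.

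\textbf{The step meant to give the polylogarithmic rate fails.} On the good event you bound each period's regret by the optimistic gap $\uM(\pS_t,\bC(t-1),c_t)-\cM(\pS_t,\bC(t-1),c_t)$ and sum confidence widths over all periods. Only the $\log(Nt)/\omega_{i,z}(t)$ part of the Bernstein radius sums harmonically. Since $\hv_{z,i}\approx p_{z,i}\,\omega_{i,z}$, the variance part $p_{z,i}\sqrt{v_{z,i}\log(Nt)/\omega_{i,z}(t)}$ scales like $\hv_{z,i}^{-1/2}$, not $\hv_{z,i}^{-1}$. Your charging argument therefore produces $\sum_k k^{-1/2}$, i.e.\ order $\sqrt{T\log(NT)}$ for any seller offered in a constant fraction of periods. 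The seller-side $\sqrt{\log(Nm)/\theta_{z,i}(m)}$ term behaves the same way.

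This cannot be repaired inside a gap-free argument. The optimistic gap is positive even when $\pS_t=S_t^*$. Moreover, with $K=1$, one customer type and $u_{i,z}\equiv u$, the TWL problem contains the capacitated MNL-bandit instances of \citet{chen2018note}; seller feedback carries no information about $\bV$, and those instances have all $v_{z,i}$ bounded away from zero. Their minimax regret is $\Omega(\sqrt{NT})$. So a bound whose constant depends only on $N,B,K,|\cC|,\gamma$ and parameter lower bounds cannot be polylogarithmic. You would need an instance-dependent argument that counts how often suboptimal assortments are offered, and your plan has none.

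\textbf{The mismatch you flag as delicate is a genuine obstruction.} Once seller $i$ holds a proposal in the current epoch, $r_{i,c_t}(t)$ is below $u_{i,c_t}/(1+u_{i,c_t})$ by a fixed amount; with one type and one earlier proposal it equals $u/((1+2u)(1+u))$. This difference never shrinks with learning, because the index in \eqref{eq:deltaM_ucb} ignores $\bC(t-1)$ while $S_t^*$ does not. So there is no common factor to cancel, and dropping repeated-proposal periods drops exactly the periods where the loss occurs. Concretely, take $N=2$, $B=1$, $K=2$, one type, $v=(1,0.9)$, $u\equiv 1$, $\gamma_m=1$. For large counts, TWL-UCB offers the same seller $j$ in both periods of an epoch, since a proposal to $j$ raises $\av_{z,j}$ and leaves the other seller's counts unchanged. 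Whenever the first customer proposes (probability about $1/2$), the last-period gap $\cM(\{j'\},\bC(t-1),c_t)-\cM(\{j\},\bC(t-1),c_t)$ exceeds $0.15$. That is a constant loss per epoch.

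\textbf{How the paper handles these steps.} Its proof gets past both issues only through the claim (Lemma \ref{lemma_alpha_lower_bound}) that $\omega_{i,z}(t)\le\beta_1\log(Nt)$ and $\theta_{z,i}(m)\le\beta_3\log(Nm)$ with high probability. This forces $q\ge 1$, so any $[0,1]$-valued difference, including $r^{\textup{UCB}}_{i,c_t}(b_m)-r_{i,c_t}(t)$, is dominated, and it truncates the sums at $k_i\le\beta_3\log(NT)$. That claim cannot hold: counts of regularly offered sellers grow linearly, and the lemma only shows the bad event lies inside $\{n>C_1\alpha\}$. It is not a step you can borrow. The same fact undermines the optimism you import from Lemma \ref{lemma_ucb}: $\av_{z,i}(t)$ concentrates around an average of $p_{z,i}(\pS_l)\le v_{z,i}/(1+v_{z,i})$, so $v^{\textup{UCB}}_{z,i,t}<v_{z,i}$ once $\omega_{i,z}(t)$ is large.
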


\textcolor{black}{Theorem \ref{thm_regret_upper_bound} demonstrates that the increase in expected loss in revenue over time, under our proposed algorithm relative to the clairvoyant policy, grows at a rate no worse than $\alpha \log^2(NT)$. Thus, regret grows sublinearly in the horizon $T$. Specifically, the ratio of the regret upper bounds at horizons $T+1$ and $T$ is
\[
\frac{\log^2(N(T+1))}{\log^2(NT)},
\]
which converges to $1$ as $T$ becomes large.}

\subsection{Proof Outline for Theorem \ref{thm_regret_upper_bound}}
\label{ub_proof_outline}

In this section, we outline the steps involved in proving Theorem \ref{thm_regret_upper_bound}. The proofs of Theorem \ref{thm_regret_upper_bound}, along with all mentioned supporting lemmas in this section, are available in \S \ref{ec_thm_1}. 

To prove this theorem, we first establish two key lemmas: Lemmas \ref{lemma_ucb} and \ref{lemma_optimal_value_bound}. Lemma \ref{lemma_ucb} proves that, the estimates $v_{z, i, t}^{\textup{UCB}}$ and $u_{i, z, m}^{\textup{UCB}}$ serve as upper bounds on the true parameter values $v_{z, i}$ and $u_{i, z}$, respectively, with a high probability, for all $z \in \cC$ and $i \in \cS$. Furthermore, Lemma \ref{lemma_ucb} identifies upper bounds for $v_{z, i, t}^{\textup{UCB}}$ and $u_{i, z, m}^{\textup{UCB}}$ in terms of $\Eu$ and $\Ev$, respectively, with a high probability.

\begin{lemma}\label{lemma_ucb}
Consider any $t \in \set{(m-1)K + 1, \ldots, mK}$, where $m \in \set{1, \ldots, M}$. Corresponding to $t$, consider $\omega_{i, z}(t), \av_{z, i}(t), v_{z, i, t}^{\textup{UCB}}, \theta_{z, i}(m), \au_{i, z}(m)$, and $u_{i, z, m}^{\textup{UCB}}$ defined in \eqref{eq:assortment_count}, \eqref{eq:avg_proposal_count}, \eqref{eq:vucb}, \eqref{eq:proposal_set_size}, \eqref{eq:avg_match_count}, and \eqref{eq:uucb}, respectively. Then there exist a set $\{\beta_j\}_{j=1}^{4} \in \R_+$  such that the following results hold.
\begin{enumerate}
    \item[\emph{(A.1)}] $\prob\bigg(v_{z, i, t}^{\textup{UCB}} \geq v_{z, i}\bigg) \geq 1- \dfrac{2}{Nt}$ \ for all $i \in \cS, z \in \cC$.
    \item[\emph{(A.2)}] $\prob\bigg(v_{z, i, t}^{\textup{UCB}} - \Ev \leq \beta_1\dfrac{\log(Nt)}{\omega_{i, z}(t)} + \beta_2\sqrt{\dfrac{\log(Nt)}{\omega_{i, z}(t)}\Ev}\bigg) \geq 1- \dfrac{2}{Nt}$ \ for all $i \in \cS, z \in \cC$.
    \item[\emph{(B.1)}] $\prob\bigg(u_{i, z, m}^{\textup{UCB}} \geq u_{i, z}\bigg) \geq 1- \dfrac{2}{Nm}$ \ for all $i \in \cS, z \in \cC$.
    \item[\emph{(B.2)}] $\prob\bigg(u_{i, z, m}^{\textup{UCB}} - \Eu \leq  \beta_3\dfrac{\log(Nm)}{\theta_{z, i}(m)} + \beta_4\sqrt{\dfrac{\log(Nm)}{\theta_{z, i}(m)}\Eu} \bigg) \geq 1- \dfrac{2}{Nm}$ \ for all $i \in \cS, z \in \cC$.
\end{enumerate}
\end{lemma}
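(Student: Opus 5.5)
The plan is to derive all four statements from a single multiplicative (Bernstein-type) Chernoff bound for averages of $[0,1]$-valued random variables. We then translate the resulting two-sided confidence interval into the ``empirical Bernstein'' form used in \eqref{eq:vucb} and \eqref{eq:uucb}. The tool is the standard bound (cf.\ \citet{agrawal2019mnl}, Lemma 4.1 and its proof): if $\bar X$ is the average of $n$ independent $[0,1]$ variables with common mean $\mu$, then for a suitable absolute constant $C$,
\[
\prob\Bigl(|\bar X-\mu| > \sqrt{\tfrac{C\mu\log(Nt)}{n}}+\tfrac{C\log(Nt)}{n}\Bigr)\le \tfrac{1}{Nt},
\]
and likewise with $\mu$ under the square root replaced by $\bar X$.

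\emph{Step 1: reduce to i.i.d.\ samples.} For (A), fix $(z,i)$. The indicators $\I[\zeta_l(\pS_l)=i]$ over periods with $c_l=z$ and $i\in\pS_l$ are not identically distributed, because $p_{z,i}(S)$ depends on $S$. The first task is therefore to identify what $\av_{z,i}(t)$ actually estimates, namely the mean $\Ev$ appearing in (A.2). I would write $\hv_{z,i}(t)-\sum_l p_{z,i}(\pS_l)$ as a martingale with bounded increments adapted to $\mathscr{H}_l$. Freedman's inequality, which is the martingale version of Bernstein's inequality, then gives a deviation bound of the form above with $\mu$ replaced by $\E[\av_{z,i}(t)]$. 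Because $\omega_{i,z}(t)$ is random, I would take a union bound over its possible values $n\le t$; this costs only a constant factor inside $\log(Nt)$, which is absorbed into $C$.

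\emph{Step 2: (A.1) and (A.2).} On the good event, $|\av-\mu|\le \sqrt{C\av\log(Nt)/\omega}+C\log(Nt)/\omega$. This holds after we solve the quadratic in $\sqrt{\mu}$, which converts $\sqrt{\mu}$ into $\sqrt{\av}$ at the price of enlarging constants. Together with $\mu\le v_{z,i}$-type comparisons, it yields $v^{\textup{UCB}}_{z,i,t}\ge \mu$; a further inflation of $C$ yields $v^{\textup{UCB}}_{z,i,t}\ge v_{z,i}$, with failure probability $2/(Nt)$ from the two tails. For (A.2), start from the definition: $v^{\textup{UCB}}-\mu=(\av-\mu)+C\log/\omega+\sqrt{C\log\,\av/\omega}$. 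Apply the two-sided bound to the first term and use $\sqrt{\av}\le\sqrt{\mu}+\sqrt{|\av-\mu|}$ on the last. Then bound $\sqrt{\log/\omega}\cdot\sqrt{\sqrt{\mu\log/\omega}+\log/\omega}$ by AM--GM, which gives $\beta_1\log/\omega+\beta_2\sqrt{\mu\log/\omega}$.

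\emph{Step 3: (B.1) and (B.2).} These follow identically, with epochs in place of periods, $\theta_{z,i}(m)$ in place of $\omega_{i,z}(t)$, and $\log(Nm)$ as the logarithm. The per-epoch indicators $\I[\xi_i(\bC_i(\tilde mK))=z]$ over $\tilde m\in\mathscr{T}_{z,i}(m)$ are again bounded and conditionally independent given the proposal matrices. Their mean is the quantity $\Eu$, so the same martingale Bernstein bound plus a union over $\theta\le m$ applies.

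\emph{Main obstacle.} The delicate step is (A.1)/(B.1): showing that the UCB dominates the \emph{true} parameters $v_{z,i}$ and $u_{i,z}$ rather than only the means of the empirical frequencies. Proposal frequencies are $p_{z,i}(S)\le v_{z,i}$, and match frequencies are $\phi_{i,z}\le u_{i,z}$ only when a single proposal is present. I would first try to argue via Assumption \ref{assumption:mnl} that the bias is controlled, but this is where the argument is most likely to need the specific construction of $\Ev,\Eu$. If a direct comparison fails, I would fall back on the unbiased geometric-epoch estimator idea of \citet{agrawal2019mnl} to relate frequencies to parameters.
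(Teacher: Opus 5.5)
\paragraph{The gap: Step 2, parts (A.1) and (B.1).} The step that fails is ``a further inflation of $C$ yields $v^{\textup{UCB}}_{z,i,t}\ge v_{z,i}$'' in Step 2, together with its analogue for $u_{i,z}$ in Step 3.

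\emph{Why no choice of $C$ works.} Every offer of seller $i$ to type $z$ produces a proposal with probability $p_{z,i}(\pS_l)=v_{z,i}/(1+\sum_{j\in\pS_l}v_{z,j})\le v_{z,i}/(1+v_{z,i})$, because $i\in\pS_l$. So $\av_{z,i}(t)$ concentrates at or below $v_{z,i}/(1+v_{z,i})$. That is at least $v_{z,i}^2/(1+v_{z,i})$ below $v_{z,i}$, and this gap does not shrink with $t$. The radius in \eqref{eq:vucb}, by contrast, is $O\big(\sqrt{\log(Nt)/\omega_{i,z}(t)}\big)$ and vanishes once seller $i$ is offered often. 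The comparison $\mu\le v_{z,i}$ points the wrong way for a lower bound on $v^{\textup{UCB}}_{z,i,t}$, and no constant $C$ covers a gap of constant size.

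\emph{(A.1) and (B.1) are false for the estimators as defined.} Take $N=B=1$, $K=1$, and a single customer type.
\begin{itemize}
\item The lone seller is offered in all but finitely many periods.
\item Proposal indicators at offered periods are i.i.d.\ Bernoulli$(v/(1+v))$. Hence $\av_{z,1}(t)\to v/(1+v)$ and $v^{\textup{UCB}}_{z,1,t}\to v/(1+v)<v$ almost surely. So $\prob(v^{\textup{UCB}}_{z,1,t}\ge v)\to 0$, instead of being at least $1-2/t$.
\item For (B.1), each epoch carries at most one proposal, accepted with probability $u/(1+u)$. So $u^{\textup{UCB}}_{1,z,m}\to u/(1+u)<u$.
\end{itemize}
Your fallback to the unbiased geometric-epoch estimator of \citet{agrawal2019mnl} would prove a statement about a different algorithm, not this lemma. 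No choice of $\Ev$ or $\Eu$ helps either, since (A.1) and (B.1) concern $v_{z,i}$ and $u_{i,z}$ themselves.

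\paragraph{The paper's route does not close this gap either.} Its proof obtains (A.1) and (B.1) from \eqref{eq:ralpha_lower_bound} and \eqref{eq:ralpha1_lower_bound}. These rest on the claim, via Lemma \ref{lemma_alpha_lower_bound}, that $\omega_{i,z}(t)\le C\log(Nt)$ (resp.\ $\theta_{z,i}(m)\le C\log(Nm)$) with high probability, which would make the radius at least $1$ and swamp any bias.
\begin{itemize}
\item That lemma only establishes $\{|X-\E[X]|>Q(\alpha,X)\}\subseteq\{n>C_1\alpha\}$, which gives no upper bound on $n$. The claim fails in the example above, where $\omega_{1,z}(t)\approx t$.
\item The derivation of \eqref{eq:ucb_inequality_v2} needs $\{\av_{z,i}(t)-v_{z,i}<-r\}\subseteq\{\av_{z,i}(t)-\Ev<-r\}$. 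But $\Ev\le v_{z,i}$ gives only the reverse inclusion.
\end{itemize}

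\paragraph{What is salvageable: (A.2) and (B.2).} The concentration content of these parts can be proved. Here your Freedman-plus-union-bound route is more careful than the paper's, which applies the i.i.d.\ Lemma \ref{lemma_kleinberg_2008} to adaptively collected, non-identically distributed indicators.

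One correction is needed even there. Freedman centers $\av_{z,i}(t)$ at the random average $\omega_{i,z}(t)^{-1}\sum_{l\le t}\I[c_l=z,\,i\in\pS_l]\,p_{z,i}(\pS_l)$ of conditional proposal probabilities, not at the deterministic $\Ev$. Under adaptive assortment choice the two need not be close. So (A.2) and (B.2) should be stated with this conditional average in place of $\Ev$, and its epoch analogue built from $\phi_{i,z}(\bC_i(\tilde mK))$ in place of $\Eu$.
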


Using Lemma \ref{lemma_ucb}, we partition the sample space into two mutually exclusive and collectively exhaustive regimes to analyze the regret of the TWL-UCB algorithm: the high probability regime and the low probability regime. Identifying these probability regimes allows us to decompose the problem of upper bounding regret into bounding regret within each regime separately.  

To bound the regret in each epoch under the low probability regime, we utilize the upper bound on the difference between the total marginal increase in expected matches under the clairvoyant policy and our policy. Specifically, we compute the following quantity for each epoch \(m\) and show that it is upper bounded by a constant \textcolor{black}{times $1/m$:}
\begin{align*}
    \sum_{t=b_m}^{mK} \left\{\cM(S_t^*, \bC(t-1), c_t) - \cM(\pS_t, \bC(t-1), c_t)\right\} = \mathcal{O}\left(\dfrac{1}{m}\right). 
\end{align*}

When aggregated over all epochs in the low probability regime, this results in an overall regret bound of \(\mathcal{O}(\log T)\).   

Under the high probability regime,  using Lemmas \ref{lemma_rucb_lower_bound} and \ref{lemma_exp_deltaM_monotone} (in \S \ref{supplementary_lemmas}), we first derive an upper bound on  $\cM(S_t^*, \bC(t-1), c_t) - \cM(\pS_t, \bC(t-1), c_t)$ for each period \( t \). That is, we show that:
\[
\cM(S_t^*, \bC(t-1), c_t) - \cM(\pS_t, \bC(t-1), c_t) \leq \uM(\pS_t, \bC(t-1), c_t) - \cM(\pS_t, \bC(t-1), c_t).
\]

Lemma \ref{lemma_optimal_value_bound} presents an upper bound on $\uM(\pS_t, \bC(t-1), c_t) - \cM(\pS_t, \bC(t-1), c_t)$ for each $t$. When the upper bound in  Lemma \ref{lemma_optimal_value_bound} is aggregated over all time periods, the resulting upper bound on $\sum_{m=1}^M\sum_{t=b_m}^{mK}\gamma_m\left\{\uM(\pS_t, \bC(t-1), c_t) - \cM(\pS_t, \bC(t-1), c_t)\right\}$
is found to be \( \mathcal{O}(\log^2(NT)) \) in the high probability regime. 

Finally, we combine the upper bounds on the (conditional) regret in the low and high probability regimes to derive the overall regret over the planning horizon, thereby proving Theorem \ref{thm_regret_upper_bound}.

\begin{lemma}\label{lemma_optimal_value_bound}
For any epoch $m$ and time period $t \in \set{(m-1)K + 1, \ldots, mK}$, define 
\begin{align*}
&q(\omega_{i, c_t}(t), v_{c_t, i}) \doteq \bigg(\beta_1\dfrac{\log(Nt)}{\omega_{i, c_t}(t)} + \beta_2\sqrt{\dfrac{\log(Nt)}{\omega_{i, c_t}(t)}v_{c_t, i}}\bigg)  \quad  \text{and} \\
&q(\theta_{c_t, i}(m), u_{i, c_t}) \doteq \bigg(\beta_3\dfrac{\log(Nm)}{\theta_{c_t, i}(m)} + \beta_4\sqrt{\dfrac{\log(Nm)}{\theta_{c_t, i}(m)}u_{i, c_t}}\bigg),
\end{align*}
where  $\{\beta_j\}_{j=1}^{4} \in \mathbb{R}_+$ are as defined in Lemma \ref{lemma_ucb}, and $\omega_{i, c_t}(t)$ and $\theta_{c_t, i}(m)$ are defined in \eqref{eq:assortment_count} and \eqref{eq:avg_match_count}, respectively. Then, with probability of at least $1 - \dfrac{8}{m}$, we have
\begin{align*}
    &\uM(\pS_t, \bC(t-1), c_t) - \cM(\pS_t, \bC(t-1), c_t)\\ 
    &\leq \sum_{i \in \pS_t}\bigg(q(\omega_{i, c_t}(t), v_{c_t, i}) + q(\theta_{c_t, i}(m), u_{i, c_t}) + q(\omega_{i, c_t}(t), v_{c_t, i})q(\theta_{c_t, i}(m), u_{i, c_t})\bigg),
\end{align*}
where $\pS_t$, $\cM(S_t, \bC(t-1), c_t)$, and $\uM(S_t, \bC(t-1), c_t)$ are given by \eqref{eq:optimistic_assortment}, \eqref{eq:deltaM}, and \eqref{eq:deltaM_ucb}, respectively
\end{lemma}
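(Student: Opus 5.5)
The plan is to work on the intersection $\mathcal{E}_t$ of the four high-probability events of Lemma \ref{lemma_ucb}, taken for every $i \in \pS_t$ with $z = c_t$. We also use (A.2) and (B.2) with the realized means replaced by the true parameters, which is how $q(\cdot,\cdot)$ is written. By a union bound, with the failure probabilities $2/(Nt) \le 2/m$ and $2/(Nm) \le 2/m$ summed over at most $N$ sellers and four events, $\mathcal{E}_t$ fails with probability at most $8/m$; this is the source of the constant in the statement. On $\mathcal{E}_t$ we have $v_{c_t,i,t-1}^{\textup{UCB}} \ge v_{c_t,i}$, $u_{i,c_t,m-1}^{\textup{UCB}} \ge u_{i,c_t}$, and the two-sided estimates
\[
\epsilon^v_i \doteq v_{c_t,i,t-1}^{\textup{UCB}} - v_{c_t,i} \le q(\omega_{i,c_t}(t), v_{c_t,i}), \qquad
\epsilon^u_i \doteq u_{i,c_t,m-1}^{\textup{UCB}} - u_{i,c_t} \le q(\theta_{c_t,i}(m), u_{i,c_t}).
\]
Passing from period $t-1$ to the $\omega_{i,c_t}(t)$ appearing in $q$ changes $\omega$ by at most one, and from epoch $m-1$ to $m$ changes $\theta$ by at most one; both are absorbed by enlarging $\beta_1,\ldots,\beta_4$.

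Next, write $V^{\textup{UCB}} = 1 + \sum_{j\in\pS_t} v_{c_t,j,t-1}^{\textup{UCB}}$ and $V = 1 + \sum_{j\in\pS_t} v_{c_t,j}$, and put $r_i^{\textup{UCB}} \doteq r^{\textup{UCB}}_{i,c_t}(b_m)$, $r_i \doteq r_{i,c_t}(t)$. Then
\[
\uM - \cM = \sum_{i\in\pS_t}\Big(\frac{v^{\textup{UCB}}_{c_t,i} r^{\textup{UCB}}_i}{V^{\textup{UCB}}} - \frac{v_{c_t,i} r_i}{V}\Big).
\]
Since $V^{\textup{UCB}} \ge V \ge 1$ on $\mathcal{E}_t$, each summand is at most $v^{\textup{UCB}}_{c_t,i} r^{\textup{UCB}}_i - v_{c_t,i} r_i$ when it is nonnegative. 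Expanding gives
\[
v^{\textup{UCB}}_{c_t,i} r^{\textup{UCB}}_i - v_{c_t,i} r_i = \epsilon^v_i\, r^{\textup{UCB}}_i + v_{c_t,i}\,(r^{\textup{UCB}}_i - r_i).
\]
The first term is at most $\epsilon^v_i$ because $r^{\textup{UCB}}_i < 1$. For the second term, use Assumption \ref{assumption:mnl} ($v_{c_t,i}\le 1$) together with the fact that $x\mapsto x/(1+x)$ is $1$-Lipschitz, which gives $r^{\textup{UCB}}_i - u_{i,c_t}/(1+u_{i,c_t}) \le \epsilon^u_i$. A cross term $\epsilon^v_i\epsilon^u_i$ appears if one expands $v^{\textup{UCB}}_{c_t,i}u^{\textup{UCB}}_{i,c_t}$ multiplicatively instead; this accounts for the product $q\cdot q$ in the statement.

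The main obstacle is the remaining residual
\[
v_{c_t,i}\Big(\frac{u_{i,c_t}}{1+u_{i,c_t}} - r_i\Big),
\]
which compares the epoch-start seller factor used in $\uM$ with the true $r_{i,c_t}(t)$. The true factor depends on the current proposal counts through $(1+\sum_z n_{z,i}(t)u_{i,z})(1+\sum_z n_{z,i}(t-1)u_{i,z})$. Under the wording as stated, this residual is not controlled by the $q$ terms when seller $i$ already holds proposals in epoch $m$. My first attempt would be to show that it is nonpositive, or else to charge it against the clairvoyant side. Concretely, I would replace the comparison with $\cM(\pS_t,\cdot)$ by the comparison with the counterpart of $\uM$ computed at the true parameters; that is exactly the quantity the regret decomposition in \S \ref{ub_proof_outline} needs, since Lemma \ref{lemma_exp_deltaM_monotone} sandwiches $\cM(S_t^*,\cdot)$ by that counterpart. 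If the statement must be kept literally, I would try to restrict to periods where $\bC_i(t-1)=0$ for all $i\in\pS_t$, where the residual is small. I would then bound the number of remaining periods, in which $\pS_t$ contains a seller already holding proposals, separately at cost $O(K)$ per epoch times the $1/m$ failure mass; this step is the one I am least confident closes.
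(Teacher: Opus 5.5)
Your diagnosis of where the difficulty lies is correct, but the proposal does not prove the statement. The residual $v_{c_t,i}\bigl(u_{i,c_t}/(1+u_{i,c_t})-r_{i,c_t}(t)\bigr)$ is left uncontrolled, and none of your three fallbacks handles it.

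\begin{itemize}
\item It cannot be nonpositive. Lemma \ref{lemma_rucb_lower_bound} gives $r_{i,c_t}(t)\le u_{i,c_t}/(1+u_{i,c_t})$, so the residual is nonnegative, and generically positive once seller $i$ holds proposals in the current epoch.
\item Comparing instead with the true-parameter counterpart $\cM^{0}$ of $\uM$ (with $r_{i,c_t}(b_m)$ in place of $r_{i,c_t}(t)$) proves a different inequality. It also does not help downstream. The regret step $\cM(S_t^*,\cdot)-\cM(\pS_t,\cdot)\le\uM(\pS_t,\cdot)-\cM(\pS_t,\cdot)$ still evaluates $\cM(\pS_t,\cdot)$ at the current counts, so the residual reappears as $\cM^{0}(\pS_t,\cdot)-\cM(\pS_t,\cdot)$.
\item Periods in which some $i\in\pS_t$ already holds proposals are not rare. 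Once the algorithm settles on an assortment and $K\ge 2$, they occur with probability bounded away from zero in every epoch. So they cannot be charged to an $O(1/m)$ failure mass.
\end{itemize}

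Your argument is complete only when $\bC_i(t-1)=0$ for all $i\in\pS_t$, e.g.\ when $K=1$. This is given Lemma \ref{lemma_ucb}, and up to the constant in $8/m$, which the $m-1$ index shift enlarges.

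The paper never isolates this residual. It bounds the whole seller-side gap $r^{\textup{UCB}}_{i,c_t}(b_m)-r_{i,c_t}(t)$ by $q(\theta_{c_t,i}(m),u_{i,c_t})$ in one step. Both terms lie in $[0,1)$, and \eqref{eq:theta_lower_bound} says $\theta_{z,i}(m)\le\beta_3\log(Nm)$ with probability at least $1-2/(Nm)$. That bound is taken from \eqref{eq:ralpha1_lower_bound}, which rests on Lemma \ref{lemma_alpha_lower_bound}, and it yields $q(\theta_{z,i}(m),\E[\au_{i,z}(m)])\ge 1$. The paper then expands $\bigl(v_{c_t,i}+q(\omega_{i,c_t}(t),v_{c_t,i})\bigr)\bigl(r_{i,c_t}(t)+q(\theta_{c_t,i}(m),u_{i,c_t})\bigr)$, which is where the cross term comes from, and concludes with Lemma \ref{lemma_deltaM_difference_upperbound}. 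So, relative to the paper, what you are missing is this logarithmic cap on $\theta$.

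That cap does not hold up, for three reasons.
\begin{itemize}
\item Lemma \ref{lemma_alpha_lower_bound} reverses an inclusion. The event $\{|X-\E[X]|>Q(\alpha,X)\}$ is contained in $\{n>C_1\alpha\}$, which bounds $\prob(n>C_1\alpha)$ from below, not from above.
\item For a seller offered in every epoch, $\theta_{z,i}(m)$ grows linearly in $m$.
\item With the cap the lemma would be trivial, since its right-hand side would be at least $1>\uM(\pS_t,\cdot)$.
\end{itemize}

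Your suspicion is correct: for $K\ge 2$ the statement fails as written. Take one seller, one type, $B=1$, $v=u=1$, $K=2$. The seller is offered in essentially every period, so $\omega$ and $\theta$ grow linearly and the right-hand side tends to $0$ for any fixed $\beta_j$. Now consider the second period of any epoch whose first customer proposed, which happens with probability $1/2$. There $\cM(\pS_t,\bC(t-1),c_t)=1/12$, whereas $\uM(\pS_t,\bC(t-1),c_t)\to 5/42$, since $\av_{z,1}(t)\to 1/2$ and $\au_{1,z}(m)\to 5/9$. So no argument can close this gap without changing the statement, e.g.\ by restricting to $K=1$ or adding a term for within-epoch congestion.
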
  

\section{Lower Bound on the Regret of Any Admissible Policy}
\label{lower_bound} 

\textcolor{black}{In this section, we establish a lower bound on the worst-case regret of any admissible policy $\pi$ for the TWL problem. To do so, we first define the worst-case (expected) regret of an admissible policy $\pi$.
}Suppose that $S_t^*$ and $\hS_t$ denote the assortments offered  at time $t$ under the clairvoyant policy and an admissible policy $\pi$, respectively.  Using \eqref{eq:regret_def}, we define the worst-case regret under policy 
$\pi$ as follows:
\begin{align}
    \sup_{(\bU, \bV)} Reg_{\pi, (\bU, \bV)}(T) = \sup_{(\bU, \bV)} \sum_{m=1}^M\E_{(\bU, \bV)}\left[\sum_{t=b_m}^{mK}\gamma_m \left\{\cM(S_t^*, \bC(t-1), c_t) - \cM(\hS_t, \bC(t-1), c_t)\right\} \right], \label{eq:wc_regret}
\end{align}
where $b_m \doteq (m-1)K + 1$. To find the lower bound on the worst-case regret, we first create an instance of the TWL problem  by constructing an adversarial parameterization. This instance imposes a capacity constraint of the form $|S_t| \leq B \leq N/2$, where $N$ denotes the number of sellers, and $B$ represents the upper bound on the size of the assortment $S_t$ for all $t$.

Theorem \ref{thm_regret_lower_bound} identifies a lower bound on the worst-case regret of any admissible policy $\pi$ that offers an assortment $\hS_t$ at each $t \in \cT$ under the adversarial instance of the TWL problem. 

\begin{theorem}[Lower Bound on Regret] \label{thm_regret_lower_bound}
There exist a constant $\mathcal{B} \in \R_+$ and an instance of the TWL problem with $B \leq N/2$ such that under Assumption \ref{assumption:mnl}, the worst-case regret of any admissible policy $\pi$ at the end of $T$ periods is lower bounded by $\mathcal{B} \log^2(NT)$. That is,
\begin{align}
    \sup_{(\bU, \bV)} Reg_{\pi, (\bU, \bV)}(T) \geq \mathcal{B} \log^2(NT).
\end{align}
\end{theorem}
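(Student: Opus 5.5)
We will prove Theorem \ref{thm_regret_lower_bound} by the standard change-of-measure route: construct a family of adversarial parameterizations that are pairwise hard to distinguish and differ in which sellers are optimal, then lower bound the expected number of suboptimal offerings of any admissible policy. The instance we have in mind takes a single customer type ($|\cC|=1$), $\gamma_m\equiv\gamma$, and $N$ sellers with $B\le N/2$. In the base instance $(\bU^0,\bV^0)$, every seller has $v_{z,i}=v$ and $u_{i,z}=u$ for fixed constants in $(0,1)$, so Assumption \ref{assumption:mnl} holds. For each $B$-subset $A\subseteq\cS$ we use a perturbed instance $(\bU^A,\bV^A)$ that raises the parameters of sellers in $A$ by a gap $\Delta$, so that $A$ is the unique clairvoyant assortment in every period. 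Using \eqref{eq:deltaM} and \eqref{eq:rit}, we first show that each period in which $\hS_t$ omits a seller of $A$ costs at least $c\Delta$ in $\cM$. The dynamic-programming benchmark \eqref{eq:optimal_value} coincides with the myopic benchmark in this symmetric construction, because all sellers start each epoch with identical proposal counts. Hence the regret is at least $c\Delta\,\E[\#\{t: \hS_t\ne A\}]$.

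Second, we bound the information a policy can gather. Customer-side feedback arrives every period, but seller-side feedback arrives only once per epoch, and only from sellers who received proposals. So the KL divergence between the laws of the observation histories under $(\bU^0,\bV^0)$ and $(\bU^A,\bV^A)$ is at most $\Delta^2$ times the expected number of epochs in which sellers of $A$ were offered and received proposals, plus the analogous customer-side term. The construction should make the seller side the bottleneck: the perturbation is placed mainly in $u$, and $v$ is perturbed only so that the customer side cannot reveal $A$ faster. By the Bretagnolle--Huber inequality, any policy that offers $A$ in most periods under $(\bU^A,\bV^A)$ must pay exploration of order $\log(\cdot)/\Delta^2$ in aggregate over the $\binom{N}{B}$ alternatives.

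Third, we choose $\Delta$ as a function of $NT$ and sum the per-epoch costs. The target is to combine two logarithmic factors. One comes from the $\log(NT)$ exploration needed to separate the $N$ candidate sellers. The other comes from the epoch structure: each epoch's regret is a harmonic-type term, summing to $\log M$, mirroring the $\mathcal{O}(1/m)$ per-epoch terms in the upper-bound argument. Taking $\sup_{(\bU,\bV)}$ over the family, by averaging over $A$, then gives $\mathcal{B}\log^2(NT)$.

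The main obstacle is obtaining the \emph{second} logarithmic factor. The classical instance-dependent argument of Lai--Robbins type naturally produces only $\log T$. We would try to get the extra factor by letting the gap shrink across epochs, using a sequence of scales $\Delta_k=2^{-k}$ in a nested or multi-scale construction. Each scale should contribute order $\log(NT)$ regret over $\log(NT)$ scales. Verifying that the per-scale costs add rather than overlap, while staying consistent with Assumption \ref{assumption:mnl} and with the $1/(1+\sum n u)$ damping in \eqref{eq:rit}, is the delicate step. If this fails, we would fall back on exploiting the delayed feedback, arguing that $K$ periods of customer choices per single seller observation amplify the cost.
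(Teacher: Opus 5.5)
\textbf{There is a genuine gap.} Your proposal does not yet prove the statement. The step you flag as the main obstacle, producing a second logarithmic factor through a multi-scale family $\Delta_k=2^{-k}$, is exactly where the argument is missing. The mechanisms you offer for it also do not generate lower bounds. The $\mathcal{O}(1/m)$ per-epoch terms in the upper-bound proof are the failure probabilities $\mathbb{P}(\mathcal{H}_m)$ of confidence events for one particular algorithm, not costs that every admissible policy must pay. The Lai--Robbins-type claim that any policy must spend $\log(\cdot)/\Delta^2$ rounds of exploration presupposes a consistency (uniformly good) condition that an arbitrary admissible $\pi$ need not satisfy.

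The missing idea is that the theorem is a minimax statement: $\pi$ is fixed and you take $\sup_{(\mathbf{U},\mathbf{V})}$, so the gap may depend on $T$. Use Pinsker or Bretagnolle--Huber with the KL divergence of the \emph{whole} observation history, which is of order $\Delta^2$ times the expected number of periods in which the perturbed sellers are offered. This bounds the worst case below by $c_1T\Delta\exp(-c_2T\Delta^2)$. With $\Delta$ shrinking like $T^{-1/2}$ (up to factors in $N$ and $B$, as in \citet{chen2018note}), the worst-case regret is of order $\sqrt{T}$. Since $\log^2 x\le c\sqrt{x}$ for all $x\ge 1$ (e.g.\ $c=16/e^2$ for the natural logarithm), this already gives $\mathcal{B}\log^2(NT)$ with $\mathcal{B}$ depending on $N$ and $B$. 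You do not need a second logarithm at all, and the paper does not construct one: it derives a bound growing linearly in $M=T$ and relaxes it via $NT>\log^2(NT)$.

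\textbf{Construction defect.} Your claim that the dynamic-programming benchmark \eqref{eq:optimal_value} coincides with the myopic one, with $A$ clairvoyant in every period, fails once $K>1$. Your ``seller side is the bottleneck'' idea implicitly needs $K>1$. Sellers start each epoch with zero proposals, but within the epoch the factor $1/(1+\sum_z n_{z,i}u_{i,z})$ in \eqref{eq:rit} lowers the marginal value of a seller in $A$ that already holds a proposal. The clairvoyant may then switch to fresh sellers outside $A$, so ``omitting a seller of $A$ costs $c\Delta$'' no longer holds.

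\textbf{How the paper avoids this.} It takes $K=1$ and a common $u_{i,z}=u$, and perturbs only $v_{z,i}=v(1+\epsilon_z)$ on $\overline{S}$. Then $r_{i,z}(t)\equiv u/(1+u)$, $\Delta\mathcal{M}$ is state-free, and the full-capacity $\overline{S}$ is the static clairvoyant choice (Lemma \ref{lemma_optimal_assortment_size}). The per-period bound of Lemma \ref{lemma_inst_regret_suboptimal} is therefore deterministic, and the paper averages over all $\overline{S}\in\mathscr{S}_B$ against the neighbours $\overline{S}\setminus\{i\}$.

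\textbf{A caution if you follow that route.} In the Pinsker step, the law of $\eta_i(m)$ depends on the history up to epoch $m$, so its KL accumulates over past periods. Lemma \ref{lemma_kl_divergence_neighbor}, by contrast, bounds a single epoch's divergence by a constant. With a $T$-independent $\epsilon$, that chain yields $\Omega(M)$ regret for every policy. This cannot hold: an explore-then-commit policy tuned to the family attains $\mathcal{O}(\log T)$ on each of its finitely many instances. With $\epsilon$ shrinking in $T$ as above, the same computation gives the $\sqrt{T}$-order bound, which suffices.
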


\textcolor{black}{Theorem \ref{thm_regret_lower_bound} shows that no policy designed for the TWL problem can achieve a regret smaller than $\mathcal{B} \log^2 (NT)$ in the worst-case. This lower bound serves as a fundamental benchmark for evaluating the performance of any policy, as no policy can outperform it across all possible parameterizations. Since the upper bound on regret for our TWL-UCB algorithm (Theorem \ref{thm_regret_upper_bound}) aligns with this benchmark, up to constant factors, it confirms that the algorithm we design is \emph{rate-optimal}.}

\subsection{Proof Outline for Theorem \ref{thm_regret_lower_bound}}
\label{lb_proof_outline}

To establish the lower bound on the worst-case regret of any policy, we first create an instance of the TWL problem, where we also construct an adversarial parameterization $\Theta_{\overline{S}}$, as will be explained in \eqref{eq:param_space}. {\color{black} For this problem instance, we identify the ``single best assortment" that maximizes the total reward over the entire planning horizon, given the complete knowledge of the customer and provider preferences and the sequence of customer arrivals. In other words, the ``single best assortment" is that one assortment that returns the highest total reward over the planning horizon under the clairvoyant environment. We show that this ``single best assortment" must be of maximum capacity $B$. We refer to this policy of offering a ``single best assortment" for the entire planning horizon as the ``single best assortment policy." Next, we use this ``single best assortment" to transform the problem of lower bounding the worst-case regret of any policy $\pi$ relative to the clairvoyant policy into the problem of lower bounding its regret relative to the ``single best assortment policy," as will be shown in \eqref{eq:weak_regret_bound}. We henceforth refer to the latter regret as the \emph{weak regret} \citep{auer2002nonstochastic}, as will be defined in \eqref{eq:weak_regret}. Finally, we derive a lower bound on the weak regret under the constructed problem instance to prove Theorem \ref{thm_regret_lower_bound}. Below we delineate the construction of the problem instance and the proof outline for Theorem \ref{thm_regret_lower_bound}.
}

\smallskip

\noindent \textbf{Adversarial Parameterization and Problem Instance.}  
Choose any arbitrary $\overline{S} \subseteq \cS$. Let $\bV_{\overline{S}}$ denote the matrix of choice parameters for all customer types $z \in \cC$, and let $\bU_{\overline{S}}$ denote the matrix of choice parameters for all sellers $i \in \cS$. \textcolor{black}{Then $v_{z, i} \in \bV_{\overline{S}}$ for all $z \in \cC$ and $u_{i, z} \in \bU_{\overline{S}}$ for all $i \in \cS$ are given by: }  
\begin{align}
    &v_{z, i}= 
    \begin{cases}
        v(1+\ep_z) & \text{if} \;\; i \in {\overline{S}},\\
        v & \text{if} \;\; i \notin {\overline{S}},
    \end{cases}\label{eq:param_space}\\
    &u_{i, z} = u \;\; \text{for all} \;\; z \in \cC, \nonumber
\end{align}
where $v \in (0, 0.5]$, $\ep_z \in (0, 1]$ for all $z \in \cC$, and $u \in (0, 1]$. We henceforth denote the parameterization $(\bU_{\overline{S}}, \bV_{\overline{S}})$ by $\Theta_{\overline{S}}$. \textcolor{black}{Under this parameterization, $\overline{S}$ is the ``single best assortment" (as formally proved in Lemma \ref{lemma_optimal_assortment_size} in \textcolor{black}{\S \ref{supplementary_lemmas}})}. The probability and expectation under $\Theta_{\overline{S}}$ are denoted by $\prob_{\overline{S}}$ and $\E_{\overline{S}}$, respectively. In addition, we set $K = 1$ such that $T = M$. We consider $\gamma_m$ in \eqref{eq:reward_epoch} to be of the following form:
\begin{align}
    \gamma_m = \left(1+u\right). \label{eq:gamma_epoch}
\end{align}

\textcolor{black}{Intuitively, this parameterization acts as an adversary by creating a ``needle in a haystack" environment designed to maximally confuse any learning policy. By setting the choice parameters of the sellers in $\overline{S}$ to be only marginally better than those in other assortments (controlled by an \textcolor{black}{arbitrarily} small $\epsilon_z$), the adversary makes $\overline{S}$ statistically nearly indistinguishable from any suboptimal assortment $S \neq \overline{S}$. This \textcolor{black}{infinitesimal} gap is deliberate: if $\ep_z$ were significantly higher, a policy could quickly identify the suboptimal assortments and cease exploration. Instead, by keeping the difference infinitesimal, the adversary forces any algorithm to spend an excessive amount of time, and thereby incur substantial worst-case regret, repeatedly testing suboptimal assortments just to gather enough data to confidently detect the true optimal set.}

{\color{black} For ease of exposition in the remainder of this section, we denote the marginal increase in expected matches corresponding to any assortment $S \subseteq \cS$ as:
\begin{align}
\cM(S) \doteq \cM(S, \bC(t-1), c_t)  = \sum_{i \in S}\cM_i(S, \bC(t-1), c_t). \label{eq:delta_M_simplified}
\end{align}
}

\noindent \textbf{Lower Bound on Worst-Case Regret.}  Consider any admissible policy $\pi$. Let $\hS_t \in \sS$ be the assortment offered in period $t$ under the admissible policy $\pi$, where $\sS \doteq \set{S \subseteq \cS \mid |S| \leq B}$ is the set of feasible assortments. Let $Reg_{\pi, \Theta_{\overline{S}}}(T)$ denote the weak regret of policy $\pi$ under the specific parameterization $\Theta_{\overline{S}}$. As defined earlier in \S\ref{lb_proof_outline}, this regret is measured relative to the best single assortment, that is, the assortment that maximizes the total reward over the planning horizon among all time-invariant assortments:
\begin{align}
    Reg_{\pi, \Theta_{\overline{S}}}(T) &= \max_{S \in \sS}\sum_{m=1}^M \gamma_m \E_{\overline{S}}\left[\sum_{t=b_m}^{mK} \left\{\cM(S) - \cM(\hS_t)\right\}\right],\label{eq:weak_regret}
\end{align}
where $\cM(\cdot)$ is defined in \textcolor{black}{\eqref{eq:delta_M_simplified}}. 

As established in Lemma \ref{lemma_optimal_assortment_size} (\textcolor{black}{in \S \ref{supplementary_lemmas}}), under the parameterization $\Theta_{\overline{S}}$, the single best assortment that maximizes the total reward over the planning horizon among all time-invariant assortments is $\overline{S}$. Moreover, Lemma \ref{lemma_optimal_assortment_size} shows that ${\overline{S}}$ must be of capacity $B$. Consequently, we can resolve the maximization problem and express the weak regret directly as:
\begin{align}
    Reg_{\pi, \Theta_{\overline{S}}}(T) = \sum_{m=1}^M \gamma_m \E_{\overline{S}}\left[\sum_{t=b_m}^{mK} \left\{\cM(\overline{S}) - \cM(\hS_t)\right\}\right]. \label{eq:weak_regret_v2}
\end{align}

\textcolor{black}{Define $\sS_B \doteq \set{S \subseteq \cS \mid |S| = B}$ as the set of all assortments $S$ of size $B$. Since $\sup_{(\bU, \bV)} Reg_{\pi, (\bU, \bV)}(T)$ represents the worst-case regret relative to the clairvoyant policy across all possible parameterizations, it must be lower-bounded by the maximum regret over our constructed family of adversarial parameterizations $\set{\Theta_{\overline{S}} \mid \overline{S} \in \sS_B}$.} Therefore, $\sup_{(\bU, \bV)} Reg_{\pi, (\bU, \bV)}(T)$ can be lower bounded by the maximum weak regret:
\begin{align}  
    \sup_{(\bU, \bV)} Reg_{\pi, (\bU, \bV)}(T) \geq \max_{\overline{S} \in \sS_B} Reg_{\pi, \Theta_{\overline{S}}}(T) & = \max_{\overline{S} \in \sS_B} \sum_{m=1}^M\gamma_m\E_{\overline{S}}\left[\sum_{t=b_m}^{mK}\bigg\{\cM({\overline{S}}) - \cM(\hS_t)\bigg\} \right]. \label{eq:weak_regret_bound}
\end{align}

 Clearly, the cardinality of the set $\sS_B$ is: $|\sS_B| = {N \choose B}$. For each $\hS_t$, we can define an assortment $S_t \in \sS_B$ such that $S_t \supseteq \hS_t$. By construction and  \textcolor{black}{the monotonicity of the $\cM(S)$ function with respect to $|S|$ as shown in the proof of Lemma \ref{lemma_inst_regret_suboptimal}}, the sequence $\set{\hS_t}_{t\leq T}$ suffers a higher regret than $\set{S_t}_{t \leq T}$. Therefore, we get:
\begin{align}
    \sup_{(\bU, \bV)} Reg_{\pi, (\bU, \bV)}(T) \geq \max_{\overline{S} \in \sS_B} Reg_{\pi, \Theta_{\overline{S}}}(T) &= \max_{\overline{S} \in \sS_B}\sum_{m=1}^M\gamma_m\E_{\overline{S}}\left[\sum_{t=b_m}^{mK}\bigg\{\cM({\overline{S}}) - \cM(\hS_t)\bigg\} \right]\nonumber \\ 
    &\geq \max_{\overline{S} \in \sS_B}\sum_{m=1}^M\gamma_m\E_{\overline{S}}\left[\sum_{t=b_m}^{mK}\bigg\{\cM({\overline{S}}) - \cM(S_t)\bigg\} \right]. \label{eq:weak_regret2}
\end{align}   

To find the lower bound in \eqref{eq:weak_regret2}, first Lemma \ref{lemma_inst_regret_suboptimal} derives a lower bound on the period-$t$ (one period) regret due to offering assortment $S_t \in \sS_B$ in period $t$ under the parameterization $\Theta_{\overline{S}}$. 

\begin{lemma}\label{lemma_inst_regret_suboptimal}
    Let $\overline{S} \in \sS_B$ be an arbitrary assortment, and $\Theta_{\overline{S}}$ be the corresponding parameterization. Suppose that $c_t = z \in \cC$ is the arriving customer type at time $t$ in epoch $m$. Then for any assortment $S_t \in \sS_B$, we have the following:
    \[\gamma_m\left(\cM(\overline{S}) - \cM(S_t)\right) \geq \dfrac{uv\ep_z(B - |S_t \cap \overline{S}|)}{(1 + B)^2}, \;\; \text{where $\ep_z \in (0, 1]$}.\]
\end{lemma}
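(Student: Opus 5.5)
The plan is to compute both marginal increases in closed form under $\Theta_{\overline{S}}$ and compare them directly. The computation rests on one structural simplification of the instance. Since $K=1$, every epoch consists of a single period and all proposals from the previous epoch are lost. Hence at time $t=b_m$ the proposal matrix $\bC(t-1)$ is the zero matrix, so $n_{z,i}(t-1)=0$ for all $z,i$.

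\textbf{Step 1: simplify $r_{i,z}(t)$.} By \eqref{eq:rit}, only the seller who actually receives the proposal matters, and by \eqref{eq:deltaM} only the term in which seller $i$ receives the current type-$z$ proposal contributes. In that term $\sum_{z'} n_{z',i}(t)u_{i,z'}=u_{i,z}=u$ and $\sum_{z'} n_{z',i}(t-1)u_{i,z'}=0$, so $r_{i,z}(t)=u/(1+u)$. Plugging this into \eqref{eq:deltaM} and using $\gamma_m=1+u$ from \eqref{eq:gamma_epoch} gives
\begin{align*}
\gamma_m\,\cM(S) = u\, f\big(V_z(S)\big), \qquad f(x)\doteq \frac{x}{1+x}, \qquad V_z(S)\doteq \sum_{i\in S} v_{z,i}.
\end{align*}
This also makes transparent the monotonicity of $\cM(S)$ in $|S|$ that the paper invokes. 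Adding a seller increases $V_z(S)$, and $f$ is increasing.

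\textbf{Step 2: evaluate the two sums.} For $S_t\in\sS_B$, write $k\doteq|S_t\cap\overline{S}|$. By \eqref{eq:param_space},
\begin{align*}
V_z(\overline{S}) = Bv(1+\ep_z), \qquad V_z(S_t) = Bv + k v\ep_z .
\end{align*}
Using the identity $f(a)-f(b)=\dfrac{a-b}{(1+a)(1+b)}$, the quantity of interest becomes
\begin{align*}
\gamma_m\big(\cM(\overline{S})-\cM(S_t)\big) = \frac{u\,v\,\ep_z\,(B-k)}{\big(1+Bv(1+\ep_z)\big)\big(1+Bv+kv\ep_z\big)} .
\end{align*}

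\textbf{Step 3: bound the denominator.} Because $v\le 1/2$ and $\ep_z\le 1$, we have $v(1+\ep_z)\le 1$. Therefore $Bv(1+\ep_z)\le B$, and since $k\le B$ also $Bv+kv\ep_z\le Bv(1+\ep_z)\le B$. Each factor of the denominator is thus at most $1+B$. Since the numerator is nonnegative, this yields the claimed bound
\begin{align*}
\gamma_m\big(\cM(\overline{S})-\cM(S_t)\big) \ge \frac{uv\ep_z(B-|S_t\cap\overline{S}|)}{(1+B)^2}.
\end{align*}

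\textbf{Main obstacle.} The only delicate point is Step 1. One must check carefully the convention in \eqref{eq:rit} that $n_{z,i}(t)$ already includes the current period's proposal. One must also check that the reset of proposals between epochs under $K=1$ really makes $\bC(t-1)=0$, so that the seller factor is identical for every seller and every assortment. Once this is confirmed, the rest is the elementary computation above.
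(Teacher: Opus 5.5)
Your proof is correct and follows the paper's argument. The paper likewise uses $r_{i,z}(t)=u/(1+u)$ and $\gamma_m=1+u$ to reduce the gap to $u$ times a difference of two $x/(1+x)$ terms. It then obtains the same closed form $uv\ep_z(B-k)/\big((1+Bv(1+\ep_z))(1+v(B+\ep_z k))\big)$ (with $k=|S_t\cap\overline{S}|$) and bounds each denominator factor by $1+B$ via $v\le 1/2$ and $\ep_z\le 1$. Your explicit justification of $r_{i,z}(t)=u/(1+u)$, through $K=1$ and the reset of proposals at each epoch, is something the paper only asserts (in the proof of Lemma~\ref{lemma_optimal_assortment_size}), so it tightens the argument rather than changing it.
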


{\color{black} We note that the inequality in Lemma \ref{lemma_inst_regret_suboptimal} is a deterministic statement. As established in the proof of Lemma \ref{lemma_optimal_assortment_size}, our constructed adversarial parameterization forces the reward parameter $r_{i, c_t}(t)$, defined in \eqref{eq:rit}, to be constant over time. In particular, we have $r_{i, c_t}(t) = \dfrac{u}{1+u}$ for all $t$ under $\Theta_{\overline{S}}$. Consequently, the marginal increase in expected matches completely loses its dependence on the state vector $\bC(t-1)$, collapsing into a deterministic, static function for any fixed assortment and customer type.}

{\color{black}
Next, we use Lemma \ref{lemma_inst_regret_suboptimal} to connect the weak regret to the expected number of times specific sellers are offered. Notice that the size of the intersection can be rewritten as a sum of indicator variables: $|S_t \cap \overline{S}| = \sum_{i \in \overline{S}} \I[i \in S_t]$. Consequently, summing the instantaneous regret bound over an epoch yields:
\begin{align*}
    \sum_{t=b_m}^{mK} (B - |S_t \cap \overline{S}|) = KB - \sum_{i \in \overline{S}} \sum_{t=b_m}^{mK} \I[i \in S_t] = KB - \sum_{i \in \overline{S}} \eta_i(m),
\end{align*}
where $\eta_i(m) \doteq \sum_{t=b_m}^{mK}\I[i \in S_t]$ is the number of times seller $i$ is offered in epoch $m$. This algebraic equivalence dictates that to lower bound the regret, we must upper bound $\E_{\overline{S}}[\eta_i(m)]$ for the sellers $i \in \overline{S}$. To bound $\E[\eta_i(m)]$, we introduce a neighboring parameter space $\Theta_{\tS}$ as discussed next.} 

Define $\sS_{B-1}^{-i} \doteq \set{\tS \subset \cS \mid i \notin \tS, |\tS| = B-1}$ as the set of all assortments of size $B-1$ such that seller $i$ is not included in the assortment. Thus, assortments ${\overline{S}} \in \sS_B$ and $\tS \in \sS_{B-1}^{-i}$ differ only in seller $i$, i.e., $\tS = {\overline{S}} \setminus \set{i}$. Let $\Theta_{\tS}$ be the parameterization corresponding to each $\tS \in \sS_{B-1}^{-i}$. This means that $\Theta_{\tS}$ represents a \emph{neighboring} parameter space that differs from the parameter space $\Theta_{\overline{S}}$ for ${\overline{S}} \in \sS_B$ for exactly one seller $i$. Using these constructs, Lemma \ref{lemma_pinskers_ineq} shows that the difference between the expected number of times a seller $i \in \cS$ has been offered in an assortment under neighboring parameterizations $\Theta_{\overline{S}}$ and $\Theta_{\tS}$ in each epoch can be upper bounded by the Kullback-Leibler (\textup{KL}) divergence between the corresponding distributions, $\prob_{\overline{S}}$ and $\prob_{\tS}$.

\begin{lemma}\label{lemma_pinskers_ineq}
    Suppose that ${\overline{S}} \in \sS_B$ and $\tS \in \sS_{B-1}^{-i}$ such that $\tS = {\overline{S}} \setminus \set{i}$, where $i \in \cS$. Let $\Theta_{\overline{S}}$ and $\Theta_{\tS}$ be the parameter spaces corresponding to ${\overline{S}}$ and $\tS$, respectively. Moreover, let $\eta_i(m) \doteq \sum_{t=b_m}^{mK}\I[i \in S_t]$  denote the number of times a seller $i \in \cS$ has been included in an assortment $S_t \in \sS_B$ for all $t$ within an epoch $m$. 
    Then, we have:
    \[\bigg|\E_{\tS}[\eta_{i}(m)] - \E_{\overline{S}}[\eta_{i}(m)]\bigg| \leq K\sqrt{2\textup{KL}_m\big(\prob_{\tS}\parallel \prob_{\overline{S}}\big)} = K\sqrt{2\textup{KL}_m\big(\prob_{\tS} \parallel \prob_{\tS \cup \set{i}}\big)}, \]
    where $\textup{KL}_m\big(\prob_{\tS} \parallel \prob_{\overline{S}}\big)$ is KL divergence between $\prob_{\tS}$ and $\prob_{\overline{S}}$ in epoch $m$.
\end{lemma}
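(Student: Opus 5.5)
We use the standard change-of-measure argument. Let $\mathcal{F}_m$ denote the observation history (customer types, offered assortments $S_t$, customer choices $\zeta_t(S_t)$, and seller choices) generated up to the end of epoch $m$. Then $\eta_i(m)=\sum_{t=b_m}^{mK}\I[i\in S_t]$ is a $\mathcal{F}_m$-measurable function taking values in $[0,K]$, because each $S_t$ is determined by $\mathscr{H}_t\subseteq\mathcal{F}_m$ under the admissible policy. We view $\prob_{\tS}$ and $\prob_{\overline{S}}$ as the laws of this history under $\Theta_{\tS}$ and $\Theta_{\overline{S}}$, restricted to epoch $m$, and let $\textup{KL}_m$ be the KL divergence between these restricted laws. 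The equality $\textup{KL}_m(\prob_{\tS}\parallel\prob_{\overline{S}})=\textup{KL}_m(\prob_{\tS}\parallel\prob_{\tS\cup\set{i}})$ is immediate, since $\overline{S}=\tS\cup\set{i}$ by hypothesis. Only the inequality requires work.

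\textbf{Step 1.} For any measurable $f$ with $0\le f\le K$,
\[
\big|\E_{\tS}[f]-\E_{\overline{S}}[f]\big| = \Big|\int f\,(d\prob_{\tS}-d\prob_{\overline{S}})\Big| \le K\sup_{A}\big|\prob_{\tS}(A)-\prob_{\overline{S}}(A)\big| = K\,\textup{TV}(\prob_{\tS},\prob_{\overline{S}}).
\]
This holds because $f/K\in[0,1]$, and the difference of expectations of a $[0,1]$-valued function is at most the total variation distance. (One can also write $f/K=\int_0^1\I[f/K>s]\,ds$ and bound each level set separately.)

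\textbf{Step 2.} Apply Pinsker's inequality, $\textup{TV}(P,Q)\le\sqrt{\tfrac12\textup{KL}(P\parallel Q)}$. Combined with Step 1 this gives the bound $K\sqrt{\tfrac12\textup{KL}_m}$, which is stronger than the stated $K\sqrt{2\textup{KL}_m}$. So the lemma follows, with room to spare in the constant.

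\textbf{Main obstacle: well-definedness of $\textup{KL}_m$.} We need the restricted laws to be mutually absolutely continuous, so that the divergence is finite and the argument is meaningful. Under both parameterizations the randomness comes from the same sources: the customer arrival sequence (common to both, since it is not policy-dependent), any internal randomization of the policy (also common), and MNL draws whose parameters are all strictly positive ($v>0$, $v(1+\ep_z)>0$, $u>0$). Hence every outcome has positive probability under both measures, and the chain-rule decomposition of KL over periods is valid.

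We only need finiteness here, not an explicit value. The explicit computation, in which only customer choices from assortments containing $i$ contribute, is deferred to the subsequent lemma. If $\textup{KL}_m$ were infinite the inequality would hold trivially, so even this check is not strictly necessary.
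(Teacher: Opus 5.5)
Your proof is correct and follows essentially the paper's argument: both bound the gap in expectations of the $[0,K]$-valued $\eta_i(m)$ by $K$ times a distance between the two laws and then apply Pinsker. The only difference is that the paper uses the cruder step $\sum_{x} x\,|\prob_{\tS}(\eta_i(m)=x)-\prob_{\overline{S}}(\eta_i(m)=x)| \le K\|\prob_{\tS}-\prob_{\overline{S}}\|_1$, whereas your total-variation step gives the bound $K\sqrt{\textup{KL}_m/2}$, sharper by a factor of two.

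You read $\textup{KL}_m$ as the divergence between the laws of the entire history through epoch $m$, and that is the reading under which the inequality holds, since $\eta_i(m)$ depends on earlier epochs through the policy. Note, however, that by the chain rule this quantity accumulates per-period terms over all $t \le mK$. So the next lemma's computation, which your closing remark defers to, bounds only one epoch's contribution rather than the whole quantity.
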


\begin{lemma}\label{lemma_kl_divergence_neighbor}
    Suppose that ${\overline{S}} \in \sS_B$ and $\tS \in \sS_{B-1}^{-i}$ such that $\tS = {\overline{S}} \setminus \set{i}$, where $i \in \cS$. Let $\Theta_{\overline{S}}$ and $\Theta_{\tS}$ be the parameter spaces corresponding to ${\overline{S}}$ and $\tS$, respectively. If $\prob_{\overline{S}}$ and $\prob_{\tS}$ are probability distributions under $\Theta_{\overline{S}}$ and $\Theta_{\tS}$, respectively, then the \textup{KL} divergence between $\prob_{\tS}$ and $\prob_{\overline{S}}$ for each epoch $m$ is given by: 
    \[\textup{KL}_m\big(\prob_{\tS} \parallel \prob_{\overline{S}}\big) = \textup{KL}_m\big(\prob_{\tS} \parallel \prob_{\tS \cup \set{i}}\big) \leq \dfrac{(B+1)(2B + 9)}{4}\max_{z \in \cC}\ep_{z}^2.\]
\end{lemma}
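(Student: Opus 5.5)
We bound $\textup{KL}_m\big(\prob_{\tS} \parallel \prob_{\overline{S}}\big)$ by decomposing it per period through the chain rule and bounding each per-period term by a $\chi^2$-type quantity.

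\textbf{Step 1: chain rule.} We have $K=1$, so each epoch consists of one period $t=m$. The observation generated in epoch $m$ has three parts:
\begin{itemize}
\item[(a)] the customer type $c_t$, whose law does not depend on the parameterization;
\item[(b)] the customer choice $\zeta_t(S_t)$, which depends only on $\bV$;
\item[(c)] the seller decisions $\xi_i(\bC_i(t))$, which depend only on $\bU$.
\end{itemize}
Under both $\Theta_{\tS}$ and $\Theta_{\overline{S}}$ we have $u_{i,z}=u$ for every seller and type. Hence, conditional on the history and on the customer's choice, the seller-side law is identical under the two models, and its KL contribution vanishes. The chain rule then reduces $\textup{KL}_m$ to
\[
\E_{\tS}\Big[\textup{KL}\big(p^{\tS}_{c_t,\cdot}(S_t)\,\big\|\,p^{\overline{S}}_{c_t,\cdot}(S_t)\big)\Big],
\]
a conditional KL between two MNL distributions on $S_t\cup\{0\}$. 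Because $\tS\cup\{i\}=\overline{S}$, the stated identity $\textup{KL}_m(\prob_{\tS}\parallel\prob_{\overline{S}})=\textup{KL}_m(\prob_{\tS}\parallel\prob_{\tS\cup\{i\}})$ is just notation.

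\textbf{Step 2: compare the MNL probabilities.} Fix $z=c_t$ and $S=S_t$ with $|S|\le B$. The two parameterizations differ only in seller $i$:
\[
v_{z,i}=v \ \text{under } \Theta_{\tS}, \qquad v_{z,i}=v(1+\ep_z) \ \text{under } \Theta_{\overline{S}}.
\]
If $i\notin S$, the two MNL laws coincide and the KL is zero. If $i\in S$, set $D\doteq 1+\sum_{j\in S}v_{z,j}$ under $\Theta_{\tS}$; then $D'=D+v\ep_z$ under $\Theta_{\overline{S}}$. The probability ratios are
\[
\frac{p_j}{p'_j}=\frac{D'}{D}\ \ (j\neq i,\ \text{including the outside option}), \qquad \frac{p_i}{p'_i}=\frac{D'}{D(1+\ep_z)}.
\]

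\textbf{Step 3: bound the per-period KL.} Use $\textup{KL}(p\parallel p')\le\chi^2(p\parallel p')=\sum_j (p_j-p'_j)^2/p'_j$. Write $\delta\doteq v\ep_z/D\le \ep_z/2$, using $v\le 1/2$ and $D\ge1$. For $j\ne i$, $p_j-p'_j=p_j\big(1-D/D'\big)$, so these terms together contribute at most $\delta^2\cdot\frac{D'}{D}\cdot\sum_{j\neq i}p_j\le \delta^2(1+\delta)$. For seller $i$, $p_i-p'_i=\frac{v}{D}-\frac{v(1+\ep_z)}{D'}$, which is of order $\frac{v\ep_z}{D}$; since $p'_i\ge v/(D+v\ep_z)$, its $\chi^2$ term is at most a constant multiple of $\ep_z^2 v/D$.

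Collecting terms should give a bound of the form $c_1\ep_z^2+c_2\ep_z^2$ with absolute constants. To match the paper's form $\frac{(B+1)(2B+9)}{4}\max_z\ep_z^2$, I would instead carry out the crude expansion, bounding $D'\le 1+B$, each $v_{z,j}\le 1$, the number of summands by $B+1$, and using $\ep_z\le 1$ to absorb cubic and quartic terms. The stated constant is looser than the sharp bound and is implied by it, so either route suffices. Finally, maximize over $z\in\cC$ to remove the dependence on $c_t$.

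\textbf{Main obstacle.} The delicate step is Step 1: one has to argue cleanly that the delayed seller feedback contributes nothing to the KL. This needs the observation that sellers' MNL parameters coincide under both parameterizations, and that $\bC(t)$ is a deterministic function of the already-conditioned choices, so the seller-side conditional laws are identical. It is also worth checking that the policy's randomization is common to both models and therefore cancels. The algebra in Step 3 is routine.
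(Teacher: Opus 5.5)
Your proposal is correct and is essentially the paper's argument. You condition on the offered assortment and note that the per-period divergence vanishes when $i\notin S_t$. You then bound it by $\sum_j (p^{\tS}_j-p^{\overline{S}}_j)^2/p^{\overline{S}}_j$, which is the paper's Lemma~\ref{lemma_chen_2018}, treating the outside option, seller $i$, and the other sellers separately. Finally you pass to epoch $m$ by divergence decomposition, reading $\textup{KL}_m$, as the paper does, as the conditional contribution of epoch $m$'s observations.

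The differences are only in bookkeeping. You state explicitly why the seller-side feedback adds nothing: $u_{i,z}=u$ under both parameterizations, a point the paper leaves implicit. Your ratio identity $p_j/p'_j=D'/D$ for $j\neq i$ lets you sum those terms against $\sum_j p_j\le 1$. This gives a $B$-free bound of order $\ep_z^2$, which implies the paper's $\frac{(B+1)(2B+9)}{4}\max_{z}\ep_z^2$. The paper's $B$-dependence comes from lower-bounding each seller's $p^{\overline{S}}_j$ by $v/(B+1)$ and summing $B$ such terms.
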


Combining Lemmas \ref{lemma_pinskers_ineq} and \ref{lemma_kl_divergence_neighbor}, we find an upper bound on the expected number of times a seller $i \in \cS$ has been offered in an assortment. Finally, we appropriately set $u, v, $ and $\ep_z$ for all $z \in \cC$ to complete the proof of Theorem \ref{thm_regret_lower_bound}. The details of the proof of Theorem \ref{thm_regret_lower_bound} and the supporting lemmas are available in \S \ref{ec_thm_2}. 

\section{Simulation Studies}
\label{numerical_exp}

In this section, we conduct a simulation study for several instances of the TWL problem. For our simulation study, we construct four different instances of the preference parameters $(\bU, \bV)$, and compare the performance of our TWL-UCB algorithm to that of the UCB-based algorithm from \citet{agrawal2019mnl} (Algorithm 1 in \citet{agrawal2019mnl}) for all instances of the preference parameters. In all these instances, we observe that our proposed algorithm performs better than the one proposed in \citet{agrawal2019mnl}. Finally, we numerically evaluate how the value function under our proposed algorithm varies with the upper bound on the assortment size when MNL parameters are drawn from a uniform distribution. We observe that the value function is sublinear in the upper bound on the assortment size. More specifically, we find that after the upper bound reaches a threshold, the value function exhibits significantly stronger diminishing returns. 

\subsection{Regret}
\label{online_regret}

For our simulation, we consider a platform with $N = 10$ sellers and an upper bound of $B = 4$ sellers on the assortment. There are two customer types, i.e., $ \cC = \{z_1, z_2\}$, where each customer type arrives on the platform with a probability 0.5. Each epoch consists of $K = 1$ period, and the planning horizon is set to $M = 200,000$ epochs, resulting in a total length of $T = MK = 200,000$ periods. \textcolor{black}{We restrict the epoch length to $K=1$ to align our simulation with the standard instantaneous-feedback environment of the MNL-Bandit problem. This structural alignment ensures a fair baseline comparison with the UCB-based algorithm from \citet{agrawal2019mnl}. Specifically, setting $K=1$ precludes our policy from batching proposals across periods, thereby neutralizing any algorithmic advantage our method might derive from delayed matching. Consequently, evaluating our two-sided learning algorithm under this strict single-period matching requirement provides a strictly conservative assessment of its relative performance advantage over policies optimized for instantaneous rewards.}

For each customer type $z \in \cC$, we define the preference parameter as follows:
\begin{align}
    v_{z, i}= 
    \begin{cases}
        0.3 + \ep & \text{if} \;\; z = z_1, i \in \set{1, 2, 3, 4},\\
        0.3 + \ep & \text{if} \;\; z = z_2, i \in \set{7, 8, 9, 10},\\
        0.3 & \text{for all other cases}.
    \end{cases} \label{eq:customer_params}
\end{align}

In \eqref{eq:customer_params}, we vary $\ep$ to generate different preference parameter instances for customers. For our simulation study, we consider the following values: $\epsilon \in \set{0.2, 0.3, 0.4, 0.5}$. In addition, we set $u_{i, z} = 1$ and $v_{z, 0} = u_{i, 0} = 1$ for all $i, z$. 

Figure \ref{fig:bb_regret} presents the regret of the TWL-UCB algorithm relative to a clairvoyant algorithm for different values of $\epsilon$, based on 25 independent simulations, each run for a total horizon of $T = 200,000$ periods. \textcolor{black}{It is evident that our algorithm's regret consistently exhibits sublinear growth with time across all considered values of $\epsilon$ in our setup.} The algorithm's consistent performance across all parameterizations underscores its robustness in diverse instances of the TWL problem. Additionally, Figure \ref{fig:bb_regret_ub} demonstrates that our algorithm's  regret is upper-bounded by $\mathcal{O}\big(\log^2(NT)\big)$.

We also observe from Figure \ref{fig:bb_regret} that the regret diminishes as $\epsilon$ increases. This trend is attributed to the accelerated learning of unknown parameters for all customer types as $\ep$ increases, leading to a progressively smaller loss in the expected platform reward over time.

\begin{figure}[htbp]
\begin{center}
\subfigure[Regret for $\ep \in \set{0.2, 0.3, 0.4, 0.5}$ \label{fig:bb_regret}]{\includegraphics[width=0.495\linewidth]{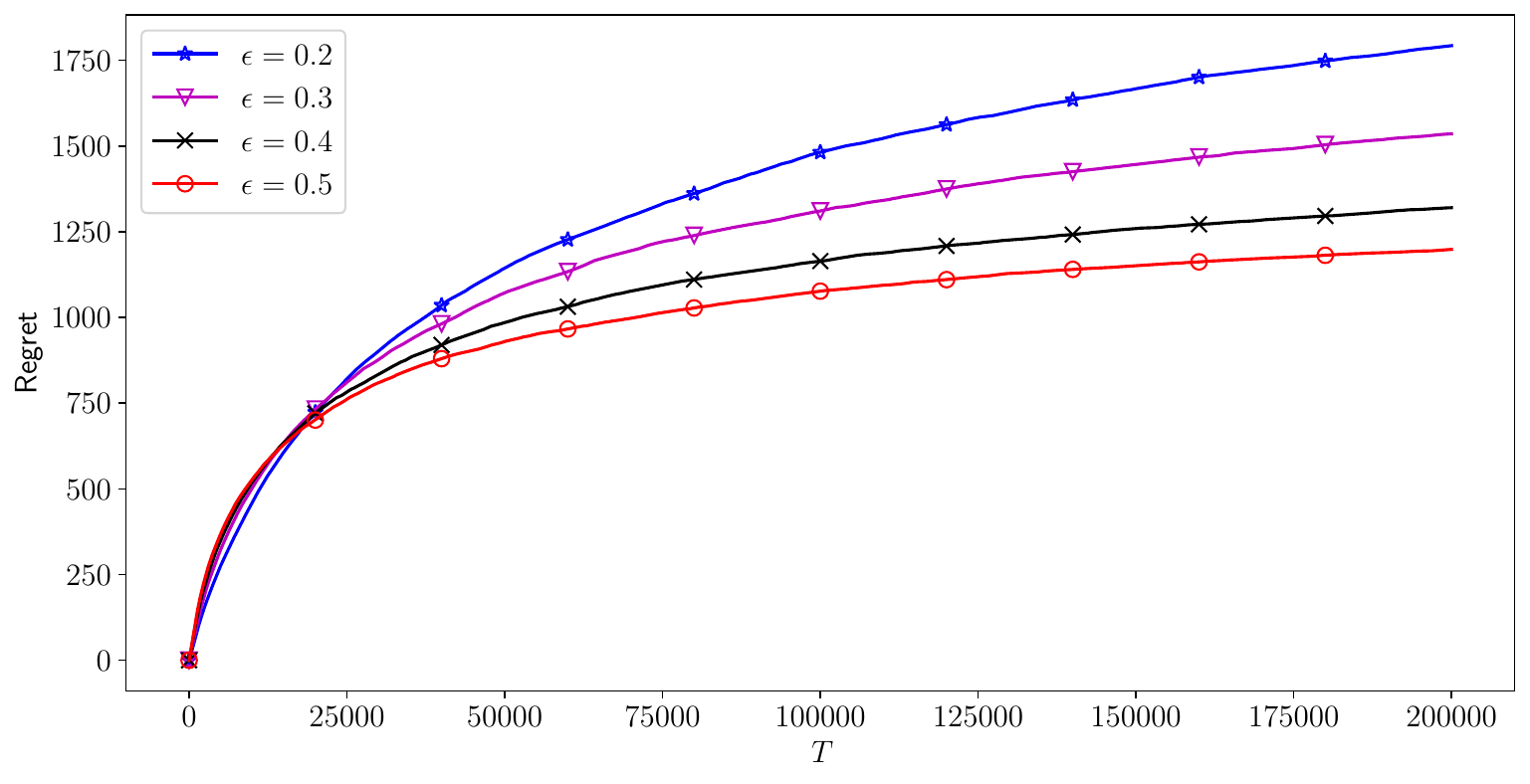}}
\subfigure[Regret vs. $\mathcal{O}\left(\log^2(NT)\right)$ \label{fig:bb_regret_ub}]{\includegraphics[width=0.495\linewidth]{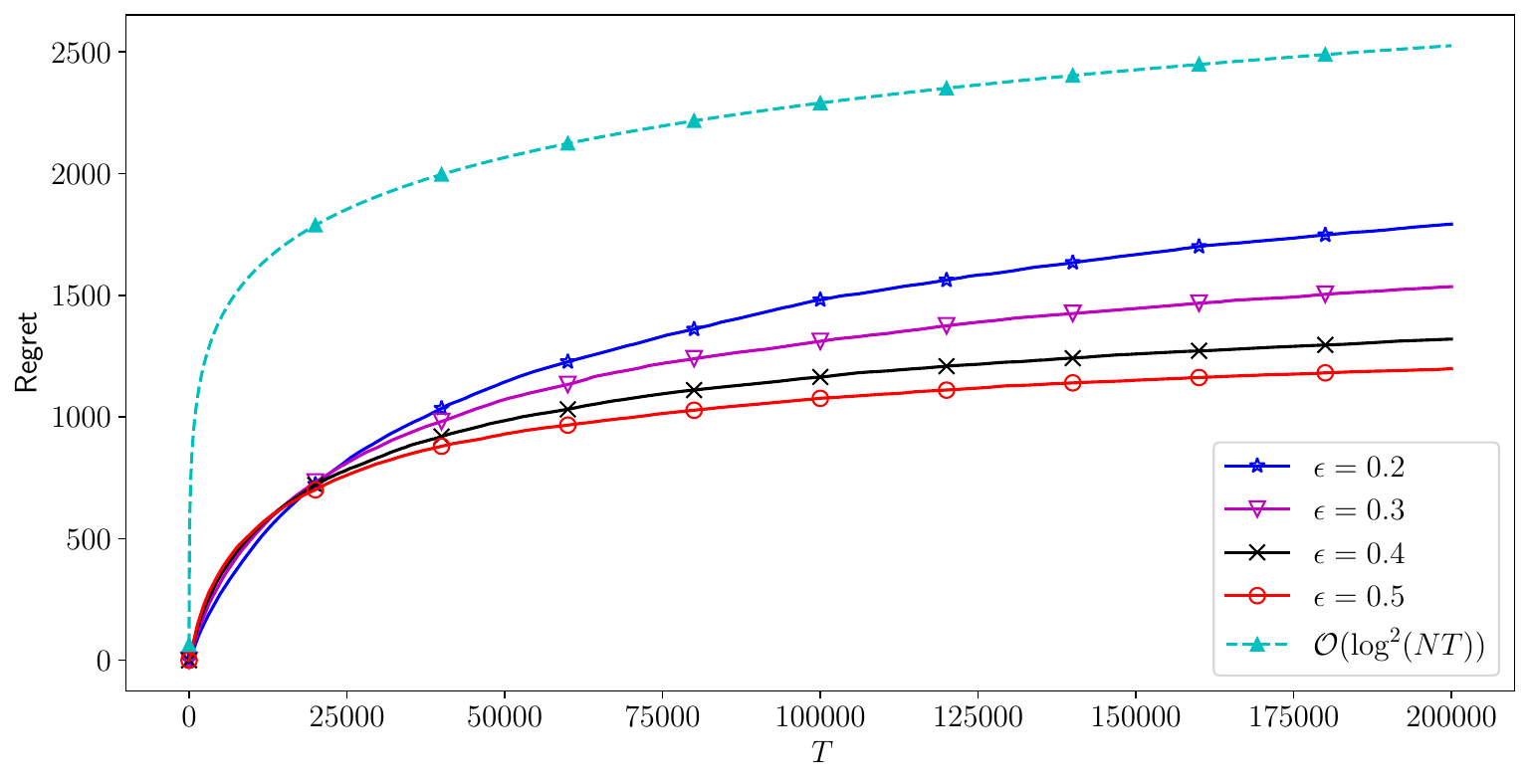}}
\caption{\scriptsize Regret of TWL-UCB Algorithm}
\label{fig:twl_regret}
\end{center}
\end{figure}

\begin{figure}[htbp]
\begin{center}
\subfigure[${\ep = 0.2}$ \label{fig:ep_a}]{\includegraphics[width=0.495\linewidth]{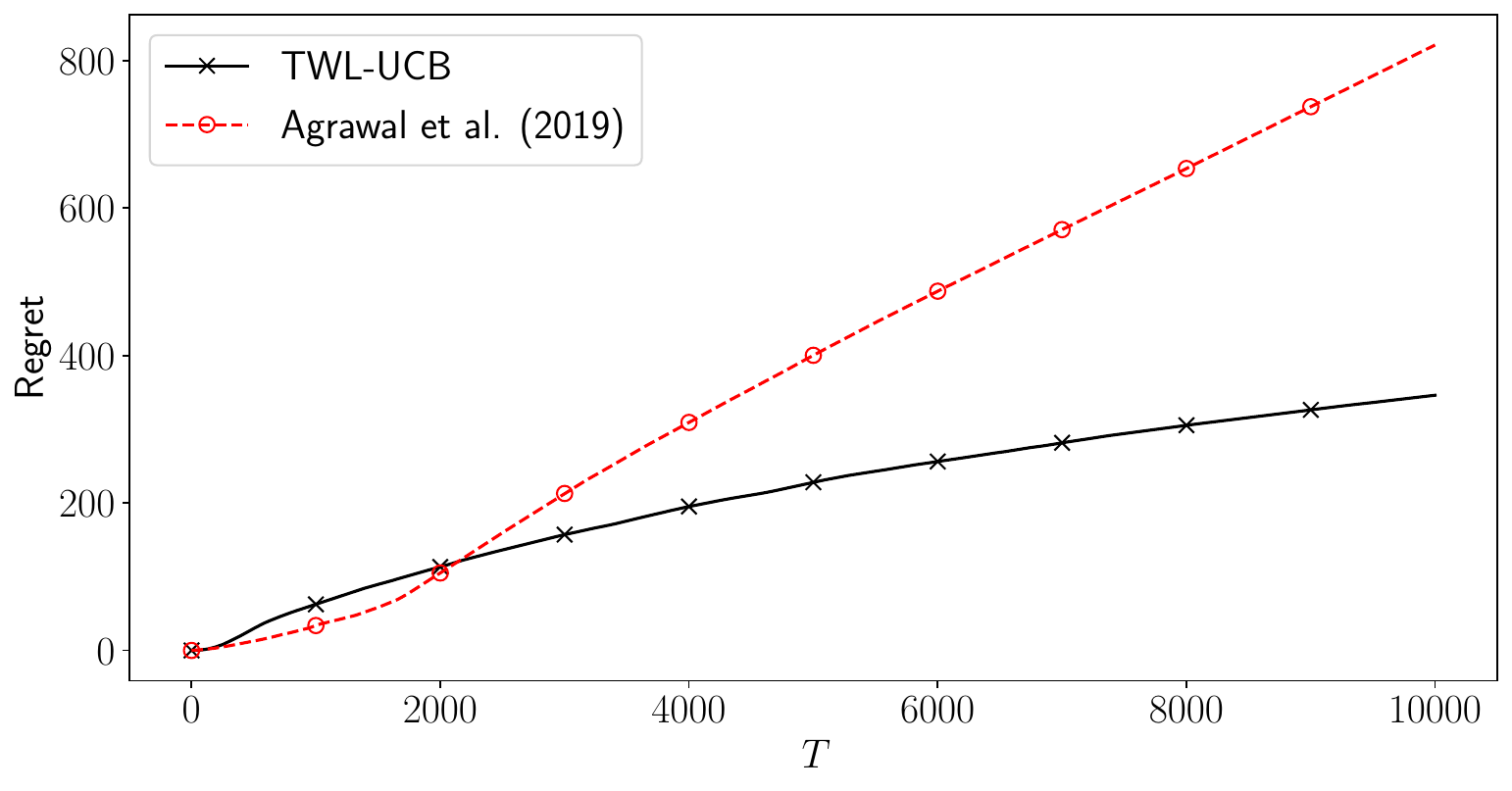}}
\subfigure[${\ep = 0.3}$ \label{fig:ep_b}]{\includegraphics[width=0.495\linewidth]{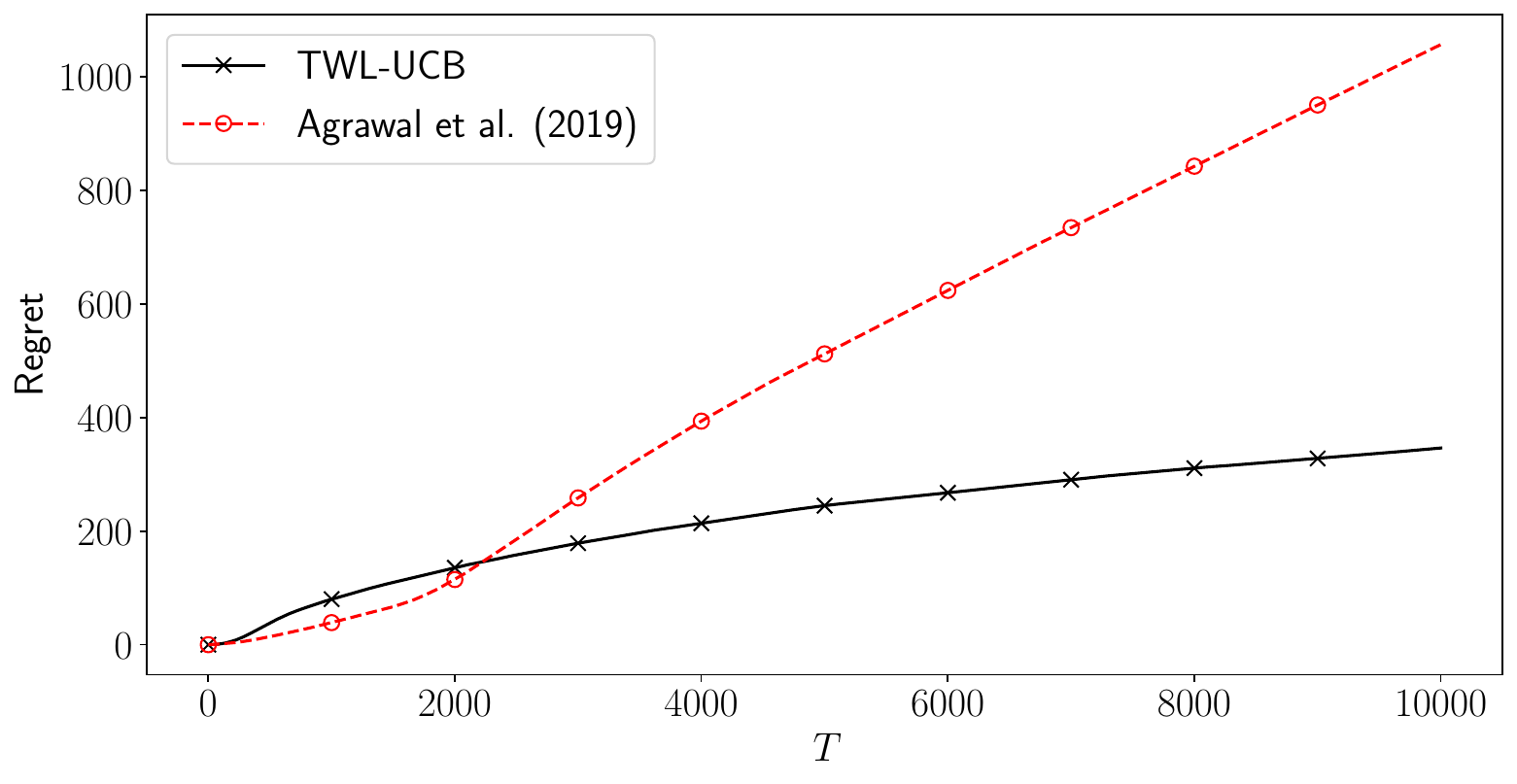}}
\subfigure[${\ep = 0.4}$ \label{fig:ep_c}]{\includegraphics[width=0.495\linewidth]{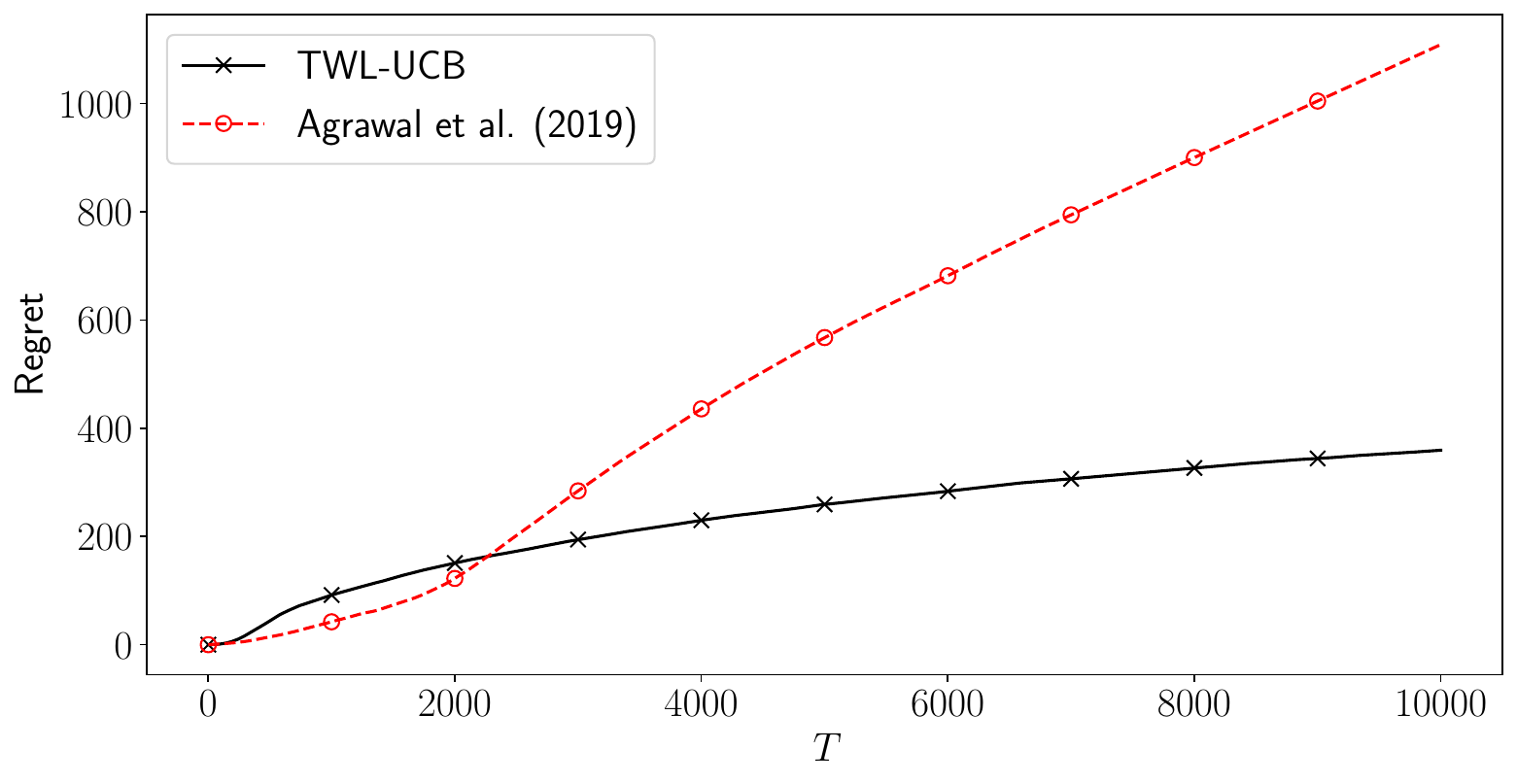}}
\subfigure[${\ep = 0.5}$ \label{fig:ep_d}]{\includegraphics[width=0.495\linewidth]{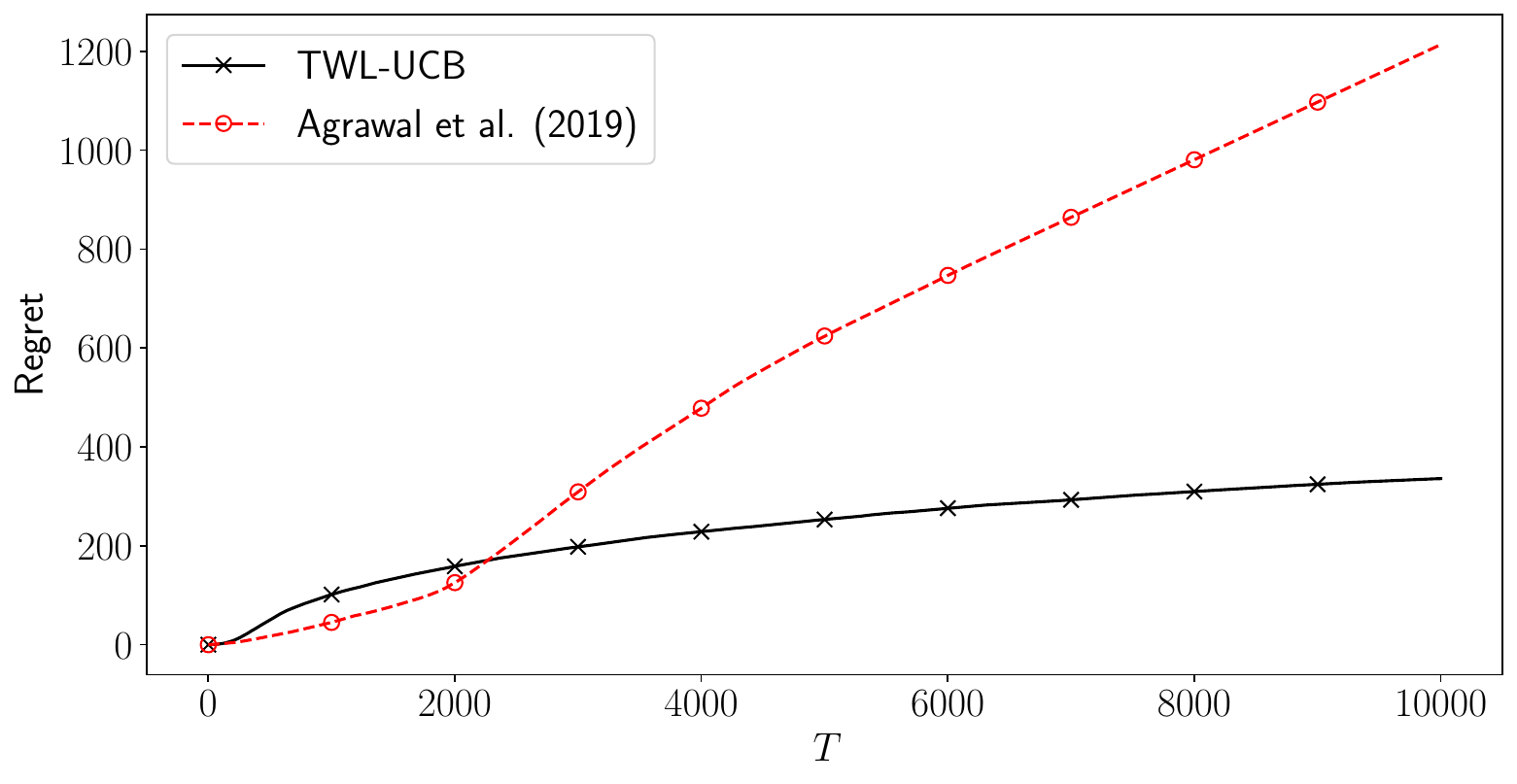}}
\caption{\scriptsize Performance Comparison of Algorithms: TWL-UCB vs. \citet{agrawal2019mnl}}
\label{fig:bb_vs_mnl}
\end{center}
\end{figure}

\subsection{Performance Comparison of Algorithms}
\label{performance_comparison}

The TWL-UCB algorithm leverages an exploration-exploitation approach to dynamically learn the preference parameters of both customers and sellers while maximizing the total reward over a specific time horizon in a two-sided platform. As mentioned in \S \ref{literature}, our setting is related to that of \citet{agrawal2019mnl}, though the problem we study differs considerably from theirs. To further emphasize the differences, in this section, we compare the performance of our TWL-UCB algorithm to that of the UCB-based algorithm proposed by \citet{agrawal2019mnl} for all parameter instances. \textcolor{black}{Note that the algorithm of \citet{agrawal2019mnl}, developed for one-way learning formulation of dynamic assortment selection, assumes homogeneous customers; so, to make a meaningful comparison, we consider only one customer type in this section to compare the performance of the TWL-UCB algorithm with that of Algorithm 1 in \citet{agrawal2019mnl}.}

\textcolor{black}{Figure \ref{fig:bb_vs_mnl} illustrates the performances of TWL-UCB and Algorithm 1 in \citet{agrawal2019mnl} for the same parameter instances as explained in \S \ref{online_regret} with homogeneous customers, i.e., $z_1 = z_2$.  In addition, we set $M = 10,000$, and therefore $T = MK = 10,000$, in this section. Across all the problem instances, the TWL-UCB algorithm achieves a superior performance (significantly lower regret) than the algorithm of \citet{agrawal2019mnl}. This finding further emphasizes that algorithms designed for dynamic assortment selection problems with incomplete information on one side (customers) perform poorly in setups that distinctly require learning about two sides (sellers and customers).} 

\textcolor{black}{We note that the UCB-based algorithm proposed by \citet{agrawal2019mnl} performs well when transactions depend solely on customer choices, which is the focus of their study. Their algorithm updates assortments only when a customer does not select a product, allowing the algorthm to gradually learn the optimal assortment and achieve sublinear regret over time. In contrast, we consider a setting where transactions require mutual agreement between customers and sellers. Relying solely on customer not selecting a product to update assortments slows down learning, ultimately leading to substantially higher regret, as shown in Figure \ref{fig:bb_vs_mnl}. The two-way dependency in our problem necessitates more frequent assortment updates to maximize mutual agreement in each period. Our TWL-UCB algorithm addresses this by dynamically updating assortments in each period based on historical behaviors of both customers and sellers, ensuring faster learning and improved performance.}

\subsection{Optimal Bound on Assortment Size}
\label{optimal_assortment_size}

In this section, we investigate how the upper bound on the assortment size impacts the value function in the TWL problem (as defined in \textcolor{black}{\eqref{eq:value}}). For our analysis, we set $N = 100$, $K = 1$, $M = \set{1000, 2000}$, $\cC = \set{z_1, z_2}$, and we vary $B$ from 1 through $N$ in steps of 1 to examine its impact on the value function. In addition, we let $u_{i, z} = 1$ for all $i \in \cS$ and $z \in \cC \cup \set{0}$, where $z=0$ corresponds to the case of an outside option or no choice on the platform. We sample $v_{z, i}$ for all $z \in \cC, i \in \cS$ from a uniform distribution within the interval $[0.3, 0.3 + \ep]$, where we run 25 independent simulations with $\ep = 0.3$ for a total of $T = MK = \set{1000, 2000}$ time periods. Finally, we set $v_{z, 0} = 1$ for all $z \in \cC$.  

Figure \ref{fig:vf_vs_B} shows the variation of the value functions for both clairvoyant and TWL-UCB algorithms with assortment bound $(B)$ for $T = 1000$ (Figure \ref{fig:vf_B_a}) and $T = 2000$ (Figure \ref{fig:vf_B_b}). Our analysis reveals a prominent pattern in the value function, showing diminishing returns as the assortment bound increases. Notably, there is no significant improvement in the value function beyond an assortment bound of 30. This suggests that expanding assortments beyond a certain threshold does not yield substantial gains for the platform manager. For platform managers looking to optimize assortment strategies, this insight is crucial. It highlights the need to balance assortment size with associated costs and benefits. By identifying this threshold and recognizing the diminishing returns of assortment expansion, managers can make more informed decisions. They can focus on optimizing assortments within the range where significant gains in total rewards are achievable. This, in turn, maximizes the platform’s overall performance and profitability. 

\begin{figure}[htbp]
\begin{center}
\subfigure[${T = 1000}$ \label{fig:vf_B_a}]{\includegraphics[width=0.495\linewidth]{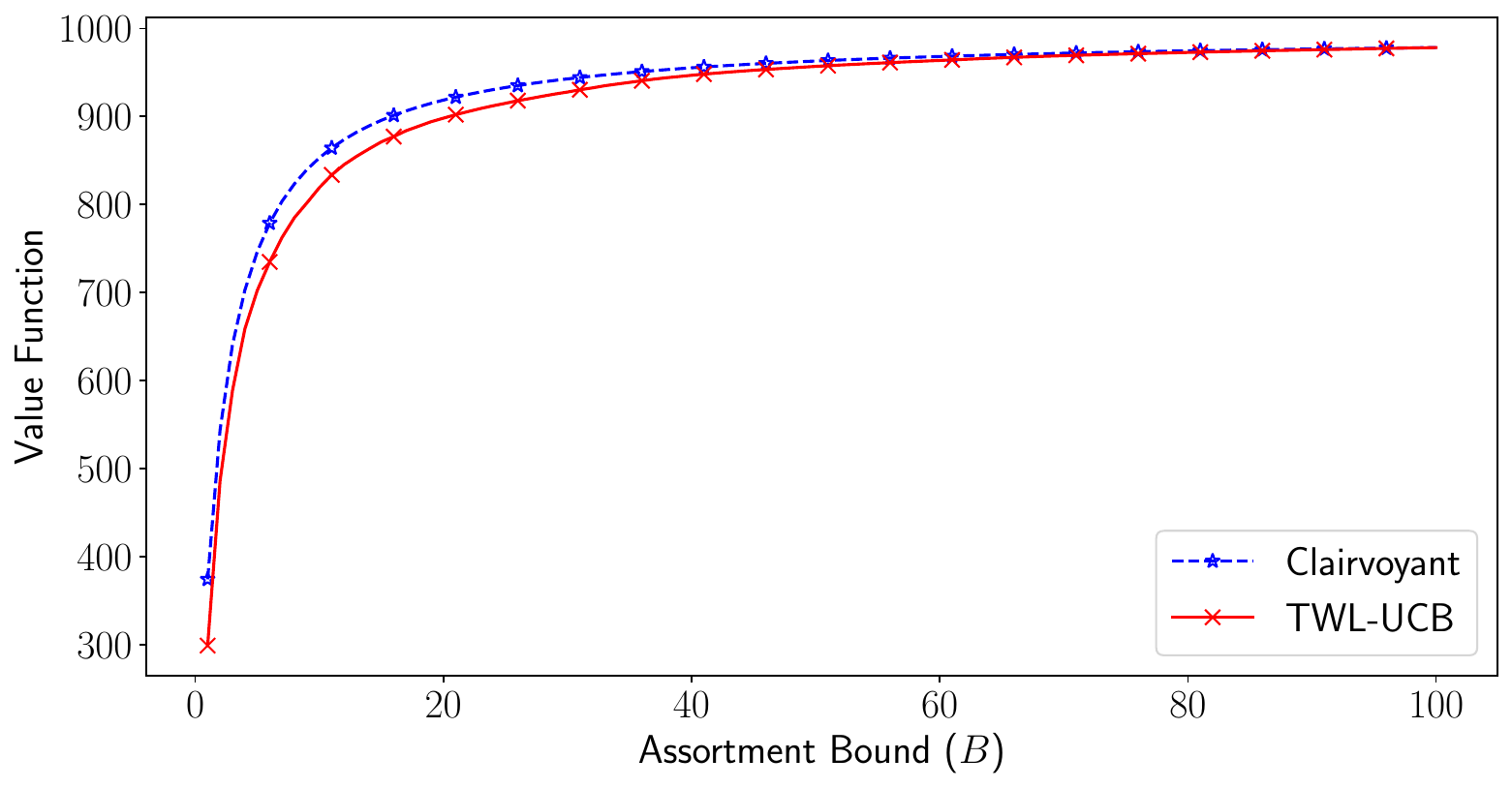}}
\subfigure[${T = 2000}$ \label{fig:vf_B_b}]{\includegraphics[width=0.495\linewidth]{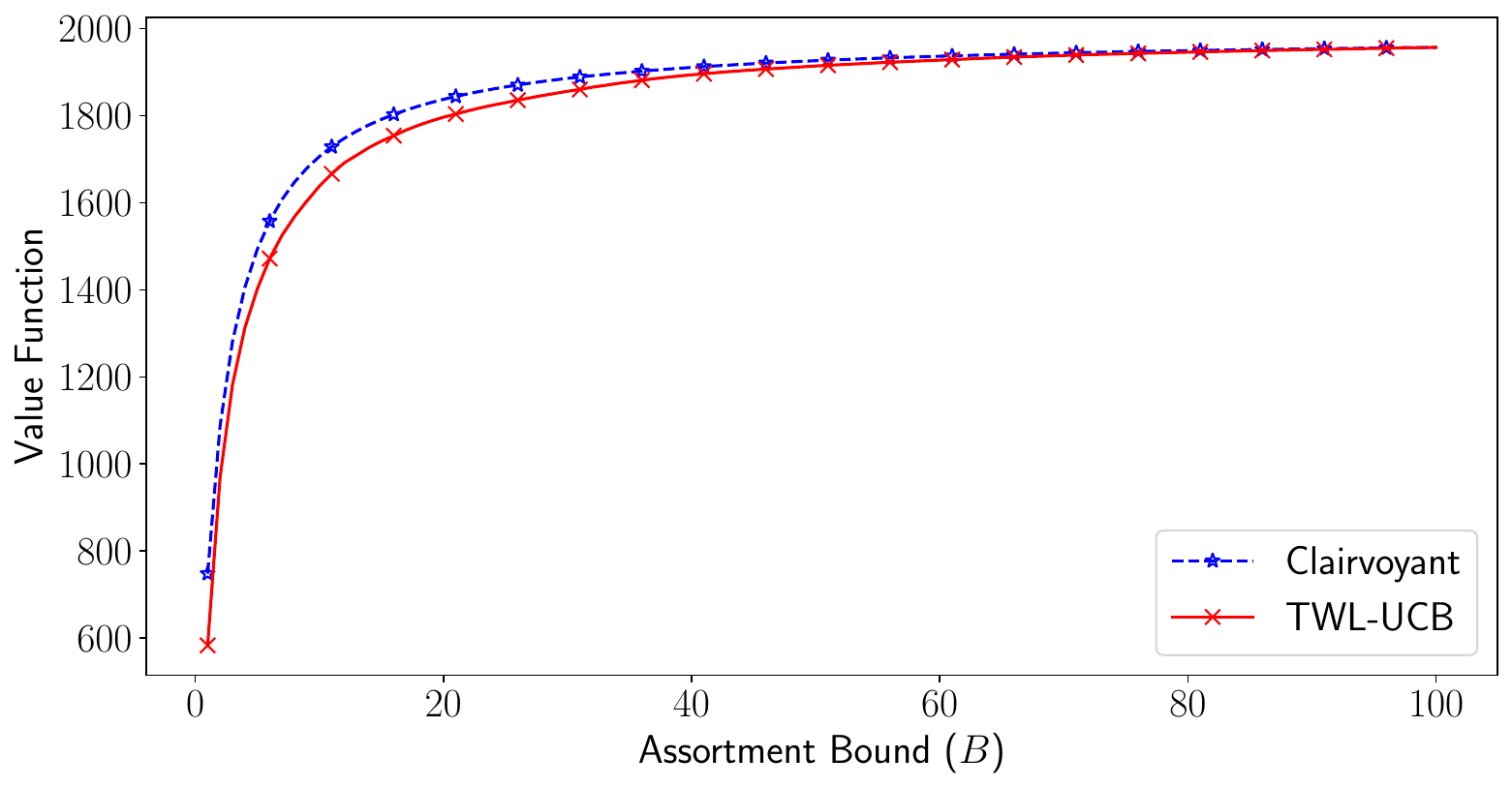}}
\caption{\scriptsize Value Function vs. Assortment Bound (${B}$)}
\label{fig:vf_vs_B}
\end{center}
\end{figure}

\begin{figure}[htbp]
\begin{center}
\subfigure[${T = 1000}$ \label{fig:dvf_B_a}]{\includegraphics[width=0.495\linewidth]{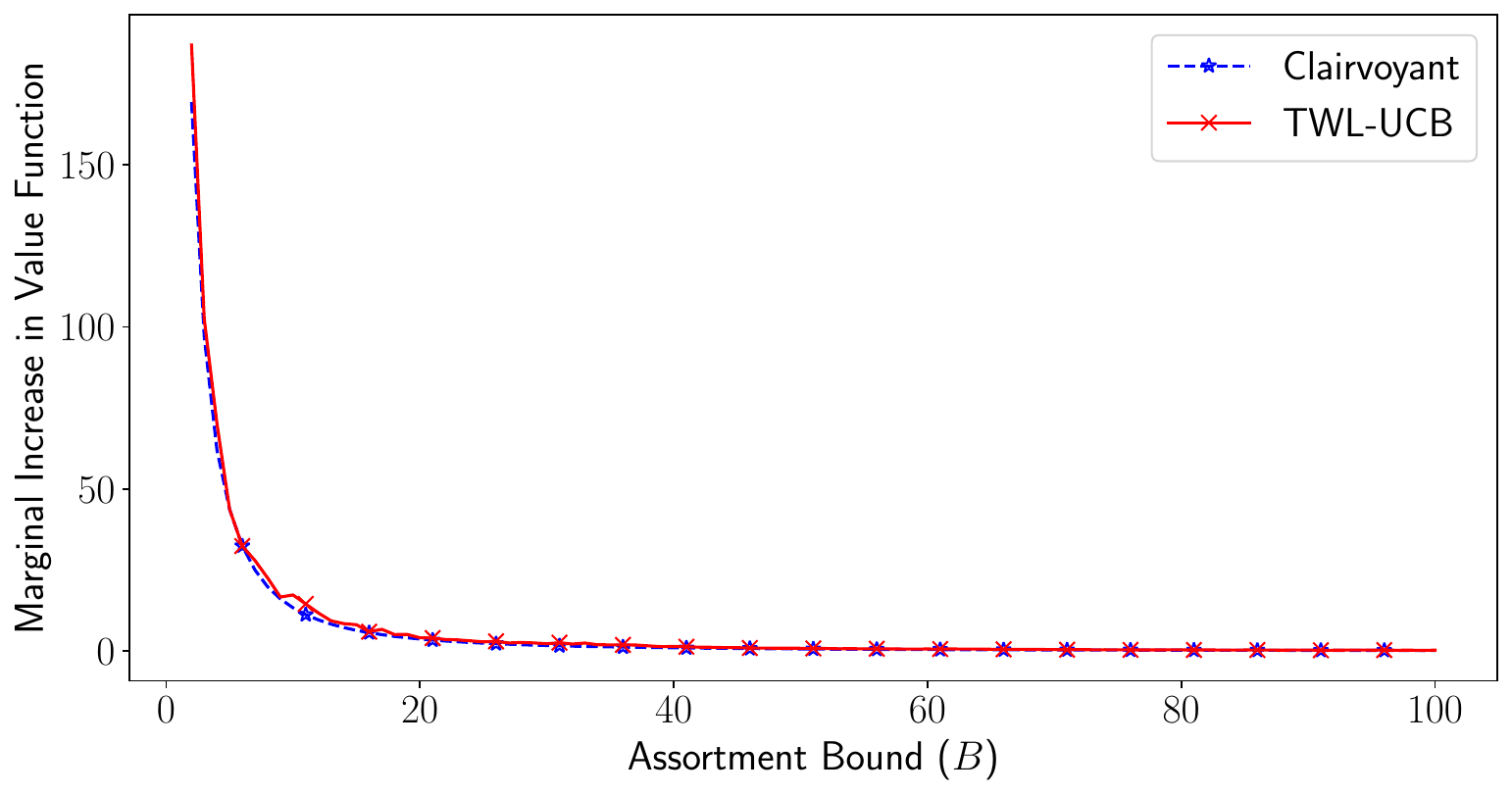}}
\subfigure[${T = 2000}$ \label{fig:dvf_B_b}]{\includegraphics[width=0.495\linewidth]{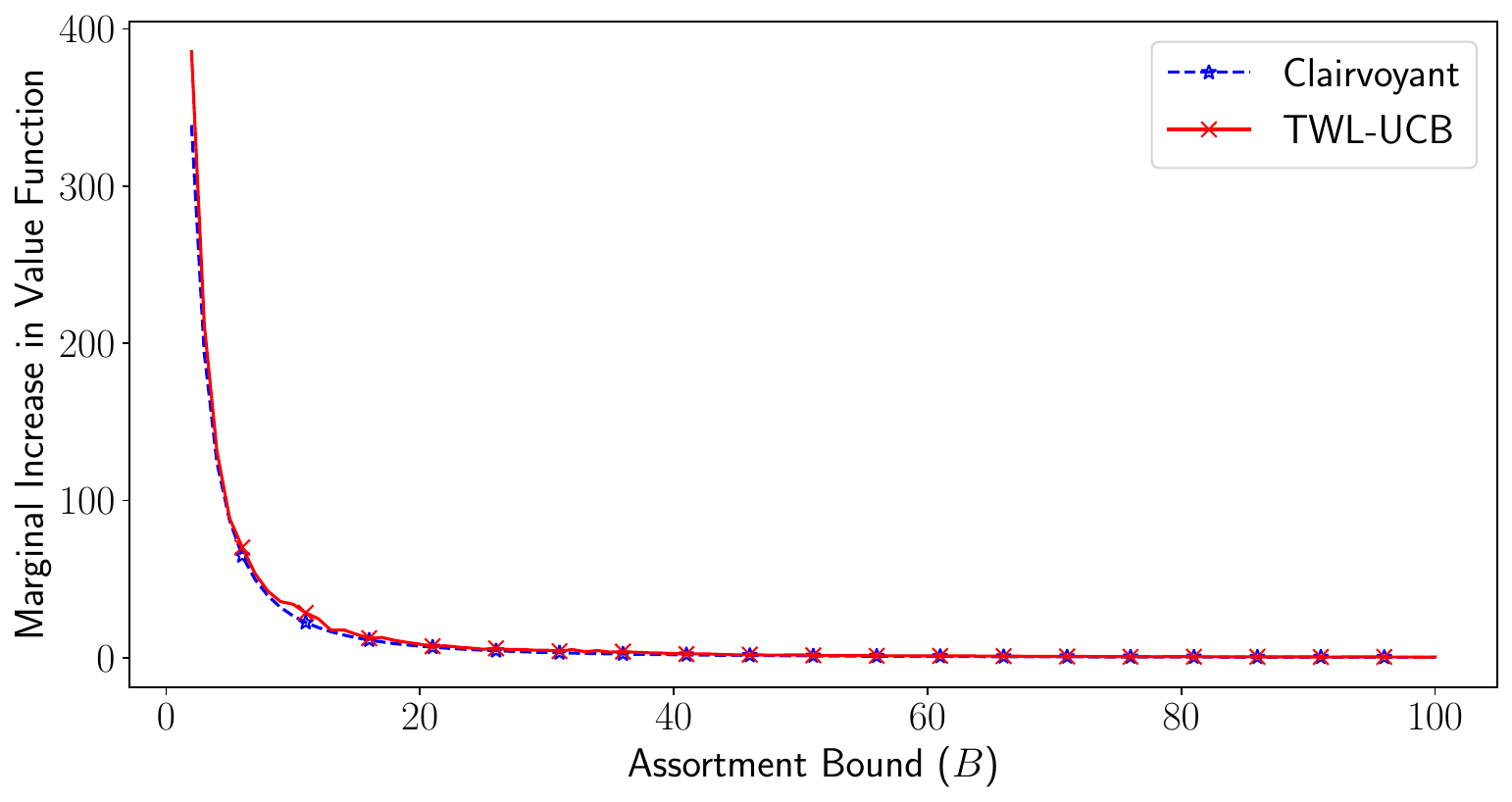}}
\caption{\scriptsize Marginal Increase in Value Function vs. Assortment Bound (${B}$)}
\label{fig:dvf_vs_B}
\end{center}
\end{figure}

To strengthen the aforementioned insight, we examine the variation of the marginal increase in the platform's value function with the assortment bound. Figure \ref{fig:dvf_vs_B} illustrates that there is no significant change in the marginal increase in the value function once the assortment bound surpasses a specific threshold. Figures \ref{fig:dvf_B_a} and \ref{fig:dvf_B_b} suggest that the threshold on the assortment bound lies around 30, consistent with the findings depicted in Figure \ref{fig:vf_vs_B}. This analysis reinforces our earlier conclusion that there are diminishing returns to scale with the assortment bound. Beyond a certain threshold, increasing the assortment size does not yield significant additional benefits in terms of the total reward. Understanding this threshold is crucial for platform managers in optimizing assortment strategies to maximize the platform's overall performance and profitability.

\section{Extensions}
\label{extensions}

In this section, we introduce and analyze two extensions of the TWL problem described in \S \ref{problem}. These extensions establish the robustness of our TWL-UCB algorithm in handling diverse scenarios in online platform operations.

\subsection{Information Transparency}
\label{info_transparency}

This section demonstrates the robustness of our proposed policy to various degrees of information transparency. Specifically, we will show that Theorem \ref{thm_regret_upper_bound} holds true irrespective of whether the platform manager shares complete or partial customer-specific information with the sellers. This finding underscores the resilience of our algorithm to different levels of information disclosure, enhancing its applicability and effectiveness in diverse online platform environments.

In this section, we explore an extension of our problem where customers have multiple features and the platform manager only reveals a fraction of the customer-specific information to the sellers alongside the proposal matrix. This extension introduces a layer of complexity to the problem, as sellers now have access to limited information about the customers they target. The decision to selectively disclose customer-specific information poses several challenges and trade-offs. On one hand, providing limited information to sellers may help protect customer privacy and prevent exploitation of sensitive data. On the other hand, it may also hinder sellers' ability to tailor their offerings effectively and deliver personalized experiences to customers. In the remainder of this section, we investigate the implications of this extended problem formulation and explore any potential impact on the performance of our algorithm. 

In an MNL choice model, the preference parameters of a seller $i \in \cS$ for customer type $z \in \cC$ is of the form: $u_{i, z} = \exp{(\mathbf{x}_{i, z}^{\top}\boldsymbol{\vartheta})}$, where $\mathbf{x}_{i, z} \in \R^{d}$ is the contextual feature vector of customer of type $z$ for provider $i$ and $\boldsymbol{\vartheta} \in \R^{d}$ is the parameter vector. Let $x_{i, z}^j$ be the $j$-the component of the feature vector $\mathbf{x}_{i, z}$, where $j \in \set{1, \ldots, d}$. Without loss of generality, we assume that $x_{i, z}^j = 0$ for any $j$ indicates that the corresponding feature is hidden from the providers in $\mathbf{x}_{i, z}$.  \textcolor{black}{In this revised framework, the platform manager either conceals certain customer-specific features from the providers or reveals all features while sharing information about customer proposals.} All other modeling elements are the same as the ones described in \S \ref{problem}. Based on these, Theorem \ref{thm_info_transparency} presents our result in this extended setting.    

\begin{theorem}\label{thm_info_transparency}
    The regret of the TWL-UCB algorithm for the TWL problem under partial information transparency is given by:
    \[Reg{_{_\textup{TWL-UCB}}}(T) = \mathcal{O}\left(\log^2(NT)\right).\]
\end{theorem}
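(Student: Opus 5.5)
The plan is to reduce Theorem \ref{thm_info_transparency} to Theorem \ref{thm_regret_upper_bound}. Partial transparency only changes the \emph{values} of the seller preference parameters, not the structure of the problem. Under any disclosure rule, the hidden coordinates of $\mathbf{x}_{i,z}$ are set to zero. Each seller $i$ therefore evaluates a type-$z$ proposal through a fixed effective parameter
\[
\tilde u_{i,z} \doteq \exp\big(\tilde{\mathbf{x}}_{i,z}^{\top}\boldsymbol{\vartheta}\big),
\]
where $\tilde{\mathbf{x}}_{i,z}$ is the masked feature vector. The disclosure policy is fixed over the horizon and does not depend on the platform's actions, so $\tilde u_{i,z}$ is a deterministic, time-invariant constant for every pair $(i,z)$. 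Seller choices then still follow the MNL model \eqref{eq:single_match_prob}, with $\bU$ replaced by $\tilde\bU \doteq [\tilde u_{i,z}]$. The first step is to state this carefully. In particular, I would check that the reward identity \eqref{eq:reward_epoch} (Lemma \ref{lemma_exp_reward}) and the expression \eqref{eq:deltaM} for $\cM$ go through verbatim with $\tilde\bU$, since their derivations use only the MNL form and not the origin of the parameters.

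Second, I would observe that TWL-UCB never uses the feature vectors. The estimators \eqref{eq:avg_match_count} and \eqref{eq:uucb} are built only from observed matches $\xi_i(\bC_i(mK))$ and the counts $\theta_{z,i}(m)$. They therefore estimate whatever parameter actually generates seller behavior, namely $\tilde u_{i,z}$. The clairvoyant benchmark \eqref{eq:optimal_value} and the regret \eqref{eq:regret_def} are defined relative to the true behavioral parameters, which are now $(\tilde\bU,\bV)$. So the extended problem is exactly an instance of the base TWL problem with parameters $(\tilde\bU,\bV)$.

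Third, I would apply Theorem \ref{thm_regret_upper_bound}, which holds for any $\bU$ and $\bV$ satisfying Assumption \ref{assumption:mnl}. This gives regret at most $\alpha\log^2(NT)$. The one technical point, which I expect to be the main obstacle, is verifying Assumption \ref{assumption:mnl}(2), namely $\tilde u_{i,z}\le 1$, under masking. Zeroing coordinates could raise $\tilde{\mathbf{x}}_{i,z}^\top\boldsymbol{\vartheta}$ above zero if some hidden coordinate contributed negatively. I would handle this in one of two ways. The first option is to impose the assumption directly on the effective (disclosed) parameters, which is natural because these are the quantities the seller's MNL uses. The second option is to normalize, as in the MNL convention with $u_{i,0}=1$, by dividing through by $\max\{1,\max_{i,z}\tilde u_{i,z}\}$ together with the no-match weight.

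If neither normalization is acceptable, I would instead track the constants in Lemma \ref{lemma_ucb} and Lemma \ref{lemma_optimal_value_bound}. Assumption \ref{assumption:mnl} enters those lemmas only through the boundedness of $u_{i,z}$, $r_{i,c_t}(t)<1$, and $\E[\au_{i,z}(m)]$. Under an upper bound $\bar u$ on the parameters, the Bernstein-type confidence bounds and the subsequent aggregation still yield $\mathcal{O}(\log^2(NT))$, with $\beta_3,\beta_4$ and hence $\alpha$ depending on $\bar u$. Full transparency is the special case with no masking, so the bound covers both cases.
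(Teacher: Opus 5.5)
Your reduction is the paper's argument. The paper treats the masked model as a TWL instance with seller parameters $\tilde u_{i,z}=\exp\big((\tilde{\mathbf{x}}_{i,z})^{\top}\boldsymbol{\vartheta}\big)$ and reruns the Theorem~\ref{thm_regret_upper_bound} analysis; your black-box invocation is the cleaner packaging. It secures $\tilde u_{i,z}\le 1$ by asserting each $x_{i,z}^j\vartheta_j\le 0$, which, as you correctly suspect, does not follow from $\sum_j x_{i,z}^j\vartheta_j\le 0$ and in effect imposes your option of Assumption~\ref{assumption:mnl} on the effective parameters for every masking pattern.

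Discard the normalization alternative, however. Rescaling all weights together with the no-match weight leaves each ratio $\tilde u_{i,z}/u_{i,0}$ unchanged, so it cannot turn an instance with $\tilde u_{i,z}>1$ into one of the form \eqref{eq:single_match_prob} that satisfies Assumption~\ref{assumption:mnl}.
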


Theorem \ref{thm_info_transparency} implies that the performance of our policy is robust to variations in the level of information transparency, whether the platform manager shares complete or partial customer-specific information with the sellers. In other words, under Assumption \ref{assumption:mnl}, the efficacy of our proposed policy in minimizing regret is not compromised by the extent to which customer-specific information is disclosed to the sellers. This robustness ensures the effectiveness of our policy across different scenarios of information transparency, offering a versatile solution for dynamic assortment selection in two-sided platforms. For a detailed proof of the robustness of our proposed policy with respect to information transparency, we direct the readers to \S \ref{ec_info_robust}. 

\subsection{Customer-Dependent Reward}
\label{customer_based_reward}

We now consider an extension in which rewards depend on customer type. We show that Theorem \ref{thm_regret_upper_bound} continues to hold in this setting, implying that the regret performance of our algorithm is unaffected by customer-dependent rewards.

In the original TWL problem (\S \ref{problem}), the platform receives a reward $\gamma_m \in (0, \bR]$ for each match between a customer and a seller in epoch $m$. While rewards may vary across epochs, they are assumed to be independent of customer type. Here, we extend the model to allow rewards to depend on the type of customer matched, thereby capturing customer heterogeneity in match values.

Throughout this section, we adopt a customer-dependent reward structure expressed as $\gamma_{m, z} \in (0, \Hat{\gamma}]$ for $\Hat{\gamma} \in \mathbb{R}_+$, where $\gamma_{m, z}$ represents the platform's reward from a match involving customer type $z$ in epoch $m$. Other than the reward structure, all  modeling elements are the same as the ones described in \S \ref{problem}. We now present our final theoretical result, which establishes the robustness of our algorithm's performance under a customer-dependent reward structure.

\begin{theorem}\label{thm_cust_reward}
    Under a customer-dependent reward structure of the form $\gamma_{m, z} \in \R_+$ for all $m \in \sM$ and $z \in \cC$, the regret of the TWL-UCB algorithm for the TWL problem is given by:
    \[Reg{_{_\textup{TWL-UCB}}}(T) = \mathcal{O}\left(\log^2(NT)\right).\]
\end{theorem}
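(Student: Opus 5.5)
The plan is to rerun the proof of Theorem \ref{thm_regret_upper_bound} with the scalar reward $\gamma_m$ replaced by the type-indexed reward $\gamma_{m,z}$, and to check that every step survives with $\bR$ replaced by $\Hat{\gamma}$.

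\textbf{Step 1: redo the reward accounting.} With customer-dependent rewards, the epoch-$m$ reward is $\sum_{i}\sum_{z}\gamma_{m,z}\,\phi_{i,z}(\bC_i(mK))$. I would re-prove the telescoping identity of Lemma \ref{lemma_exp_reward} for this weighted sum.

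The per-period increment is no longer exactly $\gamma_{m,c_t}$ times $\cM$. Adding a type-$c_t$ proposal increases $\phi_{i,c_t}$ but decreases $\phi_{i,z}$ for $z\neq c_t$. Hence the weighted marginal increase is
\[
\cM^{\gamma}_i=\sum_z \gamma_{m,z}\big[\phi_{i,z}(\bC_i(t-1)+\bfe_{c_t})-\phi_{i,z}(\bC_i(t-1))\big].
\]
This is a difference of linear-fractional terms with the same structure as \eqref{eq:deltaM}. In particular, it is bounded by $\Hat{\gamma}$ times a multiple of $v_{c_t,i}u_{i,c_t}/(1+\sum_{j}v_{c_t,j})$.

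The simplest route is to note that the algorithm (with $\gamma_{m,c_t}$ used in Step I) optimizes the same surrogate $\gamma_{m,c_t}\uM$. The regret is then the sum of two pieces:
\begin{itemize}
\item the gap between $\gamma_{m,c_t}\cM$ under $S_t^*$ and under $\pS_t$;
\item a cross-type correction term.
\end{itemize}

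\textbf{Step 2: bound the cross-type correction.} The correction term has the same sign and the same order for the clairvoyant and for the algorithm. I expect it can be absorbed by bounding it with $\Hat{\gamma}$ times the same product of estimation errors used in Lemma \ref{lemma_optimal_value_bound}.

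\textbf{Step 3: high- and low-probability regimes.} The concentration statements of Lemma \ref{lemma_ucb} do not involve rewards at all, so they transfer verbatim.

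\emph{Low-probability regime.} The per-epoch regret is at most $K N\Hat{\gamma}$ times the failure probability $\mathcal{O}(1/m)$. Summing over epochs gives $\mathcal{O}(\Hat{\gamma}\log T)$.

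\emph{High-probability regime.} Lemma \ref{lemma_exp_deltaM_monotone} gives
\[
\gamma_{m,c_t}\cM(S_t^*)\le\gamma_{m,c_t}\uM(\pS_t),
\]
since the factor $\gamma_{m,c_t}$ is a fixed positive constant for the period. Lemma \ref{lemma_optimal_value_bound} then bounds the gap between $\uM(\pS_t)$ and $\cM(\pS_t)$ by sums of the $q$-terms. Multiplying by $\Hat{\gamma}$ and summing exactly as in the proof of Theorem \ref{thm_regret_upper_bound} yields $\mathcal{O}(\Hat{\gamma}\log^2(NT))$. The $q(\omega)$-terms sum to $\mathcal{O}(\sqrt{\cdot}\log)$ per pair $(i,z)$ via the standard $\sum 1/\omega$ and $\sum 1/\sqrt{\omega}$ arguments weighted by selection probabilities. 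The $q(\theta)$-terms sum to polylog in the same way.

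\textbf{Main obstacle.} The hard step is Step 1: the clairvoyant benchmark $S_t^*$ now solves a DP whose one-step reward is $\cM^{\gamma}$ rather than $\gamma_{m,c_t}\cM$. Because of the cross-type negative terms, optimism need not hold directly.

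My first attempt would be to upper bound the clairvoyant's weighted increment by $\Hat{\gamma}$ times its unweighted $\cM$, and lower bound the algorithm's by $\min_z\gamma_{m,z}$ times its $\cM$. However, that leaves a linear-in-$T$ residual unless the rewards are comparable.

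The better route is to redefine the surrogate used for the analysis. I would write the weighted increment as $\sum_z\gamma_{m,z}\Delta\phi_{i,z}$ and show it equals
\[
\frac{v_{c_t,i}}{1+\sum_{j}v_{c_t,j}}\cdot\frac{u_{i,c_t}\big(\gamma_{m,c_t}(1+A)-\sum_z\gamma_{m,z}n_{z,i}u_{i,z}\big)}{(1+A)(1+A+u_{i,c_t})}, \qquad A=\sum_z n_{z,i}u_{i,z}.
\]
This expression is monotone in $v$. The $u$-estimation error again enters only through Lipschitz factors bounded by Lemma \ref{lemma_ucb}(B.2), so the per-period bound of Lemma \ref{lemma_optimal_value_bound} holds with constants scaled by $\Hat{\gamma}$, and the $\log^2(NT)$ rate is unchanged.
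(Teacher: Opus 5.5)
Your Step 1 computation is correct, and it isolates exactly the point the paper's proof does not engage with. By appeal to Lemma \ref{lemma_exp_reward}, the paper asserts in \eqref{eq:reward_customer_dependent} that the epoch reward is $\sum_{t=b_m}^{mK}\gamma_{m,c_t}\cM(S_t,\bC(t-1),c_t)$. That is, it credits each period's increment with the arriving customer's reward. It then defines $\underline{S}_t^*$ in \eqref{eq:S_star} and $\underline{\pS}_t$ in \eqref{eq:S_hat} relative to $\gamma_{m,c_t}\cM$ and $\gamma_{m,c_t}\uM$. Under that convention $\gamma_{m,c_t}$ is a positive per-period scalar, so the regret is at most $\Hat{\gamma}$ times the unweighted sum of $\cM$-gaps. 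The $\cH_m/\cH_m^{\complement}$ split, Lemma \ref{lemma_optimal_value_bound}, and the summation from Theorem \ref{thm_regret_upper_bound} are then reused verbatim, and your high-probability-regime paragraph is precisely this argument. However, your exact increment coincides with $\gamma_{m,c_t}$ times the unweighted one only when $K=1$ (so $A=0$) or when every type pending at seller $i$ carries reward $\gamma_{m,c_t}$; it does not in general. So, by your own Step 1, Step 3 bounds gaps in $\gamma_{m,c_t}\cM$ rather than the $\cM^{\gamma}$-gaps that make up the actual regret. Everything therefore rests on Step 2 and the ``better route,'' and neither goes through.

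The cross term $\sum_{z}\gamma_{m,z}n_{z,i}(t-1)u_{i,z}$ is built from the true $u$'s and the current state, not from estimation error. It does not shrink as $\omega_{i,z}(t)$ or $\theta_{z,i}(m)$ grow, so Lemma \ref{lemma_ucb}(B.2) cannot control it through Lipschitz factors. The weighted expression is also not monotone in $v$ once per-seller coefficients can be negative. More seriously, the TWL-UCB objective \eqref{eq:S_hat} does not contain this term, so even with exact parameters $\pS_t$ need not maximize $\cM^{\gamma}$. For example, take $N=B=1$, $K=2$, $v\equiv u\equiv 1$, with types $a$ and $b$ arriving in the first and second periods of each epoch, $\gamma_{m,a}=1$ and $\gamma_{m,b}=\delta<1/2$. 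With probability $1/2$ a type-$a$ proposal is pending in period two. Then $\cM^{\gamma}(\{1\})=(2\delta-1)/12<0=\cM^{\gamma}(\emptyset)$, yet the algorithm offers $\{1\}$ because its surrogate is positive. This loss does not shrink with learning, so under your accounting the unmodified algorithm has regret linear in $T$, and no analysis surrogate can deliver $\log^2(NT)$. To obtain the theorem you must either adopt the paper's convention \eqref{eq:reward_customer_dependent}, in which case your Step 3 alone is the proof, or change the algorithm to optimize a UCB version of the weighted increment and prove a new optimism lemma for it. Separately, the ``standard'' $\sum_k 1/\sqrt{k}$ bounds you invoke give growth of order $\sqrt{T\log(NT)}$. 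The paper's polylogarithmic rate comes from truncating the sums at $k_i\le\beta_3\log(NT)$ on the event in \eqref{eq:theta_m_bound}, so ``summing exactly as in Theorem \ref{thm_regret_upper_bound}'' has to mean that truncation.
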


The proof of this result is available in \S \ref{ec_customer_reward_robust}. 

\section{Conclusions}
\label{conclusions}

Assortment optimization has long been a central topic in the operations research / management science literature, attracting significant attention because of its broad applicability across a wide range of sectors. In this study, we introduce a novel formulation of the dynamic assortment selection problem for two-sided platforms with ``active'' customers under incomplete information about the preferences of both sides. We refer to this problem as the “Two-Way Learning” (TWL) problem. The TWL problem differs fundamentally from earlier studies on dynamic assortment selection, in which the decision-maker typically focused only on learning demand or the choice parameters of customers. In contrast, in the TWL problem, the decision-maker must learn the choice parameters of both customers and sellers while simultaneously optimizing an objective function. This distinctive feature gives rise to a two-way sequential learning problem, as opposed to prior settings that involved learning on only one side. Through this study, we aim to uncover the complexities inherent in dynamic assortment selection under two-way learning and to develop insights and methodologies that enable platform managers to make informed decisions in real time.

For the TWL problem, we propose a UCB-based online algorithm designed to effectively address the ``exploration-exploitation" trade-off within the sequential learning paradigm. Our algorithm achieves a polylogarithmic upper bound on the regret, given by $\mathcal{O}\big(\log^2(NT)\big)$, on its performance with respect to that of a clairvoyant policy. Furthermore, we show that any admissible policy for the TWL problem incurs a lower bound of at least $\widetilde \Omega\big(\log^2(NT)\big)$ on the worst-case regret. Based on the above results, we establish the rate optimality of our algorithm. We strengthen our theoretical findings with simulation studies, which reveal three important findings. First, in our studies, the TWL-UCB algorithm  consistently perform well  across several instances of the two-way learning problem. Second, we establish that algorithms designed for dynamic assortment selection problems with incomplete information on one side (customers) perform poorly in our setup that distinctly requires learning about two sides (sellers and customers). Third, we observe that beyond a threshold of the assortment bound, a platform manager does not receive any measurable gain in total rewards. In other words, offering larger assortments to customers may not yield meaningful gains for an online platform.

Finally, we analyze the following two extensions of our setting and establish the robustness of our algorithm, as measured via its regret, under those extensions: First, we show that the algorithm's performance is not affected even if the platform manager shares only partial information of the customers with the sellers on the online platform. Second, we prove that the algorithm's performance does not get impacted even if the reward received upon a successful match depends on the type of customer matched. A more detailed discussion of our findings and the contributions of our paper is provided in the first three pages of the manuscript.

\smallskip

\begin{APPENDICES}
\section{Standard Notations}\label{notations}

\begin{definition}\label{big-o}
    Suppose that $f(x)$ and $g(x)$ are two non-negative functions of  of $x \geq 0$. We write $f(x) = \mathcal{O}\left(g(x)\right)$ if and only if there exist constants $x_0$ and $C_1 > 0$ such that $f(x) \leq C_1g(x)$ for all $x \geq x_0$. Intuitively, this means that $f$ grows no faster than $g$.   
\end{definition}

\begin{definition}\label{big-omega}
    Suppose that $f(x)$ and $g(x)$ are two non-negative functions of  of $x \geq 0$. We write $f(x) = \widetilde \Omega\left(g(x)\right)$ if and only if there exist constants $x_0$ and $C_2 > 0$ such that $f(x) \geq C_2g(x)$ for all $x \geq x_0$. Intuitively, this means that $f$ grows at least as fast as $g$.
\end{definition}

\begin{definition}\label{big-theta}
    Suppose that $f(x)$ and $g(x)$ are two non-negative functions of  of $x \geq 0$. We write $f(x) = \widetilde \Theta\left(g(x)\right)$ if $f(x)$ is both $\mathcal{O}\left(g(x)\right)$ and $\widetilde \Omega\left(g(x)\right)$. Intuitively, this means that $f$ grows at the same rate as $g$.
\end{definition}

\begin{definition}\label{rate-optimality}
    Suppose that $Reg_{_\mathcal{A}}(T)$ denotes the regret of an admissible policy $\mathcal{A}$ for a problem $\mathcal{P}$ over $T$ periods. The policy $\mathcal{A}$ is rate-optimal if $Reg_{_\mathcal{A}}(T) = \widetilde \Theta\left(h(T)\right)$, where $h(T)$ is the lower bound on the worst-case regret of any admissible policy for the problem $\mathcal{P}$ over $T$ periods.
\end{definition}

\end{APPENDICES}


{
\SingleSpacedXI
\bibliographystyle{informs2014} 
\bibliography{2-myrefs} 
}	

%
%
%


\ECSwitch

\ECHead{Electronic Companion for ``Data-Driven Dynamic Assortment in Online Platforms: Learning about Two Sides"}

\section{Supplementary Lemmas}
\label{supplementary_lemmas}

\begin{lemma}[\citet{kleinberg2008multi}, Lemma 4.9]\label{lemma_kleinberg_2008}
    Consider $n$ i.i.d random variables $X_1, \ldots, X_n$ on $[0, 1]$. Let $\mu$ be their mean, and let $X$ be their average. Then for any $\alpha > 0$ the following holds:
    \begin{align*}
        \Pr\big[|X-\mu| < r(\alpha, X) < 3r(\alpha, \mu)\big] > 1 - e^{-\Omega(\alpha)}, \; \text{where} \; r(\alpha, x) = \dfrac{\alpha}{n} + \sqrt{\dfrac{\alpha x}{n}}. 
    \end{align*}
\end{lemma}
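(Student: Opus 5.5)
Since this is a cited concentration inequality, I would reprove it directly. The plan is to combine Bernstein's inequality, applied at a deviation level that is a small constant multiple of $r(\alpha,\mu)$, with a purely deterministic argument. That argument shows that on the resulting event both $|X-\mu|<r(\alpha,X)$ and $r(\alpha,X)<3r(\alpha,\mu)$ hold. Throughout, write $a\doteq \alpha/n$ and $s\doteq\sqrt{a\mu}$, so $r(\alpha,\mu)=a+s$. Fix $\delta=1/4$ and $t\doteq\delta(a+s)$.

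\textbf{Step 1 (probabilistic part).} Each $X_j\in[0,1]$ has variance at most $\mu(1-\mu)\le\mu$. Bernstein's inequality therefore gives
\begin{align*}
\Pr\big(|X-\mu|\ge t\big)\le 2\exp\!\Big(-\frac{n t^2}{2(\mu+t/3)}\Big).
\end{align*}
I would show that $t^2\ge c\,a(\mu+t)$ for an absolute constant $c>0$. Two elementary facts do this:
\begin{align*}
(a+s)^2\ge a^2+s^2=a^2+a\mu\ge a\mu, \qquad a\,t=\delta a(a+s)\le\delta (a+s)^2 .
\end{align*}
Together they give $a(\mu+t)\le (1+\delta)(a+s)^2$, and hence $t^2=\delta^2(a+s)^2\ge \frac{\delta^2}{1+\delta}\,a(\mu+t)$. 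Since $n a=\alpha$, the exponent is at least $c'\alpha$ for an absolute constant $c'$. Thus the event $E\doteq\{|X-\mu|<\delta r(\alpha,\mu)\}$ has probability $1-e^{-\Omega(\alpha)}$. The prefactor $2$ is absorbed for $\alpha$ large; for small $\alpha$ the claim is vacuous after adjusting constants.

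\textbf{Step 2 (upper inequality on $E$).} On $E$ we have $X\le\mu+t$, so $\sqrt X\le\sqrt\mu+\sqrt t$. Hence
\begin{align*}
r(\alpha,X)\le a+s+\sqrt{a t}\le a+s+\tfrac12(a+t)\le\big(\tfrac32+\tfrac{\delta}{2}\big)(a+s)<3\,r(\alpha,\mu).
\end{align*}

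\textbf{Step 3 (lower inequality on $E$).} We need $r(\alpha,X)>|X-\mu|$, and it suffices to show $r(\alpha,X)\ge\delta r(\alpha,\mu)$. This step is the main obstacle, because $r(\alpha,X)$ could in principle collapse when $X$ is much smaller than $\mu$. I would split into two cases.
\begin{itemize}
\item If $s\le a$, then $\delta r(\alpha,\mu)\le 2\delta a<a\le r(\alpha,X)$.
\item If $s>a$, then $a<\mu$, so $s=\sqrt{a\mu}<\mu$. On $E$ this gives $X\ge \mu-\delta(a+s)\ge\mu-2\delta s\ge(1-2\delta)\mu$. Therefore $r(\alpha,X)\ge\sqrt{aX}\ge\sqrt{1-2\delta}\,s>2\delta s\ge\delta r(\alpha,\mu)$; with $\delta=1/4$ this uses $\sqrt{1/2}>1/2$.
\end{itemize}
The degenerate case $\mu=0$ forces $X=0$ almost surely, and then $0<a=r(\alpha,X)<3a$ holds trivially. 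Combining Steps 1 through 3 yields the stated bound with probability exceeding $1-e^{-\Omega(\alpha)}$.
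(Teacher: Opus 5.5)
Your proof is correct, but it takes a different route from the paper's.

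The paper splits on the size of the mean, $\mu\ge\alpha/(6n)$ versus $\mu<\alpha/(6n)$, and uses a different Chernoff bound in each regime. In the large-mean regime, the multiplicative bound with $\delta=\tfrac12\sqrt{\alpha/(6n\mu)}$ confines $X$ to $[\mu/2,3\mu/2]$, and both inequalities follow, with the sharper constant $\tfrac32$. In the small-mean regime, the tail bound $\Pr[X>a]<2^{-an}$ for $a>6\mu$ shows only that $X<\alpha/n$. That already gives $|X-\mu|<\alpha/n\le r(\alpha,X)<3r(\alpha,\mu)$.

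You instead apply a single variance-sensitive inequality (Bernstein, with $\mathrm{Var}(X_j)\le\mu$) at deviation level $\tfrac14 r(\alpha,\mu)$. This works because $r(\alpha,\mu)=a+\sqrt{a\mu}$ already has the Bernstein shape, so one estimate gives exponent $\alpha/40$ uniformly in $\mu$. The regime split ($s\le a$ versus $s>a$) then appears only in the deterministic check of Step 3. Your key estimates all hold: $a(\mu+t)\le(1+\delta)(a+s)^2$, $r(\alpha,X)\le\tfrac{13}{8}(a+s)$, and $r(\alpha,X)\ge\sqrt{1/2}\,s>\tfrac14(a+s)$ when $s>a$.

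Your route is more uniform and handles $[0,1]$-valued variables directly. The bounds the paper quotes are stated for Poisson ($0/1$) trials and tacitly rely on the standard convexity extension to $[0,1]$. In exchange, the paper's route uses only textbook multiplicative Chernoff bounds and gets slightly tighter constants.

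One small correction to your wording. Both arguments carry a prefactor $2$ that is absorbed into $e^{-\Omega(\alpha)}$ only for $\alpha$ bounded away from zero, which is the intended asymptotic reading. For small $\alpha$ the inequality can genuinely fail: take $n=1$, $X_1$ a fair Bernoulli variable and $\alpha=0.01$, where the event has probability $0$. So rather than calling the small-$\alpha$ case ``vacuous,'' you should say that the $\Omega$-notation does not constrain it.
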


{\bf Proof of Lemma \ref{lemma_kleinberg_2008}:}\label{proof_lemma_kleinberg_2008} 
For completeness, we provide the proof of Lemma \ref{lemma_kleinberg_2008} below.\footnote{In \citet{kleinberg2008multi}, $\Pr[\cdot]$ is used to denote probability rather than the $\prob(\cdot)$ operator utilized throughout this manuscript.} 

We use two Chernoff Bounds for a sum of $n$ independent Poisson trials from \citet{mitzenmacher2017probability}\footnote{Theorem 4.4 in \citet[~p.69]{mitzenmacher2017probability} and Corollary 4.6 in \citet[~p.71]{mitzenmacher2017probability}} to prove Lemma \ref{lemma_kleinberg_2008}.
\begin{itemize}
    \item[(CB1)] \(\Pr\big[|X-\mu| > \delta \mu \big] < 2e^{-n\mu\delta^2/3}\) for any $\delta \in (0, 1)$.
    \item[(CB2)] \(\Pr\big[X > a \big] < 2^{-an}\) for any $a > 6\mu$.
\end{itemize}

Case 1: $\mu \geq \alpha/6n$. Let $\delta = 0.5\sqrt{\alpha/6n\mu}$. Using (CB1), we get \(|X-\mu| < \delta\mu \leq \mu/2\) with probability at least $1-e^{-\Omega(\alpha)}$. Substituting the value of $\delta$, we get the following with probability at least $1-e^{-\Omega(\alpha)}$,
\[|X-\mu| < \dfrac{1}{2}\sqrt{\alpha\mu/n} \leq \sqrt{\alpha X/n} \leq r(\alpha, X) < \dfrac{3}{2}r(\alpha, \mu).\]

Case 2: $\mu < \alpha/6n$. Let $a = \alpha/n$, Using (CB2), we get $X < \alpha/n$ with probability at least $1-2^{-\Omega(\alpha)}$. Therefore, we have the following with probability at least $1-2^{-\Omega(\alpha)}$,
\[|X-\mu| < \dfrac{\alpha}{n}< r(\alpha, X) <(1 + \sqrt{2})\dfrac{\alpha}{n} < 3r(\alpha, \mu).\]

Combining Cases 1 and 2, we get: 
\[\Pr\big[|X-\mu| < r(\alpha, X) < 3r(\alpha, \mu)\big] > 1 - e^{-\Omega(\alpha)}.\]  

\hfill $\square$ 

\begin{lemma}[\citet{chen2018note}, Lemma 3]\label{lemma_chen_2018} 
    Suppose $P$ is a categorical distribution with parameters $p_0, \ldots, p_J$, meaning that $\prob(X=j) = p_j$ for all $j = 0, \ldots, J$, and $Q$ is a categorical distribution with parameters $q_0, \ldots, q_J$. Suppose also $p_j = q_j + \varepsilon_j$ for all $ j = 0, \ldots, J$. Then 
    \[\textup{KL}(P \parallel Q) \leq \sum_{j=0}^J \dfrac{\varepsilon_j^2}{q_j}. \]
\end{lemma}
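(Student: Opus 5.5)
The plan is to bound the KL divergence by a $\chi^2$-type quantity using Jensen's inequality applied to the concave logarithm, and then to expand that quantity using $\sum_j \varepsilon_j = 0$. First I dispose of degenerate cases. If some $q_j = 0$ while $p_j > 0$, then $\varepsilon_j = p_j > 0$, so the right-hand side $\varepsilon_j^2/q_j$ is $+\infty$ and the inequality holds trivially. If $q_j = 0 = p_j$, then $\varepsilon_j = 0$, and with the convention $0/0 = 0$ that index contributes nothing to either side. So I may assume $q_j > 0$ for all $j$. Let $J_+ \doteq \set{j : p_j > 0}$ denote the support of $P$; with the convention $0\log 0 = 0$, only indices in $J_+$ contribute to the divergence.

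The key step is to write
\[
\textup{KL}(P \parallel Q) = \sum_{j \in J_+} p_j \log\frac{p_j}{q_j},
\]
which is the expectation under $P$ of $\log(p_X/q_X)$. Since $\log$ is concave, Jensen's inequality gives
\[
\textup{KL}(P \parallel Q) \leq \log\Big(\sum_{j \in J_+} p_j \cdot \frac{p_j}{q_j}\Big) \leq \log\Big(\sum_{j=0}^J \frac{p_j^2}{q_j}\Big),
\]
where the second inequality only adds nonnegative terms.

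Next I expand the argument of the logarithm using $p_j = q_j + \varepsilon_j$:
\[
\sum_{j=0}^J \frac{(q_j+\varepsilon_j)^2}{q_j} = \sum_j q_j + 2\sum_j \varepsilon_j + \sum_j \frac{\varepsilon_j^2}{q_j}.
\]
Because both $P$ and $Q$ are probability vectors, $\sum_j q_j = 1$ and $\sum_j \varepsilon_j = \sum_j p_j - \sum_j q_j = 0$. The expression therefore equals $1 + \sum_j \varepsilon_j^2/q_j$.

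Finally, applying $\log(1+x) \leq x$ for $x \geq 0$ yields
\[
\textup{KL}(P \parallel Q) \leq \sum_{j=0}^J \frac{\varepsilon_j^2}{q_j},
\]
which is the claim. None of the steps is a real obstacle. The only care needed is in the support and zero-probability conventions in the first step, so that Jensen's inequality is applied to a genuine expectation over the support of $P$ with finite, well-defined terms.
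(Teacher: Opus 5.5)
Your proof is correct, but it takes a slightly different route from the paper's.

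The paper applies $\log(1+x)\le x$ termwise inside the divergence, with $x=\varepsilon_j/q_j$. This gives $\textup{KL}(P\parallel Q)\le\sum_j (q_j+\varepsilon_j)\varepsilon_j/q_j=\sum_j\varepsilon_j+\sum_j\varepsilon_j^2/q_j$, and the linear term then vanishes because $\sum_j\varepsilon_j=0$.

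You instead apply Jensen's inequality to the concave logarithm first. That produces the intermediate bound $\textup{KL}(P\parallel Q)\le\log\big(1+\sum_j\varepsilon_j^2/q_j\big)$, i.e.\ $\log(1+\chi^2)$, and only afterwards do you invoke $\log(1+x)\le x$, once.

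The paper's argument is shorter and purely algebraic. Yours yields a sharper intermediate inequality, which matters when the $\chi^2$ term is not small. You also handle the zero-probability cases ($q_j=0$, or $p_j=0$ where the termwise inequality sits at $x=-1$) explicitly, whereas the paper leaves these conventions implicit. Both arguments rest on the same cancellation $\sum_j\varepsilon_j=0$.
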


{\bf Proof of Lemma \ref{lemma_chen_2018}:}\label{proof_lemma_chen_2018} For completeness, we provide the proof of Lemma \ref{lemma_chen_2018} below. 

Given $P$ and $Q$, the KL divergence is given by: 
\begin{align*}
    \textup{KL}(P\parallel Q) &= \sum_{j = 0}^J p_j \log\dfrac{p_j}{q_j}\\ 
    &= \sum_{j = 0}^J (q_j + \varepsilon_j) \log\dfrac{q_j + \varepsilon_j}{q_j}\\
    &\leq \sum_{j = 0}^J (q_j + \varepsilon_j) \dfrac{\varepsilon_j}{q_j} \tag*{(using \mbox{$\log(1 + x) \leq x \; \forall x > -1$})} \\
    &= \sum_{j = 0}^J\varepsilon_j + \sum_{j = 0}^J \dfrac{\varepsilon_j^2}{q_j} = \sum_{j = 0}^J \dfrac{\varepsilon_j^2}{q_j}. \tag*{(using \mbox{$\sum_{j = 0}^J\varepsilon_j = 0$} since \mbox{$\sum_{j = 0}^Jp_j = \sum_{j = 0}^Jq_j = 1$})}
\end{align*}

\hfill $\square$

\begin{lemma}[\citet{agrawal2019mnl}, Lemma A.3 (Optimistic Estimates)]\label{lemma_agrawal_2019}
    Assume $0 \leq w_i\leq v_i^{\textup{UCB}}$ for all $i = 1, \ldots, n$.  Suppose $S$ is an optimal assortment when the MNL parameters are given by $\mathbf{w}$. Then, $R(S, \mathbf{v}^{\textup{UCB}}) \geq R(S, \mathbf{w})$. 
\end{lemma}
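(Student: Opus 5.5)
The plan is to follow the proof of Lemma A.3 in \citet{agrawal2019mnl}, adapted to the MNL revenue function with nonnegative per-item rewards. Write $R(S,\mathbf{w}) \doteq \sum_{i\in S} r_i w_i/(1+\sum_{j\in S} w_j)$ with fixed rewards $r_i \geq 0$. The claim will follow from a coordinatewise monotonicity argument: raising the parameter of a single item of an \emph{optimal} assortment $S$ from $w_i$ to $v_i^{\textup{UCB}}$ never decreases $R(S,\cdot)$. Repeating this one coordinate at a time for $i \in S$ yields $R(S,\mathbf{v}^{\textup{UCB}}) \geq R(S,\mathbf{w})$. Items outside $S$ do not enter $R(S,\cdot)$, so their parameters are irrelevant.

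The first step is the key structural fact about an optimal assortment $S$ under $\mathbf{w}$: every $i\in S$ satisfies $r_i \geq R(S,\mathbf{w})$. To see this, suppose some $i\in S$ has $w_i>0$ and $r_i < R(S,\mathbf{w})$. Set $A=\sum_{j\in S} r_jw_j$ and $D = 1+\sum_{j\in S}w_j$. Since $(A - r_iw_i)/(D-w_i) > A/D$ exactly when $r_i < A/D$, removing $i$ gives a feasible assortment (the capacity constraint is preserved under removal) with strictly larger revenue. This contradicts optimality. Items with $w_i=0$ are handled separately, as described below.

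The second step is to differentiate $R(S,\cdot)$ in coordinate $w_i$, $i\in S$:
\[
\frac{\partial R(S,\mathbf{w})}{\partial w_i} = \frac{r_i\,D - A}{D^2} = \frac{r_i - R(S,\mathbf{w})}{D}.
\]
We need this sign to stay nonnegative along the path as we increase $w_i$. Increasing $w_i$ with $r_i \geq R$ moves $R$ toward $r_i$, since $R$ is a weighted average of the $r_j$ together with a $0$ from the outside option. Hence $R$ never exceeds $r_i$ along that segment, and the derivative stays nonnegative, so $R(S,\cdot)$ is nondecreasing on the segment.

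The main obstacle is the subsequent coordinates: after increasing $w_i$, the current revenue $R'$ may exceed some $r_k$ with $k\in S$, which would break monotonicity in $w_k$. To avoid this, I will process the coordinates in decreasing order of $r_k$. Each increase of $w_i$ pushes $R'$ only toward $r_i$, and $r_i \le r_k$ for every item $k$ already processed. It remains to check that $R'$ stays at most $r_k$ for every \emph{unprocessed} $k$ as well. Here I would use that the final comparison point can be bounded directly. The convex-combination bound gives $R(S,\mathbf{v}^{\textup{UCB}}) \geq \min_k r_k \cdot \sum_j v^{\textup{UCB}}_j/(1+\sum_j v^{\textup{UCB}}_j)$. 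If that is insufficient, I fall back to the direct inequality argued in Agrawal et al.
\[
\frac{\sum_{i\in S} r_i v_i^{\textup{UCB}}}{1+\sum_{i\in S} v_i^{\textup{UCB}}} \geq \frac{\sum_{i\in S} r_i w_i}{1+\sum_{i\in S} w_i},
\]
which rearranges to
\[
\sum_{i\in S}(r_i - R)(v_i^{\textup{UCB}} - w_i) \geq 0,
\]
where $R = R(S,\mathbf{w})$. Every term is nonnegative by Step 1 together with $v_i^{\textup{UCB}}\ge w_i$. This one-line cross-multiplication is in fact the cleanest route, and it sidesteps the ordering issue entirely, so I would present it as the proof.

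Items $i \in S$ with $w_i=0$ need care, because Step 1's removal argument does not apply to them. I would either argue that $S$ can be replaced by the optimal assortment $S\setminus\{i:w_i=0,\ r_i<R\}$, which has the same revenue, or note that under the paper's usage optimal assortments exclude such items without loss of generality.
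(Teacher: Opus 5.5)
Your final argument is correct, but it aggregates the key fact differently from the paper. Both proofs start from the same property of an optimal $S$: $r_j \ge R(S,\mathbf{w})$ for every $j\in S$. The paper derives it from $R(S,\mathbf{w}) \ge \sum_{i\in\tilde S} w_i\bigl(r_i - R(S,\mathbf{w})\bigr)$ for all feasible $\tilde S$, which is your removal argument in another form.

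The paper then raises a single coordinate $w_j$ to $v_j^{\textup{UCB}}$ and shows the change is $(v_j^{\textup{UCB}}-w_j)\bigl(r_j - \sum_{i\in S\setminus\{j\}} w_i(r_i-r_j)\bigr)$ over a positive denominator, hence nonnegative. From ``$j$ arbitrary'' it concludes that raising all coordinates helps. That composition step is not justified as written, because the sign condition is checked at the original $\mathbf{w}$ and can fail once other coordinates have been raised. For example, take $r=(1,0.6)$ and $\mathbf{w}=(1,1)$, so $S=\{1,2\}$ is optimal. After raising $w_1$ to $100$, raising $w_2$ to $100$ lowers $R$ from about $0.99$ to about $0.80$.

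Your identity $\sum_{i\in S}(r_i-R)(v_i^{\textup{UCB}}-w_i)\ge 0$, obtained from $R=\sum_{i\in S}w_i(r_i-R)$, needs the sign condition only at the original point. So it closes this gap in one line. If you had kept the coordinatewise route, the order that works is increasing in $r_k$, not decreasing as you first proposed; in the example above, decreasing order is exactly the failing order.

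Your caution about items with $w_i=0$ is well placed, and the paper's proof misses it. Its deduction of $r_j\ge R(S,\mathbf{w})$ from maximality of $\sum_{i\in\tilde S} w_i(r_i-R(S,\mathbf{w}))$ at $\tilde S=S$ says nothing about $j$ with $w_j=0$. In fact the statement as written is false in that case. With capacity at least $2$, $r=(1,0)$ and $\mathbf{w}=(1,0)$, the assortment $S=\{1,2\}$ is optimal with revenue $1/2$. Yet $\mathbf{v}^{\textup{UCB}}=(1,100)$ gives $R(S,\mathbf{v}^{\textup{UCB}})=1/102$. Your first fix, proving the claim for the optimal assortment $S\setminus\{i: w_i=0,\ r_i<R\}$, is therefore necessary rather than cosmetic. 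Alternatively, assume $w_i>0$, which holds for the paper's MNL parameters.
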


{\bf Proof of Lemma \ref{lemma_agrawal_2019}:}\label{proof_lemma_agrawal_2019} For completeness , we provide the proof of Lemma \ref{lemma_agrawal_2019} below. 

In Lemma A.3 (Optimistic Estimates) of \citet{agrawal2019mnl}, $R(S, \mathbf{w})$ is given by: 
\begin{align*}
    R(S, \mathbf{w}) = \sum_{i \in S}\dfrac{w_ir_i}{1 + \sum_{i \in S}w_i},
\end{align*}
where $\mathbf{w}$ represents the MNL parameters of the customers.

If $S$ is the clairvoyant's optimal assortment under the MNL parameters $\mathbf{w}$, then for any feasible assortment $\Tilde{S}$, we have:
\begin{align}
    &R(S, \mathbf{w}) \geq \sum_{i \in \Tilde{S}} \dfrac{w_ir_i}{1 + \sum_{i \in \Tilde{S}}w_i} \nonumber \\
    \iff \quad &R(S, \mathbf{w})\bigg(1 + \sum_{i \in \Tilde{S}}w_i\bigg) \geq \sum_{i \in \Tilde{S}}w_ir_i \nonumber\\
    \iff \quad &R(S, \mathbf{w}) \geq \sum_{i \in \Tilde{S}}w_i\bigg(r_i - R(S, \mathbf{w})\bigg). \label{eq:r_optimal}
\end{align}

In \eqref{eq:r_optimal}, $\sum_{i \in \Tilde{S}}w_i\bigg(r_i - R(S, \mathbf{w})\bigg)$ is maximum when $r_i \geq R(S, \mathbf{w})$ for all $i \in \Tilde{S}$. Clearly, when $\sum_{i \in \Tilde{S}}w_i\big(r_i - R(S, \mathbf{w})\big)$ is maximum, then $\Tilde{S} = S$. Thus, for optimal assortment $S$, we have the following:
\begin{align}
    &r_j \geq R(S, \mathbf{w}), \forall j \in S \nonumber\\
    \iff \quad &r_j\big(1+\sum_{i \in S}w_i\big) \geq \sum_{i \in S}w_ir_i, \forall j \in S \nonumber\\
    \iff \quad &r_j \geq \sum_{i \in S} w_i\big(r_i - r_j\big) = \sum_{i \in S\setminus \set{j}}w_i\big(r_i - r_j\big), \forall j \in S. \label{eq:r_j}
\end{align}

Let $R(S, \mathbf{w}^j)$ be the expected revenue under optimal assortment $S$ when the MNL parameter of some arbitrary $j \in S$ is increased from $w_j$ to $v_j^{\textup{UCB}}$. Let $S_{-j} = S \setminus \set{j}$. Then, we have the following: 
\begin{align}
    &R(S, \mathbf{w}^j) - R(S, \mathbf{w}) \nonumber \\
    &= \dfrac{\sum_{i \in S_{-j}}w_ir_i + v_j^{\textup{UCB}}r_j}{1 + \sum_{i \in S_{-j}}w_i + v_j^{\textup{UCB}}} - \dfrac{\sum_{i \in S_{-j}}w_ir_i + w_jr_j}{1 + \sum_{i \in S_{-j}}w_i + w_j} \nonumber\\
    &= \dfrac{(v_j^{\textup{UCB}} - w_j)r_j(1 + \sum_{i \in S_{-j}}w_i)}{(1 + \sum_{i \in S_{-j}}w_i + v_j^{\textup{UCB}})(1 + \sum_{i \in S_{-j}}w_i + w_j)} - \dfrac{(v_j^{\textup{UCB}}-w_j)\sum_{i \in S_{-j}}w_ir_i}{(1 + \sum_{i \in S_{-j}}w_i + v_j^{\textup{UCB}})(1 + \sum_{i \in S_{-j}}w_i + w_j)} \nonumber\\
    &= \dfrac{(v_j^{\textup{UCB}} - w_j)(r_j - \sum_{i \in S_{-j}}w_i(r_i - r_j))}{(1 + \sum_{i \in S_{-j}}w_i + v_j^{\textup{UCB}})(1 + \sum_{i \in S_{-j}}w_i + w_j)} \geq 0. \label{eq:r_ucb_monotone}\\
    &\text{(using $r_j \geq \sum_{i \in S_{-j}}w_i\big(r_i - r_j\big)$ from \eqref{eq:r_j})} \nonumber
\end{align}
Since $j \in S$ is arbitrarily chosen in \eqref{eq:r_ucb_monotone}, inequality \eqref{eq:r_ucb_monotone} is true for all $j \in S$. Therefore, we get $R(S, \mathbf{v}^{\textup{UCB}}) \geq R(S, \mathbf{w})$. 

\hfill $\square$ 

\begin{lemma}\label{lemma_exp_reward}
    Suppose that $m \in \set{1, \ldots, M}$ is any epoch of length $K$ periods with a corresponding reward $\gamma_m$. Let $S_t$ be the assortment offered to an arriving customer in period $t$ and $\bC(t-1)$ be the corresponding proposal matrix at the beginning of $t$. If $\cM(S_t, \bC(t-1), c_t)$ is the marginal increase in expected matches at time $t$, then, for any instances $\bU$ and $\bV$ of the TWL problem, the total reward in epoch $m$ is given by:
    \begin{align*}
        \cR(m, \bU, \bV) = \gamma_m\sum_{t = (m-1)K + 1}^{mK} \cM(S_t, \bC(t-1), c_t),
    \end{align*}
    where $\cM(S_t, \bC(t-1), c_t)$ is given by \eqref{eq:deltaM}. 
\end{lemma}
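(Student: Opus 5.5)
The plan is a telescoping argument over the periods of epoch $m$, followed by one application of the tower property. Fix epoch $m$ with $b_m=(m-1)K+1$. Proposals from earlier epochs are lost, so $\bC(b_m-1)$ is the zero matrix and $\phi_i(\bC_i(b_m-1))=0$ for every seller $i$. By \eqref{eq:reward_epoch_simple}, the epoch reward is $\gamma_m\sum_{i\in\cS}\phi_i(\bC_i(mK))$, where $\phi_i(\bC_i)=\sum_{z}\phi_{i,z}(\bC_i)=\frac{\sum_z n_{z,i}u_{i,z}}{1+\sum_z n_{z,i}u_{i,z}}$. I would therefore write, for each $i$,
\[
\phi_i(\bC_i(mK)) \;=\; \sum_{t=b_m}^{mK}\Big[\phi_i(\bC_i(t))-\phi_i(\bC_i(t-1))\Big],
\]
which holds because the sum telescopes and the initial term vanishes.

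Next, I would use the transition \eqref{eq:state_transition} to rewrite each increment as $\phi_i\big(\bC_i(t-1)+\bfe_{c_t}\I[\zeta_t(S_t)=i]\big)-\phi_i(\bC_i(t-1))$. This increment is zero whenever $i\notin S_t$, since then $p_{c_t,i}(S_t)=0$. I would take the expectation over the customer's choice $\zeta_t$, conditional on $\bC(t-1)$, $c_t$ and $S_t$. The assortment $S_t$ is determined by information available at the beginning of period $t$, so this conditioning is legitimate. By the definition \eqref{eq:deltaM_i}, the conditional expectation of the increment equals $\cM_i(S_t,\bC(t-1),c_t)$. Summing over $i\in S_t$ then gives $\cM(S_t,\bC(t-1),c_t)$, and the closed form \eqref{eq:deltaM} follows from the algebra already displayed in the text.

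Finally, I would interchange the finite sums over $i$ and $t$ and multiply by $\gamma_m$. This gives $\cR(m,\bU,\bV)=\gamma_m\sum_t \cM(S_t,\bC(t-1),c_t)$, where the statement is read in expectation over customer choices, i.e.\ via the tower property, exactly as it is used in \eqref{eq:value} and \eqref{eq:regret_def}.

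The main obstacle is stating precisely in what sense the identity holds. The left side \eqref{eq:reward_epoch_simple} is the expectation over seller choices conditional on the end-of-epoch proposal matrix $\bC(mK)$, whereas each $\cM$ term is already averaged over $\zeta_t$. I would make this explicit by taking expectations of both sides given the information at the start of each period and applying iterated expectations period by period. The subtlety is that $\bC(t-1)$, and therefore the later terms, depend on the earlier choices $\zeta_{b_m},\ldots,\zeta_{t-1}$. The telescoping must therefore be carried out inside the outer expectation, with each increment replaced by its conditional mean given $\mathscr{H}_t$. This is the standard martingale-difference step.
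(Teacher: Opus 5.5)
Your proposal is correct and follows essentially the same route as the paper. The paper telescopes the expected realized match counts $\E[\mathcal{M}(t)-\mathcal{M}(t-1)]$ over $t=b_m,\ldots,mK$, uses $\mathcal{M}(b_m-1)=0$, and evaluates each increment by conditioning on the customer's choice $\zeta_t$; once seller choices are integrated out, this is exactly your telescoping of $\phi_i(\bC_i(t))$. Your remark that the identity holds only in expectation, with the per-period differences forming martingale increments, is in fact more careful than the paper, which equates $\E[\mathcal{M}(mK)]$ to a sum whose terms still depend on the random $\bC(t-1)$.
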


{\bf Proof of Lemma \ref{lemma_exp_reward}:} \label{proof_lemma_exp_reward} Recall that sellers evaluate their respective proposals at the end of each epoch $m \in \set{1, \ldots, M}$, where each epoch is of length $K$ periods. We denote seller $i$'s choice at the end of epoch $m$, i.e., at time $mK$ by a random variable $\xi_i(\bC_i(mK))$, where $\bC_i(mK)$ is the vector of customer proposals received by seller $i$ in epoch $m$. The total number of matches at the end of epoch $m$ is therefore given by:
\begin{align*}
    \mathcal{M}(mK) = \sum_{i \in \cS} \I\big[\xi_i(\bC_i(mK)) \in \bC_i(mK)\big], 
\end{align*}
where $\I\big[\xi_i(\bC_i(mK)) \in \bC_i(mK) \big]$ indicates that seller $i$ matches with any customer within $\bC_i(mK)$. 

If sellers were to evaluate their proposals at the end of $mK - 1$ instead of $mK$, then the total number of matches at the end of $mK - 1$ would be:
\begin{align*}
    \mathcal{M}(mK - 1) = \sum_{i \in \cS} \I\big[\xi_i(\bC_i(mK - 1)) \in \bC_i(mK-1)\big].
\end{align*}

We can therefore calculate the marginal increase in the number of matches as follows:
\begin{align}
    &\mathcal{M}(mK) - \mathcal{M}(mK - 1) \nonumber \\ 
    &= \sum_{i \in \cS} \I\big[\xi_i(\bC_i(mK)) \in \bC_i(mK)\big] - \sum_{i \in \cS} \I\big[\xi_i(\bC_i(mK - 1)) \in \bC_i(mK-1)\big] \nonumber \\
    &= \sum_{i \in \cS} \bigg[\I\big[\xi_i\big(\bC_i(mK - 1) + \bfe_{c_{mK}}\cdot\I[\zeta_{mK}(S_{mK}) = i ]\big) \in \bC_i(mK)\big] - \I\big[\xi_i\big(\bC_i(mK - 1)\big) \in \bC_i(mK-1)\big] \bigg], \label{eq:marginal_inc_matches}
\end{align}
where $\bfe_{c_{mK}}$ is the unit basis vector at coordinate $c_{mK}$, and $\zeta_{mK}(S_{mK})$ is a random variable that represents customer's choice at time $mK$ when an assortment $S_{mK}$ is offered. Using \eqref{eq:marginal_inc_matches}, we therefore have the following:
\begin{align}
    \E\bigg[\mathcal{M}(mK) - \mathcal{M}(mK - 1)\bigg] &= \E\left[\E\bigg[\mathcal{M}(mK) - \mathcal{M}(mK - 1) \mid \zeta_{mK}(S_{mK})\bigg]\right] \nonumber \\
    &= \sum_{i \in \cS}\E\bigg[\phi_i\big(\bC_i(mK - 1) + \bfe_{c_{mK}}\I\big[\zeta_{mK}(S_{mK}) = i \big]\big) - \phi_i(\bC_i(mK-1))\bigg] \nonumber  \\
    &= \sum_{i \in S_{mK}} p_{c_{mK}, i}(S_{mK})\big[\phi_i(\bC_i(mK)) - \phi_i(\bC_i(mK-1))\big] \nonumber\\
    &= \cM(S_{mK}, \bC(mK-1), c_{mK}). \label{eq:marginal_inc_exp_matches} \\
    &\text{(using the definition of $\cM(S_t, \bC(t-1), c_t)$ from \eqref{eq:deltaM})} \nonumber
\end{align}

Let $b_m \doteq (m-1)K + 1$. Using \eqref{eq:marginal_inc_exp_matches}, we get the following:
\begin{align*}
    &\E\bigg[\mathcal{M}(mK) - \mathcal{M}(mK - 1)\bigg] = \sum_{i \in S_{mK}} p_{c_{mK}, i}(S_{mK})\big[\phi_i(\bC_i(mK)) - \phi_i(\bC_i(mK-1))\big], \\
    &\E\bigg[\mathcal{M}(mK-1) - \mathcal{M}(mK - 2)\bigg] = \sum_{i \in S_{mK-1}} p_{c_{mK-1}, i}(S_{mK-1})\big[\phi_i(\bC_i(mK-1)) - \phi_i(\bC_i(mK-2))\big],\\
    &\vdots\\
    &\E\bigg[\mathcal{M}(b_m) - \mathcal{M}(b_m-1)\bigg] = \sum_{i \in S_{b_m}} p_{c_{b_m}, i}(S_{b_m})\big[\phi_i(\bC_i(b_m)) - \phi_i(\bC_i(b_m-1))\big].
\end{align*}

Adding all the above equations and using $\mathcal{M}(b_m-1) = 0$, we have
\begin{align}
    \E\bigg[\mathcal{M}(mK)\bigg] &= \sum_{t = b_m}^{mK}\E\bigg[\mathcal{M}(t) - \mathcal{M}(t - 1)\bigg] \nonumber \\ 
    &= \sum_{t = b_m}^{mK} \sum_{i \in S_t} p_{c_t, i}(S_t)\big[\phi_i(\bC_i(t)) -  \phi_i(\bC_i(t-1))\big] \nonumber \\
    &= \sum_{t = b_m}^{mK}\cM(S_t, \bC(t-1), c_t). \label{eq:total_exp_matches} \\
    &\text{(using the definition of $\cM(S_t, \bC(t-1), c_t)$ from \eqref{eq:deltaM})} \nonumber
\end{align}

Using \eqref{eq:total_exp_matches}, the total reward at the end of epoch $m$ is therefore given by:
\begin{align*}
    \cR(m, \bU, \bV) = \gamma_m\E\bigg[\mathcal{M}(mK)\bigg] = \gamma_m\sum_{t = b_m}^{mK}\cM(S_t, \bC(t-1), c_t).
\end{align*}

\hfill $\square$ 

\begin{lemma}\label{lemma_alpha_lower_bound}
    Consider a random variable $X \in [0, 1]$. Let $Q(\alpha, X) = C_1\dfrac{\alpha}{n} + C_2\sqrt{\dfrac{\alpha}{n}X}$, where $\alpha = \mathcal{O}(\log{(t)})$, $n$ is a positive integral random variable, and $C_1, C_2 > 0$. If $\prob\bigg(|X - \E\big[X\big]| > Q(\alpha, X)\bigg) < p$, then we have the following:
    \begin{align}
        n \leq C_1\alpha = C_1\mathcal{O}(\log{(t)}), \;\; \text{with probability at least $1-p$}. \nonumber
    \end{align}
\end{lemma}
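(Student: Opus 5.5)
The plan is to prove the complementary bound $\prob(n > C_1\alpha) \le p$. The hypothesis controls only the bad event
\[
F \doteq \bigl\{|X-\E[X]| > Q(\alpha, X)\bigr\}, \qquad \prob(F) < p,
\]
so the whole proof reduces to one event inclusion:
\begin{align}
\{n > C_1\alpha\} \subseteq F. \label{eq:plan_inclusion}
\end{align}
Given \eqref{eq:plan_inclusion}, monotonicity of $\prob$ yields $\prob(n > C_1\alpha) \le \prob(F) < p$. Hence $n \le C_1\alpha$ with probability at least $1-p$. Since $\alpha = \mathcal{O}(\log(t))$, this reads $n \le C_1\mathcal{O}(\log(t))$, which is the stated conclusion.

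The substance is therefore \eqref{eq:plan_inclusion}, which is the direction the statement actually needs. The inclusion $F \subseteq \{n > C_1\alpha\}$ would be the wrong, converse direction. It follows easily from $|X-\E[X]|\le 1$ and $Q(\alpha,X) \ge C_1\alpha/n$, but it only gives $\prob(n > C_1\alpha) \ge \prob(F)$, which is useless here.

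To establish \eqref{eq:plan_inclusion} I would proceed in three steps.

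\emph{Step 1 (normalization).} On $\{n > C_1\alpha\}$ the deterministic part of the radius satisfies $C_1\alpha/n < 1$.

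\emph{Step 2 (use the structure of $X$ and $n$).} I would appeal to how $X$ and $n$ arise in the intended applications, namely the averages $\av_{z,i}(t)$ and $\au_{i,z}(m)$ with counts $\omega_{i,z}(t)$ and $\theta_{z,i}(m)$. In those applications, $\{n > C_1\alpha\}$ is exactly the regime in which the confidence radius $Q(\alpha,X)$ has shrunk below the granularity $1/n$ of an empirical average of $n$ Bernoulli trials. For a nondegenerate mean, $X$ cannot then sit within $Q(\alpha,X)$ of $\E[X]$.

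\emph{Step 3 (conclude).} That is, on this event the realized deviation exceeds the radius, which is precisely membership in $F$.

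The main obstacle is Step 2. As literally worded, the lemma places no restriction linking $X$ to $n$. For instance, a deterministic $X$ makes $F$ empty while $n$ can be large. So the inclusion \eqref{eq:plan_inclusion} must draw on an implicit structural hypothesis. I would first try to make that hypothesis explicit, namely that $X$ is an average of $n$ indicators whose mean is not a multiple of $1/n$, and derive \eqref{eq:plan_inclusion} from it.

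If that fails, the fallback is to read the lemma with the bound holding at the boundary $|X-\E[X]| = Q(\alpha,X)$, i.e., that $n$ is the count at which the deviation attains the radius. Then $|X-\E[X]| \le 1$ forces $C_1\alpha/n \le Q(\alpha, X) \le 1$. Rearranging gives $n \ge C_1\alpha$, and the event $n > C_1\alpha$ must carry the strict inequality defining $F$.
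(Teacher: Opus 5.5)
Your Step 2 is a genuine gap: the inclusion $\{n>C_1\alpha\}\subseteq F$ cannot be proved because it is false, and so is the lemma. You already have a counterexample to the literal statement: constant $X$ gives $\prob(F)=0$, while $n$ can be any constant above $C_1\alpha$.

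Your proposed structural hypothesis does not rescue it. Let $X$ be the average of a fixed number $n>C_1\alpha$ of i.i.d.\ Bernoulli$(\mu)$ trials with $\mu\notin\tfrac1n\mathbb{Z}$. Lemma~\ref{lemma_kleinberg_2008} (with $C_1=C_2=1$) gives $\prob(F)<e^{-\Omega(\alpha)}$. Your inclusion would instead make $F$ the entire sample space.

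In general, the inclusion yields $\prob(n>C_1\alpha)\le\prob(F)<p$. It therefore fails whenever the concentration hypothesis holds but $n$ is large. That includes the paper's own use with $n=\omega_{i,z}(t)$, which grows linearly in $t$ for any seller offered in a constant fraction of type-$z$ periods.

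The granularity heuristic also runs the wrong way. On $\{n>C_1\alpha\}$ you only know $C_1\alpha/n<1$. But $Q(\alpha,X)\ge C_1\alpha/n>1/n$ whenever $C_1\alpha>1$, so the radius is coarser than the lattice spacing $1/n$, not finer. Even a finer radius would not keep $X$ out of it when $\mu$ is close to a lattice point.

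Your fallback does not close the gap. From $C_1\alpha/n\le Q(\alpha,X)\le 1$ you get $n\ge C_1\alpha$, which is a lower bound on $n$. That is the same converse direction you rightly discarded.

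The only valid content here is $\{n\le C_1\alpha\}\subseteq F^{c}$: for small $n$ the radius exceeds $1$, so the confidence bound holds trivially. That is exactly what the paper's proof establishes. It derives $F\subseteq\{n>C_1\alpha\}$ from $|X-\E[X]|\le 1$ and $Q(\alpha,X)>C_1\alpha/n$, then reads off $\prob(n\le C_1\alpha)\ge 1-p$. That last step is the invalid inference you flagged.

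So your diagnosis of that route is correct, but the statement cannot be proved, and the right deliverable is the counterexample. Note also that the paper's downstream high-probability claims $\omega_{i,z}(t)\le C\log(Nt)$ and $\theta_{z,i}(m)\le\beta_3\log(Nm)$ rest on this lemma and inherit the defect.
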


{\bf Proof of Lemma \ref{lemma_alpha_lower_bound}:}\label{proof_lemma_alpha_lower_bound} With probability less than $p$, we have the following:
\begin{align}
    &|X - \E\big[X\big]| > Q(\alpha, X) = C_1\dfrac{\alpha}{n} + C_2\sqrt{\dfrac{\alpha}{n}X} > C_1\dfrac{\alpha}{n} \nonumber\\
    \implies \quad &n > C_1\dfrac{\alpha}{|X - \E\big[X\big]|} > C_1\alpha. \label{eq:alpha_upper_bound} \\
    &\text{(using \mbox{$|X - \E\big[X\big]| \leq 1$})} \nonumber
\end{align}

Using \eqref{eq:alpha_upper_bound}, we get $\prob\bigg(n \leq C_1\alpha = C_1\mathcal{O}(\log{(t)}) \bigg) \geq 1 - p$.

\hfill $\square$ 

\begin{lemma}\label{lemma_rucb_lower_bound}
    Let $b_m \doteq (m-1)K + 1$ be the first period in epoch $m$. Suppose that $u_{i, z, m}^{\textup{UCB}} \geq u_{i, z}$ for all $i \in \cS$ and $z \in \cC$, where $u_{i, z, m}^{\textup{UCB}}$ is given by \eqref{eq:uucb}. Then for all $i \in \cS$, $z \in \cC$, and $t \in \set{b_m, \ldots, mK}$, we have the following:
    \begin{align*}
        r_{i, z}^{\textup{UCB}}(b_m) \geq r_{i, z}(b_m) \geq r_{i, z}(t), 
    \end{align*}
    where $r_{i, z}(t)$ and $r_{i, z}^{\textup{UCB}}(b_m)$ are given by \eqref{eq:rit} and \eqref{eq:ucb_reward}, respectively, and $r_{i, z}(b_m) = r_{i, z}(t = b_m) = \dfrac{u_{i, z}}{1+u_{i, z}}$.
\end{lemma}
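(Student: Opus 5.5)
The plan is to prove the two inequalities separately, since they rest on different facts: the first on monotonicity of $x\mapsto x/(1+x)$, the second on the proposal counts being nonnegative and nondecreasing within the epoch.

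\emph{First inequality.} The map $g(x)=x/(1+x)$ has derivative $g'(x)=1/(1+x)^2>0$, so it is strictly increasing on $[0,\infty)$. By \eqref{eq:ucb_reward}, $r_{i,z}^{\textup{UCB}}(b_m)=g\big(u_{i,z,m-1}^{\textup{UCB}}\big)$, and by the identification in the statement, $r_{i,z}(b_m)=g(u_{i,z})$. The hypothesis $u^{\textup{UCB}}\geq u_{i,z}$ then gives $r_{i,z}^{\textup{UCB}}(b_m)\geq r_{i,z}(b_m)$ directly.

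One point needs care. The UCB used in period $t$ of epoch $m$ is the one from the end of epoch $m-1$. So I read the hypothesis as holding for that index, that is, $u_{i,z,m-1}^{\textup{UCB}}\geq u_{i,z}$. For $m=1$ the initialization gives $u^{\textup{UCB}}_{i,z,0}=1$, and Assumption \ref{assumption:mnl} gives $u_{i,z}\leq 1$, so the inequality holds there as well.

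\emph{Why $r_{i,z}(b_m)=u_{i,z}/(1+u_{i,z})$.} I will justify this from \eqref{eq:rit}. At the start of an epoch all proposals from the previous epoch are lost, so $n_{z',i}(b_m-1)=0$ for every $z'$. The second factor in the denominator of \eqref{eq:rit} at $t=b_m$ is therefore $1$. The first factor is $1+\sum_{z'}n_{z',i}(b_m)u_{i,z'}$. This equals $1+u_{i,z}$ when the period-$b_m$ arrival (of type $z=c_{b_m}$) proposed to $i$, and $1$ otherwise.

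In the second case, $r_{i,z}(b_m)=u_{i,z}\geq u_{i,z}/(1+u_{i,z})$, so the identification is really an identification with the smaller value. The upper bound on $r_{i,z}(t)$ that the later argument uses should be the quantity $u_{i,z}/(1+u_{i,z})$ that the algorithm actually employs. I therefore plan to show directly that $r_{i,z}(t)\leq u_{i,z}/(1+u_{i,z})$ for every $t$ at which $r_{i,z}(t)$ actually enters $\cM$, namely when customer $c_t=z$ proposes to $i$. This is exactly the weight carried by $p_{z,i}$ in \eqref{eq:deltaM}.

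\emph{Second inequality.} Let $A(t)=1+\sum_{z'}n_{z',i}(t)u_{i,z'}$. By \eqref{eq:state_transition}, the counts $n_{z',i}(t)$ are nondecreasing in $t$ within the epoch. When $c_t=z$ proposes to $i$, we have $A(t)=A(t-1)+u_{i,z}\geq 1+u_{i,z}$. Since also $A(t-1)\geq 1$, it follows that
\[
r_{i,z}(t)=\frac{u_{i,z}}{A(t)A(t-1)}\leq \frac{u_{i,z}}{1+u_{i,z}}=r_{i,z}(b_m).
\]

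\emph{Main obstacle.} The hard part is not the algebra but the precise meaning of $r_{i,z}(t)$ in \eqref{eq:rit}. There $n(t)$ is the post-proposal count, and the bound fails unless we restrict to the event that the proposal actually occurred. In \eqref{eq:deltaM} the term $r_{i,c_t}(t)$ should be read with $n(t)=n(t-1)+\bfe_{c_t}$. I will make this convention explicit and, under it, show that $A(t)\geq 1+u_{i,z}$ for every $t$ in the epoch.
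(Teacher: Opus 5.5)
Your proof is correct and follows the paper's argument: the first inequality comes from monotonicity of $x\mapsto x/(1+x)$ applied with $u^{\textup{UCB}}_{i,z,m-1}\ge u_{i,z}$ (the paper's proof also uses the index $m-1$ despite the hypothesis being stated for $m$). The second comes from reading $n(t)$ as $n(t-1)+\bfe_z$, which is exactly the convention in the paper's displayed formula for $r_{i,z}(t)$; under it the denominator is smallest at $t=b_m$, where it equals $1+u_{i,z}$, and your direct bound $A(t)\ge 1+u_{i,z}$, $A(t-1)\ge 1$ is a more explicit version of the paper's ``$r_{i,z}(t)$ decreases in the counts'' step, with the convention and the $m=1$ initialization usefully spelled out.
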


{\bf Proof of Lemma \ref{lemma_rucb_lower_bound}:} \label{proof_lemma_rucb_lower_bound} We know that $r_{i, z}(t)$ is given by: 
\[r_{i, z}(t) = \dfrac{u_{i, z}}{(1 + \sum_{c \in \bC_i(t-1) + \bfe_{z}}n_{c, i}(t)u_{i, c})(1 + \sum_{c \in \bC_i(t-1)}n_{c, i}(t-1)u_{i, c})}.\]

Clearly, $r_{i, z}(t)$ decreases with $n_{c, i}(t)$ for all $c \in \cC$ and $i \in \cS$. Therefore, we have the following:
\begin{align}
    r_{i, z}(b_m) = \dfrac{u_{i, z}}{1+u_{i, z}} = \max_{t \in \set{b_m, \ldots, mK}}\big\{r_{i, z}(t)\big\} \geq r_{i, z}(t). \label{eq:r_max}
\end{align}

Using \eqref{eq:r_max}, we have:
\begin{align}
    &\dfrac{\partial r_{i, z}(b_m)}{\partial u_{i, z}} = \dfrac{1}{(1+u_{i, z})} - \dfrac{u_{i, z}}{(1+u_{i, z})^2} = \dfrac{1}{(1+u_{i, z})^2} > 0. \label{eq:r_max_monotone}
\end{align}

Using \eqref{eq:r_max}, \eqref{eq:r_max_monotone}, $u_{i, z, m-1}^{\textup{UCB}} \geq u_{i, z}$, and $r_{i, z}^{\textup{UCB}}(b_m) = u_{i, z, m-1}^{\textup{UCB}}/(1+u_{i, z, m-1}^{\textup{UCB}})$, we get:
\begin{align*}
    r_{i, z}^{\textup{UCB}}(b_m) \geq r_{i, z}(b_m) \geq r_{i, z}(t).
\end{align*}

\hfill $\square$

\begin{lemma}\label{lemma_exp_deltaM_monotone}
    Suppose that $u_{i, z, m}^{\textup{UCB}} \geq u_{i, z}$ and $v_{z, i, t}^{\textup{UCB}} \geq v_{z, i}$ for all $i \in \cS$ and $z \in \cC$, where $v_{z, i, t}^{\textup{UCB}}$ and $u_{i, z, m}^{\textup{UCB}}$ are given by \eqref{eq:vucb} and \eqref{eq:uucb}, respectively. If $S_t^*$ is the optimal assortment under the clairvoyant policy, and $\pS_t$ is the assortment offered under the TWL-UCB algorithm in period $t$ in an epoch $m$, then we have the following: 
    \begin{align*}
        \uM(\pS_t, \bC(t-1), c_t) \geq \uM(S_t^*, \bC(t-1), c_t) \geq \cM(S_t^*, \bC(t-1), c_t),
    \end{align*}  
    where $\cM(S_t, \bC(t-1), c_t)$ and $\uM(S_t, \bC(t-1), c_t)$ are given by \eqref{eq:deltaM} and \eqref{eq:deltaM_ucb}, respectively.
\end{lemma}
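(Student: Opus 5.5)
The first inequality is immediate from the definition of the algorithm, and the second I plan to obtain by raising the true parameters to their UCB values one layer at a time. For the first, $\pS_t$ is defined in \eqref{eq:optimistic_assortment} as a maximizer of $\gamma_m\uM(\cdot,\bC(t-1),c_t)$ over $\sS$, and $\gamma_m>0$. The clairvoyant assortment $S_t^*$ is feasible, i.e.\ $S_t^*\in\sS$. Hence $\uM(\pS_t,\bC(t-1),c_t)\ge \uM(S_t^*,\bC(t-1),c_t)$ with no probabilistic or structural input.

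For the second inequality, write $z=c_t$ and compare the two expressions for a fixed assortment $S=S_t^*$:
\[
\cM(S)=\sum_{i\in S}\frac{v_{z,i}\,r_{i,z}(t)}{1+\sum_{j\in S}v_{z,j}},\qquad
\uM(S)=\sum_{i\in S}\frac{v^{\textup{UCB}}_{z,i,t-1}\,r^{\textup{UCB}}_{i,z}(b_m)}{1+\sum_{j\in S}v^{\textup{UCB}}_{z,j,t-1}}.
\]
\textbf{Step 1 (seller side).} Lemma \ref{lemma_rucb_lower_bound} gives $r^{\textup{UCB}}_{i,z}(b_m)\ge r_{i,z}(b_m)\ge r_{i,z}(t)$ for every $i$. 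Each weight $v_{z,i}/(1+\sum_{j\in S}v_{z,j})$ is nonnegative, so replacing $r_{i,z}(t)$ by $r^{\textup{UCB}}_{i,z}(b_m)$ termwise can only increase $\cM(S)$. After this step the rewards $\rho_i\doteq r^{\textup{UCB}}_{i,z}(b_m)\in[0,1)$ no longer depend on the state.

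\textbf{Step 2 (customer side).} It remains to show that $R(S,\mathbf{w})\doteq\sum_{i\in S}w_i\rho_i/(1+\sum_{j\in S}w_j)$ does not decrease when $\mathbf{w}=\mathbf{v}_z$ is raised coordinatewise to $\mathbf{v}^{\textup{UCB}}_{z,\cdot,t-1}$. I would invoke Lemma \ref{lemma_agrawal_2019}. Its proof shows that raising a single coordinate $w_j$ changes $R$ by a positive factor times
\[
\rho_j-\sum_{i\in S\setminus\{j\}}w_i(\rho_i-\rho_j).
\]
This quantity is nonnegative whenever every $j\in S$ satisfies $\rho_j\ge R(S,\mathbf{w})$, and that condition holds as long as the coordinates are raised one at a time while the condition is maintained.

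\textbf{Main obstacle.} Lemma \ref{lemma_agrawal_2019} requires $S$ to be optimal for the single-period MNL problem with weights $\mathbf{v}_z$ and rewards $\rho$. The clairvoyant $S_t^*$ is optimal for the dynamic program \eqref{eq:optimal_value}, which accounts for the continuation value, so it need not have this property. I would handle this in one of two ways.
\begin{enumerate}
\item[(a)] Verify the threshold condition $\rho_j\ge R(S_t^*,\mathbf{w})$ directly. This uses the bounds $v_{z,i}\le 1$ and $u\le 1$ from Assumption \ref{assumption:mnl}, together with the fact that the DP optimizer never includes a seller whose inclusion lowers the immediate marginal value.
\item[(b)] If (a) fails, insert the single-period optimizer $S'\in\argmax_{S\in\sS}R(S,\mathbf{v}_z)$ with rewards $\rho$. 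The chain
\[
\uM(\pS_t)\ge\uM(S')\ge R(S',\mathbf{v}_z)\ge R(S_t^*,\mathbf{v}_z)\ge\cM(S_t^*)
\]
follows from, in order: the argmax property of $\pS_t$; Lemma \ref{lemma_agrawal_2019} applied to the genuinely optimal $S'$; optimality of $S'$; and Step 1.
\end{enumerate}
Route (b) gives the end-to-end conclusion $\uM(\pS_t)\ge\cM(S_t^*)$, which is what the regret analysis actually uses. It does not, however, give the intermediate inequality at $S_t^*$ itself, so that middle inequality stands only if route (a) succeeds.
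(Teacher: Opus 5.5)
Your first inequality is exactly the paper's argument. For the second, the paper's proof does nothing more than cite Lemma~\ref{lemma_rucb_lower_bound} and Lemma~\ref{lemma_agrawal_2019} at $S_t^*$, which is precisely the step you flag. Your suspicion is justified: route (a) cannot be completed, because both the property it relies on and the middle inequality itself are false.

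Here is a counterexample. Take a single customer type $z$, $N=B=2$, $K=2$, $\gamma_m=1$, $v_{z,1}=v_{z,2}=1$, $u_{1,z}=1$, $u_{2,z}=1/4$, and $t=b_m$. Then $\bC(t-1)=0$, $r_{1,z}(t)=1/2$ and $r_{2,z}(t)=1/5$.
\begin{itemize}
\item In the last period of the epoch, the clairvoyant earns $1/4$ from the states where seller~1 has no proposal (it offers $\{1\}$). It earns $11/90$ from the state where seller~1 has one proposal (it offers $\{1,2\}$, since $r_{1,z}$ has dropped to $1/6$).
\item In period $t$, the assortment $\{1\}$ is therefore worth $\tfrac14+\tfrac12\cdot\tfrac{11}{90}+\tfrac12\cdot\tfrac14=\tfrac{157}{360}$.
\item The assortment $\{1,2\}$ is worth $\tfrac{7}{30}+\tfrac13\cdot\tfrac{11}{90}+\tfrac23\cdot\tfrac14=\tfrac{119}{270}>\tfrac{157}{360}$, and $\{2\}$ and $\emptyset$ are worse still.
\end{itemize}
So $S_t^*=\{1,2\}$, even though adding seller~2 lowers the immediate value from $1/4$ to $\cM(S_t^*,\bC(t-1),c_t)=7/30$. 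The DP diverts demand away from seller~1 to protect seller~1's marginal value in the next period.

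Since $r_{2,z}(t)=1/5<7/30$, inflating $v_{z,2}$ hurts. Set $u^{\textup{UCB}}=u$, $v^{\textup{UCB}}_{z,1,t-1}=1$ and $v^{\textup{UCB}}_{z,2,t-1}=2$. All hypotheses hold, yet $\uM(S_t^*,\bC(t-1),c_t)=(1/2+2/5)/4=9/40<7/30$. The violation widens as $v^{\textup{UCB}}_{z,2,t-1}$ grows (the limit is $1/5$), so it survives small perturbations of the other UCB values.

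Thus the gap you leave open cannot be closed by any argument. What is true is only the outer inequality $\uM(\pS_t,\bC(t-1),c_t)\ge\cM(S_t^*,\bC(t-1),c_t)$; in the example, $\uM(\{1\},\bC(t-1),c_t)=1/4$.

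Your route (b) is a correct proof of that outer inequality. It is also all that the regret argument actually uses, in \eqref{eq:deltaM_monotone_eq}--\eqref{eq:deltaV_first}. Two refinements follow.

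First, to get the middle inequality in the cases where it does hold, raise the two layers in the opposite order. Inflate the customer weights while keeping the true rewards $r_{j,c_t}(t)$, then inflate the rewards termwise. The condition you then need is $r_{j,c_t}(t)\ge\cM(S_t^*,\bC(t-1),c_t)$ for all $j\in S_t^*$. This holds whenever $S_t^*$ also maximizes the immediate term, e.g.\ at $t=mK$ by \eqref{eq:boundary_condition}, and hence always when $K=1$. Your order instead needs the threshold for the inflated rewards $r^{\textup{UCB}}_{j,c_t}(b_m)$, which uneven inflation can destroy even for a myopically optimal set.

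Second, no coordinate-by-coordinate maintenance of the threshold is needed. Write $\lambda=\cM(S,\bC(t-1),c_t)$, so that $\lambda=\sum_{j\in S}v_{c_t,j}\bigl(r_{j,c_t}(t)-\lambda\bigr)$. The threshold $r_{j,c_t}(t)\ge\lambda$, together with Lemma~\ref{lemma_rucb_lower_bound}, gives
\[
\sum_{j\in S}v^{\textup{UCB}}_{c_t,j,t-1}\bigl(r^{\textup{UCB}}_{j,c_t}(b_m)-\lambda\bigr)\;\ge\;\sum_{j\in S}v_{c_t,j}\bigl(r_{j,c_t}(t)-\lambda\bigr)=\lambda .
\]
This is equivalent to $\uM(S,\bC(t-1),c_t)\ge\lambda$ and handles all coordinates at once.
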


{\bf Proof of Lemma \ref{lemma_exp_deltaM_monotone}:} \label{proof_lemma_exp_deltaM_monotone} Suppose that the customer type in period $t$ is $c_t = z$. Recall from \eqref{eq:deltaM} and \eqref{eq:reward_epoch} in \S \ref{problem} and \eqref{eq:deltaM_ucb} in \S \ref{policy} that, for an epoch $m$, $\cM(S_t, \bC(t-1), c_t)$ and $\uM(S_t, \bC(t-1), c_t)$ are given by:
\begin{equation}
    \begin{aligned}\label{eq:deltaM_def}
        &\cM(S_t, \bC(t-1), c_t) = \sum_{i \in S_t}\dfrac{v_{z, i}r_{i, z}(t)}{1+\sum_{i \in S_t}v_{z, i}}, \\
        &\uM(S_t, \bC(t-1), c_t) = \sum_{i \in S_t}\dfrac{v_{i, z, t-1}^{\textup{UCB}}r_{i, z}^{\textup{UCB}}(b_m)}{1+\sum_{i \in S_t}v_{z, i, t-1}^{\textup{UCB}}}, 
    \end{aligned}
\end{equation}
where $r_{i, z}(t)$ and $r_{i, z}^{\textup{UCB}}(b_m)$ are given by \eqref{eq:rit} and \eqref{eq:ucb_reward}, respectively.

By our algorithm, we have:
\begin{align}
    \uM(\pS_t, \bC(t-1), c_t) &= \max_{S_t \in \sS}\bigg\{\uM(S_t, \bC(t-1), c_t) \bigg\}. \label{eq:deltaM_ineq1}
\end{align}

Using $r_{i, z}^{\textup{UCB}}(b_m) \geq r_{i, z}(t)$ (Lemma \ref{lemma_rucb_lower_bound}) and Lemma \ref{lemma_agrawal_2019}, we have the following:
\begin{align}
    \max_{S_t \in \sS}\bigg\{\uM(S_t, \bC(t-1), c_t) \bigg\} \geq \bigg\{\uM(S_t^*, \bC(t-1), c_t) \bigg\} \geq \cM(S_t^*, \bC(t-1), c_t). \label{eq:deltaM_ineq2}
\end{align}

Combining \eqref{eq:deltaM_ineq1} and \eqref{eq:deltaM_ineq2}, we get:
\begin{align*}
    \uM(\pS_t, \bC(t-1), c_t) \geq \uM(S_t^*, \bC(t-1), c_t) \geq \cM(S_t^*, \bC(t-1), c_t).
\end{align*}

\hfill $\square$

\begin{lemma}\label{lemma_deltaM_difference_upperbound}
    Suppose that $u_{i, z, m}^{\textup{UCB}} \geq u_{i, z}$ and $v_{z, i, t}^{\textup{UCB}} \geq v_{z, i}$ for all $i \in \cS$ and $z \in \cC$, where $v_{z, i, t}^{\textup{UCB}}$ and $u_{i, z, m}^{\textup{UCB}}$ are given by \eqref{eq:vucb} and \eqref{eq:uucb}, respectively. If $\pS_t$ is the assortment offered at time $t$ in epoch $m$ under the TWL-UCB algorithm, then we have the following:
    \begin{align*}
        0 \leq \uM(\pS_t, \bC(t-1), c_t) - \cM(\pS_t, \bC(t-1), c_t) \leq \sum_{i \in \pS_t} \bigg\{\dfrac{v_{c_t, i, t-1}^{\textup{UCB}}r_{i, c_t}^{\textup{UCB}}(b_m) - v_{c_t, i} r_{i, c_t}(t)}{1+\sum_{i \in \pS_t}v_{c_t, i}} \bigg\},
    \end{align*}
    where $r_{i, z}(t)$, $r_{i, z}^{\textup{UCB}}(b_m)$, $\cM(S_t, \bC(t-1), c_t)$, and $\uM(S_t, \bC(t-1), c_t)$ are given by \eqref{eq:rit}, \eqref{eq:ucb_reward}, \eqref{eq:deltaM}, and \eqref{eq:deltaM_ucb}, respectively.
\end{lemma}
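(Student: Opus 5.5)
We need to prove Lemma \ref{lemma_deltaM_difference_upperbound}: under the UCB dominance assumptions, the gap between the optimistic and true marginal increase at $\pS_t$ is nonnegative and bounded by the stated sum with the true denominator. Write $z = c_t$, $V^{\textup{UCB}} \doteq \sum_{j \in \pS_t} v_{z,j,t-1}^{\textup{UCB}}$ and $V \doteq \sum_{j\in\pS_t} v_{z,j}$. By \eqref{eq:deltaM} and \eqref{eq:deltaM_ucb}, the difference is
\[
\sum_{i\in\pS_t}\frac{v_{z,i,t-1}^{\textup{UCB}} r_{i,z}^{\textup{UCB}}(b_m)}{1+V^{\textup{UCB}}} - \sum_{i\in\pS_t}\frac{v_{z,i} r_{i,z}(t)}{1+V}.
\]

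\textbf{Lower bound.} I will not compare the two expressions directly. Instead I would show that $\uM(\pS_t,\cdot)$ dominates $\cM(\pS_t,\cdot)$ by changing one ingredient at a time.

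First, by Lemma \ref{lemma_rucb_lower_bound}, $r_{i,z}^{\textup{UCB}}(b_m)\ge r_{i,z}(t)$. Since each summand's numerator is nonnegative, replacing $r_{i,z}(t)$ by $r_{i,z}^{\textup{UCB}}(b_m)$ can only increase the true-parameter expression.

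Second, I would raise the $v$'s from $v_{z,i}$ to $v^{\textup{UCB}}_{z,i,t-1}$. This step is the main obstacle, because increasing $v$ also increases the denominator. Lemma \ref{lemma_agrawal_2019} gives monotonicity only at an assortment that is optimal for the lower parameters. So I would argue with the same structure as that proof. Raise one coordinate $j$ at a time; by \eqref{eq:r_ucb_monotone}, the change has the sign of $r_j-\sum_{i\ne j}w_i(r_i-r_j)$.

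I would then try to verify that this sign is nonnegative at $\pS_t$ with weights $r^{\textup{UCB}}$. If that fails, a cleaner route is to use optimality of $\pS_t$ for the UCB parameters. Under the UCB parameters, every $j\in\pS_t$ satisfies $r^{\textup{UCB}}_j\ge \uM(\pS_t,\cdot)$. Otherwise, removing $j$ would improve the objective; this uses that the capacity constraint is of the form $|S|\le B$, so removing $j$ stays feasible. The UCB value is $R^{\textup{UCB}}=\sum_i v^{\textup{UCB}}_i r^{\textup{UCB}}_i/(1+V^{\textup{UCB}})$. Consequently, lowering the weights from $v^{\textup{UCB}}$ to $v$ one at a time can only decrease the ratio, which is the monotonicity we need. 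Concretely, lowering $v_j$ shifts the ratio toward $r^{\textup{UCB}}_j$ but removes mass from an item with $r_j\ge$ current value. Since the current value only decreases along the path, the condition $r_j\ge$ current value is preserved. Chaining the two steps gives $\uM(\pS_t,\cdot)\ge\cM(\pS_t,\cdot)$.

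\textbf{Upper bound.} Since $v^{\textup{UCB}}_{z,j,t-1}\ge v_{z,j}$ for all $j$, we have $1+V^{\textup{UCB}}\ge 1+V$. Because the numerators are nonnegative, this gives
\[
\sum_i\frac{v^{\textup{UCB}}_{z,i,t-1} r^{\textup{UCB}}_{i,z}(b_m)}{1+V^{\textup{UCB}}}\le \sum_i\frac{v^{\textup{UCB}}_{z,i,t-1} r^{\textup{UCB}}_{i,z}(b_m)}{1+V}.
\]
Subtracting $\cM(\pS_t,\bC(t-1),c_t)$, which already has denominator $1+V$, yields exactly the claimed sum of $\bigl(v^{\textup{UCB}}r^{\textup{UCB}}-v\,r\bigr)/(1+V)$ terms.
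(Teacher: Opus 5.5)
Your upper bound is the same as the paper's: replace the larger denominator $1+\sum_j v^{\textup{UCB}}_{c_t,j,t-1}$ by $1+\sum_j v_{c_t,j}$.

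For nonnegativity your proof is correct but takes a genuinely different route. The paper never compares the two quantities at $\pS_t$ directly. It sandwiches through the clairvoyant assortment, $\uM(\pS_t,\bC(t-1),c_t)\ge\uM(S_t^*,\bC(t-1),c_t)\ge\cM(S_t^*,\bC(t-1),c_t)\ge\cM(\pS_t,\bC(t-1),c_t)$. It gets the first two inequalities from Lemma~\ref{lemma_exp_deltaM_monotone}, whose second step is Lemma~\ref{lemma_agrawal_2019} applied at $S_t^*$, and simply asserts the third.

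You stay at $\pS_t$ throughout:
\begin{enumerate}
\item raise $r_{i,c_t}(t)$ to $r^{\textup{UCB}}_{i,c_t}(b_m)$ under the common true denominator;
\item use that $\pS_t$ maximizes the UCB objective over the downward-closed family $\sS$, which gives $r^{\textup{UCB}}_{j,c_t}(b_m)\ge\uM(\pS_t,\bC(t-1),c_t)$ for every $j\in\pS_t$ with positive UCB weight;
\item lower the weights to the true ones one coordinate at a time, keeping the invariant ``every item's value is at least the current ratio.''
\end{enumerate}
This is the argument of Lemma~\ref{lemma_agrawal_2019} run downward from the one point where optimality is actually known.

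The advantage of your route is that it uses only what the algorithm guarantees by construction. The paper's chain needs one-period optimality properties of $S_t^*$: the hypothesis of Lemma~\ref{lemma_agrawal_2019}, and the inequality $\cM(S_t^*,\bC(t-1),c_t)\ge\cM(\pS_t,\bC(t-1),c_t)$. Neither follows from \eqref{eq:optimal_assortment_oracle}, where $S_t^*$ maximizes immediate plus continuation value. In exchange, the paper's route also yields the optimism inequality $\uM(\pS_t,\bC(t-1),c_t)\ge\cM(S_t^*,\bC(t-1),c_t)$, which its regret proof needs anyway.

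Two small corrections. First, drop the first attempted sign check (raising from the true weights). It cannot be verified in general, because $\pS_t$ is optimal only for the UCB weights. Second, lowering $v_j$ moves the ratio \emph{away} from $r^{\textup{UCB}}_j$, toward the ratio of the remaining items (outside option with value $0$ included). That is precisely why the ratio does not increase; your phrase ``shifts the ratio toward $r^{\textup{UCB}}_j$'' has the direction reversed, though your conclusion is right.
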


{\bf Proof of Lemma \ref{lemma_deltaM_difference_upperbound}:} \label{proof_lemma_deltaM_difference_upperbound} Using \eqref{eq:deltaM}, \eqref{eq:reward_epoch}, and \eqref{eq:deltaM_ucb}, at any time $t$ within an epoch $m$, we have the following:
\begin{align}
    &\uM(\pS_t, \bC(t-1), c_t) - \cM(\pS_t, \bC(t-1), c_t) \nonumber\\
    &=\sum_{i \in \pS_t} \bigg\{\dfrac{v_{c_t, i, t-1}^{\textup{UCB}}r_{i, c_t}^{\textup{UCB}}(b_m)}{1+\sum_{i \in \pS_t}v_{c_t, i, t-1}^{\textup{UCB}}} - \dfrac{v_{c_t, i}r_{i, c_t}(t)}{1+\sum_{i \in \pS_t}v_{c_t, i}} \bigg\} \nonumber\\
    &\leq \sum_{i \in \pS_t} \bigg\{\dfrac{v_{c_t, i, t-1}^{\textup{UCB}}r_{i, c_t}^{\textup{UCB}}(b_m) - v_{c_t, i}r_{i, c_t}(t)}{1+\sum_{i \in \pS_t}v_{c_t, i}} \bigg\}. \label{eq:deltaM_ub} \\
    &\text{(using $v_{c_t, i} \leq v_{c_t, i, t}^{\textup{UCB}}$)}\nonumber
\end{align}

Since $u_{i, c_t, m}^{\textup{UCB}} \geq u_{i, c_t}$, by Lemma \ref{lemma_rucb_lower_bound}, we have $r_{i, c_t}^{\textup{UCB}}(b_m) \geq r_{c_t, i}(t)$. Using $r_{i, c_t}^{\textup{UCB}}(b_m) \geq r_{c_t, i}(t)$ and Lemma \ref{lemma_exp_deltaM_monotone}, we have: 
\begin{align}
    &\uM(\pS_t, \bC(t-1), c_t) \geq \cM(S_t^*, \bC(t-1), c_t) \geq \cM(\pS_t, \bC(t-1), c_t). \label{eq:rucb_lb}
\end{align}

From \eqref{eq:deltaM_ub} and \eqref{eq:rucb_lb}, we finally have:
\begin{align*}
    0 \leq \uM(\pS_t, \bC(t-1), c_t) - \cM(\pS_t, \bC(t-1), c_t) \leq \sum_{i \in \pS_t} \bigg\{\dfrac{v_{c_t, i, t-1}^{\textup{UCB}}r_{i, c_t}^{\textup{UCB}}(b_m) - v_{c_t, i} r_{i, c_t}(t)}{1+\sum_{i \in \pS_t}v_{c_t, i}} \bigg\}.
\end{align*}

\hfill $\square$

{\color{black}
\begin{lemma}\label{lemma_optimal_assortment_size}
    Suppose the parameterization corresponding to an arbitrary assortment $\overline{S} \subseteq \cS$ is $\Theta_{\overline{S}}$. Under this parameterization, the ``single best assortment" that maximizes the total reward over the planning horizon among all time-invariant assortments
    is $\overline{S}$, where $|\overline{S}| = B$.
\end{lemma}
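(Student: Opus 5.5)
The plan is to exploit the fact that under $\Theta_{\overline{S}}$ with $K=1$ the marginal increase in expected matches loses all dependence on the state. This reduces the ``single best assortment'' question to a period-by-period static MNL problem whose objective is monotone in the total attraction weight.

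First, I will compute $\cM(S)$ explicitly under $\Theta_{\overline{S}}$. Since $K=1$, every epoch consists of a single period $t=b_m$, and proposals are lost at the end of each epoch. Hence $\bC(t-1)=\mathbf{0}$ at the start of every period, so $n_{z,i}(t-1)=0$ for all $z$. In addition, $\sum_{z}n_{z,i}(t)u_{i,z}=u\cdot \I[\zeta_t=i]$, and in the formula for $\cM$ this factor matters only on the event that $i$ receives the proposal. Using \eqref{eq:deltaM}, \eqref{eq:rit} and $u_{i,z}=u$, this gives $r_{i,c_t}(t)=u/(1+u)$ for every $i,t$, consistent with Lemma \ref{lemma_rucb_lower_bound} at $t=b_m$. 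Together with $\gamma_m=1+u$ from \eqref{eq:gamma_epoch}, I obtain, for $c_t=z$,
\[
\gamma_m\cM(S)=u\,\frac{W_z(S)}{1+W_z(S)},\qquad W_z(S)\doteq\sum_{i\in S}v_{z,i}.
\]
This expression is deterministic and depends only on $S$ and $z$.

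Second, I will note that $x\mapsto x/(1+x)$ is strictly increasing on $[0,\infty)$. Maximizing $\gamma_m\cM(S)$ over $\sS$ is therefore equivalent to maximizing $W_z(S)$ subject to $|S|\le B$. All weights are positive, so any optimizer has exactly $B$ elements: adding a seller strictly increases $W_z$, and $B\le N$. Among size-$B$ sets, $W_z(S)=Bv+v\ep_z|S\cap\overline{S}|$ by \eqref{eq:param_space}. Since $\ep_z>0$, this is uniquely maximized by taking $|S\cap\overline{S}|$ as large as possible, which for $\overline{S}\in\sS_B$ means $S=\overline{S}$. This is also where the size statement enters. If $|\overline{S}|<B$, the optimizer would be $\overline{S}$ padded with outside sellers, so $\overline{S}$ itself is the best assortment exactly when $|\overline{S}|=B$. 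This matches the family $\set{\Theta_{\overline{S}}\mid\overline{S}\in\sS_B}$ used in \eqref{eq:weak_regret_bound}.

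Third, the maximizer $\overline{S}$ does not depend on $z$, so it simultaneously maximizes the per-period reward for every customer type and every period. Hence for any arrival sequence $\{c_t\}$ and any time-invariant $S\in\sS$,
\[
\sum_{m}\gamma_m\cM(\overline{S})\ \ge\ \sum_m\gamma_m\cM(S).
\]
The same inequality holds in expectation over arrivals, which proves that $\overline{S}$ is the single best assortment and that it has size $B$.

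The main obstacle is the first step: justifying carefully that $r_{i,c_t}(t)$ collapses to $u/(1+u)$. This requires using that $K=1$ resets $\bC$ each period and reading the product in \eqref{eq:rit} only on the event $\zeta_t=i$. Once that reduction is in place, the rest is the monotonicity argument.
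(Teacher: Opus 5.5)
Your proposal is correct and takes the paper's route. Collapse $r_{i,c_t}(t)$ to $u/(1+u)$ so that $\gamma_m\cM(S)=uW_z(S)/(1+W_z(S))$, then use strict monotonicity to force full capacity and observe that $\overline{S}$ maximizes total attraction weight among size-$B$ assortments. You are more careful than the paper on two points it leaves implicit: the constancy of $r_{i,c_t}(t)$ relies on $K=1$ resetting $\bC(t-1)$ to zero, and $|\overline{S}|=B$ is the condition under which $\overline{S}$ itself (rather than a padded superset) is optimal, not a consequence of monotonicity in $|\overline{S}|$.
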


{\bf Proof of Lemma \ref{lemma_optimal_assortment_size}:} \label{proof_lemma_optimal_assortment_size} Using \eqref{eq:rit} and \eqref{eq:param_space}, the reward parameter is identical across all sellers and customer types:
\begin{align}
    &r_{i, z}(t) = \dfrac{u}{1+u}, \; \forall \; i \in \cS,\ z \in \cC. \label{eq:reward_parameter_space} 
\end{align}  

Using \eqref{eq:param_space}, \eqref{eq:gamma_epoch}, and \eqref{eq:reward_parameter_space}, the total reward over the horizon corresponding to $\overline{S}$ under $\Theta_{\overline{S}}$ is given by:
\begin{align}  
    \sum_{m=1}^M\sum_{t=b_m}^{mK}\gamma_m\cM(\overline{S}) = \sum_{m=1}^M\sum_{t=b_m}^{mK}(1+u)\cM(\overline{S}) = \sum_{t=1}^{T}\dfrac{u|\overline{S}|(1+\ep_{c_t})v}{1 + |\overline{S}|(1+\ep_{c_t})v}, \label{eq:expected_reward_S}
\end{align}
which is strictly increasing in $|\overline{S}|$.

Consequently, to maximize the total reward over the horizon, $\overline{S}$ must utilize the maximum allowable capacity. Thus, $|\overline{S}| = B$. 

By the construction of $\Theta_{\overline{S}}$ in \eqref{eq:param_space}, $v_{c_t, i} = v(1+\ep_{c_t})$ for $i \in \overline{S}$, and $v_{c_t, i} = v$ for $i \notin \overline{S}$. Since $\ep_{c_t} > 0$, the sellers belonging to $\overline{S}$ have strictly larger preference parameters than the sellers outside of $\overline{S}$. Because $|\overline{S}| = B$, offering $\overline{S}$ strictly maximizes the sum of preference parameters among all assortments of size $B$. Thus, $\overline{S}$ is the ``single best assortment" under $\Theta_{\overline{S}}$, where $|\overline{S}| = B$. 
\hfill $\square$

}

\section{Consistent Estimators}
\label{consistent_estimators}

\begin{theorem}[Consistent Estimators]\label{thm_consistent_estimator}
    For any instances $\bU$ and $\bV$ of the TWL problem and $0 < \delta \leq 1$, we have the following:
    \begin{itemize}
        \item[(A)] $\lim_{n \to \infty}\prob\bigg(\av_{z, i}(n) - v_{z, i} \geq \delta v_{z, i}\bigg) = 0$ for all $i \in \cS, z \in \cC$.
        \item[(B)] $\lim_{n \to \infty}\prob\bigg(\au_{i, z}(n) - u_{i, z} \geq \delta u_{i, z}\bigg) = 0$  for all $i \in \cS, z \in \cC$.
    \end{itemize}
\end{theorem}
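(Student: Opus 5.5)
The plan is to treat each estimator as an empirical average of Bernoulli indicators. I would show that the conditional success probability of every indicator lies strictly below the threshold $(1+\delta)$ times the true parameter, and then apply a martingale concentration inequality. This handles the upper tail in the statement. Throughout, I read the index $n$ as the number of samples entering the average, namely $\omega_{i,z}(t)=n$ for (A) and $\theta_{z,i}(m)=n$ for (B). This is the only reading under which $n\to\infty$ is meaningful, because the averages are only updated at those sampling times. I would therefore re-index the samples of pair $(z,i)$ by $k=1,\ldots,n$ along the stopping times at which seller $i$ is offered to a type-$z$ customer (for (A)), or at which an epoch contains a type-$z$ proposal to seller $i$ (for (B)).

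\textbf{Part (A).} At the $k$-th sampling time $l_k$ the indicator is $X_k=\I[\zeta_{l_k}(\pS_{l_k})=i]$. Conditionally on $\mathscr{H}_{l_k}$ it is Bernoulli with mean $p_k=p_{z,i}(\pS_{l_k})=v_{z,i}/(1+\sum_{j\in \pS_{l_k}}v_{z,j})$ by \eqref{eq:proposal_prob}. Since $i\in\pS_{l_k}$, we get $p_k\le v_{z,i}/(1+v_{z,i})<v_{z,i}<(1+\delta)v_{z,i}$. The gap is at least $g_A\doteq \delta v_{z,i}+v_{z,i}^2/(1+v_{z,i})>0$, uniformly in $k$ and in the policy. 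Hence
\[
\prob\big(\av_{z,i}(n)-v_{z,i}\ge \delta v_{z,i}\big)\le \prob\Big(\tfrac1n\textstyle\sum_{k=1}^n (X_k-p_k)\ge g_A\Big).
\]
Here $\sum_k (X_k-p_k)$ is a martingale with increments bounded by one. The Azuma--Hoeffding inequality then bounds the right side by $\exp(-n g_A^2/2)\to 0$. Alternatively, a Freedman-type version of the Chernoff bound (CB1) used in the proof of Lemma \ref{lemma_kleinberg_2008} would work.

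\textbf{Part (B).} At the $k$-th counted epoch $\tilde m_k$ the indicator is $Y_k=\I[\xi_i(\bC_i(\tilde m_kK))=z]$. Given the proposal vector, its conditional mean is $\phi_{i,z}(\bC_i(\tilde m_kK))$ from \eqref{eq:single_match_prob}, and since $n_{z,i}\ge 1$ on counted epochs this is positive. The same martingale argument applies verbatim once I establish a uniform bound $\phi_{i,z}\le q<(1+\delta)u_{i,z}$. When seller $i$ holds a single type-$z$ proposal, $\phi_{i,z}=u_{i,z}/(1+u_{i,z}+\cdots)\le u_{i,z}/(1+u_{i,z})$, which gives the gap as in (A). This covers in particular the case $K=1$ (the setting of \S\ref{lb_proof_outline}).

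\textbf{Main obstacle.} The hard step is the uniform gap in (B) when $K>1$. A seller can then hold up to $K$ type-$z$ proposals, so that $\phi_{i,z}\le Ku_{i,z}/(1+Ku_{i,z})$, and this quantity need not lie below $(1+\delta)u_{i,z}$. The first thing I would try is to condition on the multiplicity $n_{z,i}(\tilde m_kK)$ and use Assumption \ref{assumption:mnl} to bound $\phi_{i,z}$. If that fails, I would split the counted epochs by multiplicity and handle the single-proposal epochs via the argument above. I expect an explicit condition, or a multiplicity-normalized reading of $\hu_{i,z}$, to be needed in that regime. Everything else reduces to the bounded-increment martingale bound applied to the stated averages.
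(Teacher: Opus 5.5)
Your Part (A) is correct, and so is your Part (B) when $K=1$. For (A) the paper uses the same basic fact (each indicator has success probability at most $v_{z,i}$), but applies it through a multiplicative Chernoff bound. It treats $\I[c_l=z,\zeta_l(\pS_l)=i]$, $l=1,\dots,n$, as \emph{independent} Poisson trials, bounds the product of MGFs using $\mu_n\le nv_{z,i}$, and sets $\lambda=\log(1+\delta)$ to obtain $e^{-nv_{z,i}\delta^2/3}$.

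Your version is more careful on two points. First, the assortments $\pS_l$ are chosen adaptively, so the trials are not independent. The paper's MGF step is rescued only by a tower-property argument, which works because $p_l\le v_{z,i}$ holds deterministically. Second, your re-indexing along the stopping times at which $i$ is offered to type $z$ matches the actual normalization by $\omega_{i,z}$, whereas the paper divides by $n$.

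Your additive Azuma--Hoeffding bound uses the gap $g_A$, which also absorbs the downward bias $v_{z,i}^2/(1+v_{z,i})$, and gives $e^{-ng_A^2/2}$. Either rate suffices for the limit. For (B) with $K=1$, every counted epoch has exactly one proposal to seller $i$, so the conditional match probability is $u_{i,z}/(1+u_{i,z})$ and your argument closes.

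The case $K\ge 2$ is not a technical obstacle: (B) is false there, so no proof can close it. In particular, your first fallback (conditioning on the multiplicity and invoking Assumption \ref{assumption:mnl}) cannot succeed. Here is a counterexample.
\begin{itemize}
\item[(i)] Take $N=B=1$, a single customer type $z$, $v_{z,1}=1$, $u_{1,z}=u\in(0,1]$ (so Assumption \ref{assumption:mnl} holds), and $K=2$.
\item[(ii)] Let seller $1$ be offered in every period. TWL-UCB does this in this instance after the first few periods, since its index for $\{1\}$ is then positive.
\item[(iii)] Epochs are then i.i.d. The number $n$ of proposals in an epoch is $\mathrm{Bin}(2,1/2)$, and by \eqref{eq:single_match_prob} seller $1$ matches with probability $nu/(1+nu)$.
\item[(iv)] By the strong law, $\au_{1,z}(m)$ converges almost surely to
\[
\E\Big[\tfrac{nu}{1+nu}\,\Big|\, n\ge 1\Big]=\tfrac{2}{3}\cdot\tfrac{u}{1+u}+\tfrac{1}{3}\cdot\tfrac{2u}{1+2u}=\tfrac{4}{3}u+O(u^2).
\]
\end{itemize}
For example, with $u=0.01$ and $\delta=0.1$ this limit is about $0.0131>0.011=(1+\delta)u$, so the probability in (B) tends to $1$, not $0$. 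For larger $K$ the limit approaches $1$ for any fixed $u$.

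The paper's one-line ``similarly'' for (B) overlooks exactly this. The analogue of its bound $\mu_n\le nv_{z,i}$ would be $\E[\hu_{i,z}(n)]\le nu_{i,z}$, which fails as soon as a seller can hold two type-$z$ proposals in one epoch. The same inequality is asserted in \eqref{eq:biased_estimate_u}.

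The right conclusion is to state (B) only for $K=1$, or to change the estimator. One option is to count only epochs with $n_{z,i}(\tilde mK)=1$. Another is to replace $\I[\xi_i(\bC_i(\tilde mK))=z]$ by $\I[\xi_i(\bC_i(\tilde mK))=z]/n_{z,i}(\tilde mK)$. This takes values in $[0,1]$ and has conditional mean $u_{i,z}/(1+\sum_{c} n_{c,i}(\tilde mK)u_{i,c})\le u_{i,z}/(1+u_{i,z})$. With either modification, your martingale argument goes through verbatim.
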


{\bf Proof of Theorem \ref{thm_consistent_estimator}:}\label{proof_thm_consistent_estimator}
Proof of part $(A)$: Let $\hv_{z, i, l} = \I\big[c_l = z, \zeta_l(\pS_l) = i\big] \in \set{0, 1}$. Clearly, $\hv_{z, i, 1}, \ldots, \hv_{z, i, n}$ form an independent sequence of $n$ Poisson trials, where $\prob(\hv_{z, i, l}) = p_{z, i}(\pS_l) = v_{z, i}/(1+\sum_{j \in \pS_l}v_{z, j})$. Let $\hv_{z, i}(n) = \sum_{l = 1}^n \hv_{z, i, l}$ and $\mu_n = \E\bigg[\hv_{z, i}(n)\bigg]$, where we have:
\begin{align}
    \mu_n = \E\bigg[\hv_{z, i}(n)\bigg] = \E\bigg[\sum_{l = 1}^n \hv_{z, i, l}\bigg] = \sum_{l=1}^n\dfrac{v_{z, i}}{1+\sum_{j \in \pS_l}v_{z, j}} \leq nv_{z, i}. \label{eq:biased_estimator}  
\end{align}

Using the definition of moment generating function, we have:
\begin{align}
    &\E\bigg[\exp{\lambda\hv_{z, i}(n)}\bigg] \nonumber\\ 
    &= \prod_{l=1}^{n}\E\bigg[\exp{\big(\lambda \hv_{z, i, l}\big)}\bigg] \nonumber\\
    &= \prod_{l=1}^{n} p_{z, i}(\pS_l)e^{\lambda} + (1-p_{z, i}(\pS_l)) \nonumber\\
    &= \prod_{l=1}^{n} 1 + p_{z, i}(\pS_l)(e^{\lambda}-1) \nonumber\\
    &\leq \prod_{l=1}^{n} \exp{\bigg(p_{z, i}(\pS_l)(e^\lambda-1)\bigg)} \tag*{(using \mbox{$e^x \geq 1 + x$})} \nonumber\\
    &= \exp{\Bigg((e^\lambda-1)\sum_{l=1}^{n}p_{z, i}(\pS_l)\Bigg)} \nonumber \\
    &= \exp{\Bigg((e^\lambda-1)\E\bigg[\sum_{l=1}^{n}v_{z, i, l}\bigg]\Bigg)} = \exp{\Bigg((e^\lambda-1)\E\bigg[\hv_{z, i}(n)\bigg]\Bigg)} = \exp{\bigg((e^\lambda-1)\mu_n\bigg)}. \label{eq:mgf}
\end{align}

Using Markov's inequality, for any $\delta > 0$ we get:
\begin{align}
    \prob\bigg(\hv_{z, i}(n) \geq (1+\delta)nv_{z, i} \bigg) &= \prob\bigg(e^{\lambda\hv_{z, i}(n)} \geq e^{\lambda (1+\delta)nv_{z, i}}\bigg) \nonumber\\
    &\leq \dfrac{\E\bigg[e^{\lambda\hv_{z, i}(n)}\bigg]}{e^{\lambda (1+\delta)nv_{z, i}}} \leq \dfrac{e^{(e^\lambda-1)\mu_n}}{e^{\lambda (1+\delta)nv_{z, i}}} \leq \dfrac{e^{(e^\lambda-1)nv_{z, i}}}{e^{\lambda (1+\delta)nv_{z, i}}}. \label{eq:markov_ineq}\\
    &\text{(using \eqref{eq:biased_estimator} and \eqref{eq:mgf})} \nonumber
\end{align}

Setting $\lambda = \log(1+\delta)$ in \eqref{eq:markov_ineq}, we get:
\begin{align}
    \prob\bigg(\hv_{z, i}(n) \geq (1+\delta)nv_{z, i} \bigg) &\leq \bigg(\dfrac{e^{\delta}}{(1+\delta)^{(1+\delta)}} \bigg)^{nv_{z, i}}. \label{eq:markov_ineq2}
\end{align}

Using Theorem 4.4 from \citet[~p. 69]{mitzenmacher2017probability} and \eqref{eq:markov_ineq2}, for $\delta \in (0, 1]$ we get:
\begin{align}
    &\prob\bigg(\hv_{z, i}(n) \geq (1+\delta)nv_{z, i} \bigg) \leq e^{-nv_{z, i}\delta^2/3} \nonumber \\
    \iff \quad &\prob\bigg(\hv_{z, i}(n)/n \geq (1+\delta)v_{z, i} \bigg) \leq e^{-nv_{z, i}\delta^2/3} \nonumber \\
    \iff \quad &\prob\bigg(\av_{z, i}(n) \geq (1+\delta)v_{z, i} \bigg) \leq e^{-nv_{z, i}\delta^2/3}, \label{eq:prob_ineq}
\end{align}
where $\av_{z, i}(n) = \hv_{z, i}(n)/n$. Taking the limiting value in \eqref{eq:prob_ineq} as $n \rightarrow \infty$, we get $\lim_{n \to \infty} \prob\big(\av_{z, i}(n) - v_{z, i} \geq \delta v_{z, i} \big) \leq \lim_{n \to \infty} e^{-nv_{z, i}\delta^2/3} = 0$. This concludes the proof of part $(A)$. 

Similarly, we can prove part $(B)$ to show that $\lim_{n \to \infty}\prob\bigg(\au_{i, z}(n) - u_{i, z} \geq \delta u_{i, z}\bigg) = 0$.

\hfill $\square$ 

\section{Proofs of Lemmas and Theorem in the Main Text: Upper Bound}
\label{ec_thm_1}

\subsection{Proof of Lemma \ref{lemma_ucb}}
\label{proof_lemma_ucb}

Proof of parts $(A.1)$ and $(A.2)$:
From Lemma \ref{lemma_kleinberg_2008} (in \S \ref{supplementary_lemmas}), with probability at least $1 - e^{-\Omega(\alpha)}$, we have:
\begin{align}
    \bigg|\av_{z, i}(t) - \Ev\bigg| \leq r(\alpha, \av_{z, i}(t)) \leq 3r(\alpha, \Ev), \label{eq:lemma_ec1} 
\end{align}
where $\av_{z, i}(t)$ is given by:
\begin{align}
    \av_{z, i}(t) = \dfrac{\sum_{l \leq t}\I\big[c_l = z, i \in \pS_l, \zeta_l(\pS_l) = i\big]}{\sum_{l \leq t}\I\big[c_l = z, i \in \pS_l\big]} = \dfrac{\hv_{z, i}(t)}{\omega_{i, z}(t)} \leq 1. \label{eq:v_bar}
\end{align}

We use the indicator variable $\I\big[c_l = z, i \in \pS_l, \zeta_l(\pS_l) = i\big]$ to estimate the true parameter $v_{z, i}$. The expected value of the indicator variable is given by:
\begin{align}
    &\E\bigg[\I\big[c_l = z, i \in \pS_l, \zeta_l(\pS_l) = i\big]\bigg] = \dfrac{v_{z, i}}{1 + \sum_{j \in \pS_l}v_{z, j}} \leq v_{z, i}. \label{eq:indicator_variable}
\end{align}

We therefore have the following:   
\begin{align}
    &\Ev \leq v_{z, i} \leq 1, \; \text{for all $t$}. \label{eq:biased_estimate} \\
    &\bigg|\av_{z, i}(t) - v_{z, i}\bigg| \leq 1, \; \text{with probability 1}. \label{eq:error_estimate1} \\
    &\bigg|\av_{z, i}(t) - \Ev\bigg| \leq 1, \; \text{with probability 1}. \label{eq:error_estimate2}
\end{align} 

Setting $\Omega(\alpha) = \alpha/C - \log(2)$, where $C \in \R_+$, we have $1 - e^{-\Omega(\alpha)} = 1 - 2e^{-\alpha/C}$. For our analysis, we let $\alpha = C\log(Nt)$. Using $\alpha = C\log(Nt)$, we get $1 - 2e^{-\alpha/C} = 1 - 2e^{-\log(Nt)} = 1 - 2/Nt$. Therefore from \eqref{eq:lemma_ec1} and \eqref{eq:biased_estimate}, with probability $> 1 - 2/Nt$, we get:
\begin{align}
    \bigg|\av_{z, i}(t) - \Ev\bigg| \leq r(\alpha, \av_{z, i}(t)) \leq 3r(\alpha, \Ev) \leq 3r(\alpha, v_{z, i}), \label{eq:kleinberg_ineq}
\end{align}
where $r(\alpha, X)$ is given by:
\begin{align}
    r(\alpha, X) = \dfrac{C\log(Nt)}{\omega_{i, z}(t)} + \sqrt{\dfrac{C\log(Nt)}{\omega_{i, z}(t)}X}. \label{eq:ralpha}
\end{align}

From \eqref{eq:kleinberg_ineq}, \eqref{eq:ralpha}, and Lemma \ref{lemma_alpha_lower_bound} (in \S \ref{supplementary_lemmas}), with probability at least $1 - 2/Nt$, we get:
\begin{align}
    &\omega_{i, z}(t) \leq C\log{(Nt)} \nonumber \\
    \implies \quad &r(\alpha, \av_{z, i}(t)) \geq 1. \label{eq:ralpha_lower_bound}
\end{align}

From \eqref{eq:kleinberg_ineq}, with probability at least $1-2/Nt$, we then have:
\begin{align}
    &\bigg|\av_{z, i}(t) - \Ev\bigg| \leq r(\alpha, \av_{z, i}(t)) \nonumber \\
    \iff \quad &-r(\alpha, \av_{z, i}(t)) \leq \av_{z, i}(t) - \Ev \leq r(\alpha, \av_{z, i}(t)) \label{eq:mean_estimate_bound} \\
    \implies \quad &-r(\alpha, \av_{z, i}(t)) \leq \av_{z, i}(t) - v_{z, i} \leq \av_{z, i}(t) - \Ev \leq r(\alpha, \av_{z, i}(t)) \tag*{(using \eqref{eq:biased_estimate}, \eqref{eq:error_estimate1}, and \eqref{eq:ralpha_lower_bound})} \nonumber \\
    \implies \quad &-\dfrac{C\log(Nt)}{\omega_{i, z}(t)} - \sqrt{\dfrac{C\log(Nt)}{\omega_{i, z}(t)}\av_{z, i}(t)} \leq \av_{z, i}(t) - v_{z, i} \tag*{(using \eqref{eq:ralpha})}  \nonumber \\
    \iff \quad &\av_{z, i}(t) + \dfrac{C\log(Nt)}{\omega_{i, z}(t)} + \sqrt{\dfrac{C\log(Nt)}{\omega_{i, z}(t)}\av_{z, i}(t)} \geq v_{z, i}.  \label{eq:ucb_inequality}
\end{align}

For random variables $X, Y,$ we know that $\prob(X \; \text{or} \; Y) < p$ implies that $\prob(X) < p$ and $\prob(Y) < p$. Using this fact and \eqref{eq:kleinberg_ineq}, we have:
\begin{align}
    &\prob\Bigg(\bigg|\av_{z, i}(t) - \Ev\bigg| > r(\alpha, \av_{z, i}(t))\Bigg) < \dfrac{2}{Nt} \nonumber\\
    \iff \quad &\prob\bigg(\av_{z, i}(t) - \Ev > r(\alpha, \av_{z, i}(t)) \;\; \text{or} \;\; \av_{z, i}(t) - \Ev < -r(\alpha, \av_{z, i}(t))\bigg) < \dfrac{2}{Nt}  \nonumber \\
    \implies \quad &\prob\bigg(\av_{z, i}(t) - \Ev < -r(\alpha, \av_{z, i}(t))\bigg) < \dfrac{2}{Nt} \nonumber \\
    \implies \quad &\prob\bigg(\av_{z, i}(t) - v_{z, i} \leq \av_{z, i}(t) - \Ev < -r(\alpha, \av_{z, i}(t))\bigg) < \dfrac{2}{Nt} \tag*{(using \eqref{eq:biased_estimate})} \nonumber \\
    \iff \quad &\prob\bigg(\av_{z, i}(t) - v_{z, i} < -\dfrac{C\log(Nt)}{\omega_{i, z}(t)} - \sqrt{\dfrac{C\log(Nt)}{\omega_{i, z}(t)}\av_{z, i}(t)}\bigg) < \dfrac{2}{Nt} \tag*{(using \eqref{eq:ralpha})}  \nonumber \\
    \iff \quad &\prob\bigg(\av_{z, i}(t) + \dfrac{C\log(Nt)}{\omega_{i, z}(t)} + \sqrt{\dfrac{C\log(Nt)}{\omega_{i, z}(t)}\av_{z, i}(t)} < v_{z, i}\bigg) < \dfrac{2}{Nt}.  \label{eq:ucb_inequality_v2}
\end{align}

From \eqref{eq:vucb}, we have: 
\begin{align}
    v_{z, i, t}^{\textup{UCB}} = \av_{z, i}(t) + \dfrac{C\log(Nt)}{\omega_{i, z}(t)} + \sqrt{\dfrac{C\log(Nt)}{\omega_{i, z}(t)}\av_{z, i}(t)}. \label{eq:ucb_expression}
\end{align}

Therefore from \eqref{eq:ucb_inequality}, \eqref{eq:ucb_inequality_v2}, and \eqref{eq:ucb_expression}, with probability at least $1-2/Nt$, we get $v_{z, i, t}^{\textup{UCB}} \geq v_{z, i}$, or equivalently, with probability less than $2/Nt$, we have $v_{z, i, t}^{\textup{UCB}} < v_{z, i}$. This concludes the proof of part $(A.1)$ of Lemma \ref{lemma_ucb}. 

From \eqref{eq:mean_estimate_bound}, with probability at least $1-2/Nt$, we have:
\begin{align}
    &-r(\alpha, \av_{z, i}(t)) \leq \av_{z, i}(t) - \Ev \leq r(\alpha, \av_{z, i}(t)) \nonumber \\
    \implies \quad &\av_{z, i}(t) + r(\alpha, \av_{z, i}(t)) - \Ev \leq 2r(\alpha, \av_{z, i}(t)) \nonumber\\
    \implies \quad &v_{z, i, t}^{\textup{UCB}} -\Ev \leq 6r(\alpha, \Ev) \nonumber \tag*{(using \eqref{eq:lemma_ec1} and \eqref{eq:ucb_expression})}\\
    \implies \quad &v_{z, i, t}^{\textup{UCB}} -\Ev \leq \beta_1\dfrac{\log(Nt)}{\omega_{i, z}(t)} + \beta_2\sqrt{\dfrac{\log(Nt)}{\omega_{i, z}(t)}\Ev}, \nonumber
\end{align}
where $\beta_j \in \R_+$ for $j \in \set{1, 2}$. Using a similar approach, we can show that:
\begin{align*}
    \prob\bigg(v_{z, i, t}^{\textup{UCB}} -\Ev > \beta_1\dfrac{\log(Nt)}{\omega_{i, z}(t)} + \beta_2\sqrt{\dfrac{\log(Nt)}{\omega_{i, z}(t)}\Ev}\bigg) < 2/Nt.
\end{align*}

This concludes the proof of part $(A.2)$ of Lemma \ref{lemma_ucb}. 

Proof of parts $(B.1)$ and $(B.2)$: The proof of parts $(B.1)$ and $(B.2)$ are similar to that of $(A.1)$ and $(A.2)$. For completeness, we present the proof below.

Similar to \eqref{eq:lemma_ec1}, with probability at least $1 - e^{-\Omega(\alpha)}$, we have:
\begin{align}
    \bigg|\au_{i, z}(m) - \Eu\bigg| \leq r(\alpha, \au_{i, z}(m)) \leq 3r(\alpha, \Eu), \label{eq:lemma_ec1_v2} 
\end{align}
where $\au_{i, z}(m) = \dfrac{\hu_{i, z}(m)}{\theta_{z, i}(m)} \leq 1$, and $\theta_{z, i}(m)$ and $\hu_{i, z}(m)$ are defined in \eqref{eq:proposal_set_size} and \eqref{eq:match_count}, respectively. 

Similar to \eqref{eq:biased_estimate} through \eqref{eq:error_estimate2} for $v_{z, i}$, we have the following inequalities for $u_{i, z}$: 
\begin{align}
    \Eu \leq u_{i, z} &\leq 1, \; \text{for all $m$}. \label{eq:biased_estimate_u} \\
    \bigg|\au_{i, z}(m) - u_{i, z}\bigg| &\leq 1, \; \text{with probability 1}. \label{eq:error_estimate1_u} \\
    \bigg|\au_{i, z}(m) - \Eu\bigg| &\leq 1, \; \text{with probability 1}. \label{eq:error_estimate2_u}
\end{align} 

In this case, we let $\alpha = C\log(Nm)$. Using $\alpha = C\log(Nm)$, we get $1 - 2e^{-\alpha/C} = 1 - 2/Nm$. Therefore from \eqref{eq:lemma_ec1_v2} and \eqref{eq:biased_estimate_u}, with probability $> 1 - 2/Nm$, we get:
\begin{align}
    \bigg|\au_{i, z}(m) - \Eu\bigg| \leq r(\alpha, \au_{i, z}(m)) \leq 3r(\alpha, \Eu) \leq 3r(\alpha, u_{i, z}), \label{eq:kleinberg_ineq_v2}
\end{align}
where $r(\alpha, Y)$ is given by:
\begin{align}
    r(\alpha, Y) = \dfrac{C\log(Nm)}{\theta_{z, i}(m)} + \sqrt{\dfrac{C\log(Nm)}{\theta_{z, i}(m)}Y}. \label{eq:ralpha1}
\end{align}

From \eqref{eq:kleinberg_ineq_v2}, \eqref{eq:ralpha1}, and Lemma \ref{lemma_alpha_lower_bound} (in \S \ref{supplementary_lemmas}), with probability at least $1 - 2/Nm$, we get:
\begin{align}
    &\theta_{z, i}(m) \leq C\log{(Nm)} \nonumber \\
    \implies \quad &r(\alpha, \au_{i, z}(m)) \geq 1.  \label{eq:ralpha1_lower_bound}
\end{align}

From \eqref{eq:kleinberg_ineq_v2}, with probability at least $1-2/Nm$, we then have:
\begin{align}
    &\bigg|\au_{i, z}(m) - \Eu\bigg| \leq r(\alpha, \au_{i, z}(m)) \nonumber \\
    \iff \quad &-r(\alpha, \au_{i, z}(m)) \leq \au_{i, z}(m) - \Eu \leq r(\alpha, \au_{i, z}(m)) \label{eq:mean_estimate_bound_v2} \\
    \implies \quad &-r(\alpha, \au_{i, z}(m)) \leq \au_{i, z}(m) - u_{i, z} \leq \au_{i, z}(m) - \Eu \leq r(\alpha, \au_{i, z}(m)) \tag*{(using \eqref{eq:biased_estimate_u}, \eqref{eq:error_estimate1_u}, and \eqref{eq:ralpha1_lower_bound})} \nonumber \\
    \implies \quad &-\dfrac{C\log(Nm)}{\theta_{z, i}(m)} - \sqrt{\dfrac{C\log(Nm)}{\theta_{z, i}(m)}\au_{i, z}(m)} \leq \au_{i, z}(m) - u_{i, z} \tag*{(using \eqref{eq:ralpha1})}  \nonumber \\
    \iff \quad &\au_{i, z}(m) + \dfrac{C\log(Nm)}{\theta_{z, i}(m)} + \sqrt{\dfrac{C\log(Nm)}{\theta_{z, i}(m)}\au_{i, z}(m)} \geq u_{i, z}.  \label{eq:ucb_inequality_v3}
\end{align}

For random variables $X, Y,$ we know that $\prob(X \; \text{or} \; Y) < p$ implies that $\prob(X) < p$ and $\prob(Y) < p$. Using this fact and \eqref{eq:kleinberg_ineq_v2}, we have:
\begin{align}
    &\prob\Bigg(\bigg|\au_{i, z}(m) - \Eu\bigg| > r(\alpha, \au_{i, z}(m))\Bigg) < \dfrac{2}{Nm} \nonumber\\
    \iff \quad &\prob\bigg(\au_{i, z}(m) - \Ev > r(\alpha, \au_{i, z}(m)) \;\; \text{or} \;\; \au_{i, z}(m) - \Eu < -r(\alpha, \au_{i, z}(m))\bigg) < \dfrac{2}{Nm}  \nonumber \\
    \implies \quad &\prob\bigg(\au_{i, z}(m) - \Ev < -r(\alpha, \au_{i, z}(m))\bigg) < \dfrac{2}{Nm} \nonumber \\
    \implies \quad &\prob\bigg(\au_{i, z}(m) - u_{i, z} \leq \au_{i, z}(m) - \Eu < -r(\alpha, \au_{i, z}(m))\bigg) < \dfrac{2}{Nm} \tag*{(using \eqref{eq:biased_estimate_u})} \nonumber \\
    \iff \quad &\prob\bigg(\au_{i, z}(m) - u_{i, z} < -\dfrac{C\log(Nm)}{\theta_{z, i}(m)} - \sqrt{\dfrac{C\log(Nm)}{\theta_{z, i}(m)}\au_{i, z}(m)}\bigg) < \dfrac{2}{Nm} \tag*{(using \eqref{eq:ralpha1})}  \nonumber \\
    \iff \quad &\prob\bigg(\au_{i, z}(m) + \dfrac{C\log(Nm)}{\theta_{z, i}(m)} + \sqrt{\dfrac{C\log(Nm)}{\theta_{z, i}(m)}\au_{i, z}(m)} < u_{i, z}\bigg) < \dfrac{2}{Nm}.  \label{eq:ucb_inequality_v4}
\end{align}

From \eqref{eq:uucb}, we have: 
\begin{align}
    u_{i, z, m}^{\textup{UCB}} = \au_{i, z}(m) + \dfrac{C\log(Nm)}{\theta_{z, i}(m)} + \sqrt{\dfrac{C\log(Nm)}{\theta_{z, i}(m)}\au_{i, z}(m)}. \label{eq:ucb_expression_v2}
\end{align}

Therefore from \eqref{eq:ucb_inequality_v3}, \eqref{eq:ucb_inequality_v4}, and \eqref{eq:ucb_expression_v2}, with probability at least $1-2/Nm$, we get $u_{i, z, m}^{\textup{UCB}} \geq u_{i, z}$, or equivalently, with probability less than $2/Nm$, we have $u_{i, z, m}^{\textup{UCB}} < u_{i, z}$. This concludes the proof of part $(B.1)$ of Lemma \ref{lemma_ucb}. 

From \eqref{eq:mean_estimate_bound_v2}, with probability at least $1-2/Nm$, we have:
\begin{align}
    &-r(\alpha, \au_{i, z}(m)) \leq \au_{i, z}(m) - \Eu \leq r(\alpha, \au_{i, z}(m)) \nonumber \\
    \implies \quad &\au_{i, z}(m) + r(\alpha, \au_{i, z}(m)) - \Eu \leq 2r(\alpha, \au_{i, z}(m)) \nonumber\\
    \implies \quad &u_{i, z, m}^{\textup{UCB}} -\Eu \leq 6r(\alpha, \Eu) \nonumber \tag*{(using \eqref{eq:lemma_ec1_v2} and \eqref{eq:ucb_expression_v2})}\\
    \implies \quad &u_{i, z, m}^{\textup{UCB}} -\Eu \leq \beta_3\dfrac{\log(Nm)}{\theta_{z, i}(m)} + \beta_4\sqrt{\dfrac{\log(Nm)}{\theta_{z, i}(m)}\Eu}, \nonumber
\end{align}
where $\beta_j \in \R_+$ for $j \in \set{3, 4}$. Using a similar approach, we can show that:
\begin{align*}
    \prob\bigg(u_{i, z, m}^{\textup{UCB}} -\Eu > \beta_3\dfrac{\log(Nm)}{\theta_{z, i}(m)} + \beta_4\sqrt{\dfrac{\log(Nm)}{\theta_{z, i}(m)}\Eu}\bigg) < 2/Nm.
\end{align*}

This concludes the proof of part $(B.2)$ of Lemma \ref{lemma_ucb}.

\hfill $\square$

\subsection{Proof of Lemma \ref{lemma_optimal_value_bound}}
\label{proof_lemma_optimal_value_bound}

Recall the definitions of $\omega_{i, z}(t), \av_{z, i}(t), v_{z, i, t}^{\textup{UCB}}, \theta_{z, i}(m), \au_{i, z}(m)$, $u_{i, z, m}^{\textup{UCB}}$, $\cM(S_t, \bC(t-1), c_t)$, $\uM(S_t, \bC(t-1), c_t)$, $S_t^*$, and $\pS_t$ from \eqref{eq:assortment_count}, \eqref{eq:avg_proposal_count}, \eqref{eq:vucb}, \eqref{eq:proposal_set_size}, \eqref{eq:avg_match_count}, \eqref{eq:uucb}, \eqref{eq:deltaM}, \eqref{eq:deltaM_ucb}, \eqref{eq:optimal_assortment_oracle}, and \eqref{eq:optimistic_assortment}, respectively. By Lemma \ref{lemma_ucb}, we have the following:
\begin{equation}
    \begin{aligned}\label{eq:ucb_ub}
        &\prob\bigg(v_{z, i, t}^{\textup{UCB}} < v_{z, i}\bigg) < \dfrac{2}{Nt},\\
        &\prob\bigg(u_{i, z, m}^{\textup{UCB}} < u_{i, z}\bigg) < \dfrac{2}{Nm}.
    \end{aligned}
\end{equation}
\begin{equation}
        \begin{aligned}\label{eq:ucb_lb}
        &\prob \bigg(v_{z, i, t}^{\textup{UCB}} - \Ev > \beta_1\dfrac{\log(Nt)}{\omega_{i, z}(t)} + \beta_2\sqrt{\dfrac{\log(Nt)}{\omega_{i, z}(t)}\Ev} \bigg) < \dfrac{2}{Nt},\\
        &\prob \bigg(u_{i, z, m}^{\textup{UCB}} - \Eu > \beta_3\dfrac{\log(Nm)}{\theta_{z, i}(m)} + \beta_4\sqrt{\dfrac{\log(Nm)}{\theta_{z, i}(m)}\Eu} \bigg) < \dfrac{2}{Nm}.
    \end{aligned}
\end{equation}

We define $q(\omega_{i, z}(t), \Ev)$ and $q(\theta_{z, i}(m), \Eu)$ for $\Ev$ and $\Eu$, respectively, as follows:
\begin{equation}
    \begin{aligned}\label{eq:q_definition_v2}
        q(\omega_{i, z}(t), \Ev) &\doteq \beta_1\dfrac{\log(Nt)}{\omega_{i, z}(t)} + \beta_2\sqrt{\dfrac{\log(Nt)}{\omega_{i, z}(t)}\Ev}, \\
        q(\theta_{z, i}(m), \Eu) &\doteq \beta_3\dfrac{\log(Nm)}{\theta_{z, i}(m)} + \beta_4\sqrt{\dfrac{\log(Nm)}{\theta_{z, i}(m)}\Eu}.
    \end{aligned}
\end{equation}

In the proof of Lemma \ref{lemma_ucb} (see \eqref{eq:ralpha_lower_bound} and \eqref{eq:ralpha1_lower_bound} in \S \ref{proof_lemma_ucb}), we prove the following:
\begin{equation}
    \begin{aligned}\label{eq:theta_lower_bound}
        &\prob\bigg(\omega_{i, z}(t) \leq \beta_1\log{(Nt)} \bigg) \geq 1-\dfrac{2}{Nt}, \\
        &\prob\bigg(\theta_{z, i}(m) \leq \beta_3\log{(Nm)}\bigg) \geq 1-\dfrac{2}{Nm}.
    \end{aligned}
\end{equation}

Using \eqref{eq:q_definition_v2} and \eqref{eq:theta_lower_bound}, we get:
\begin{equation}
    \begin{aligned}\label{eq:q_lower_bound}
        &\prob\bigg(q(\omega_{i, z}(t), \Ev) \geq 1 \bigg) \geq 1 - \dfrac{2}{Nt}, \\
        &\prob\bigg(q(\theta_{z, i}(m), \Eu) \geq 1 \bigg) \geq 1 - \dfrac{2}{Nm}. 
    \end{aligned}
\end{equation}

Define $E$ and $G$ as follows:
\begin{align}
    E &\doteq \bigcup_{i = 1}^N\left[\left\{q(\theta_{z, i}(m), \Eu) < 1\right\} \bigcup \left\{v_{z, i, t}^{\textup{UCB}} - \Ev > q(\omega_{i, z}(t), \Ev)\right\}\right], \nonumber \\
    G &\doteq \bigcup_{i=1}^N \left[\left\{u_{i, z, m}^{\textup{UCB}} < u_{i, z}\right\} \bigcup \set{v_{z, i, t}^{\textup{UCB}} < v_{i, z}}\right]. \nonumber 
\end{align}

Using \eqref{eq:ucb_lb}, \eqref{eq:q_lower_bound}, and $m \leq t$, we get $\prob(E) \leq 4/m$ by union bound. Similarly, using \eqref{eq:ucb_ub} and $m \leq t$, we get $\prob(G) \leq 4/m$ by union bound. We therefore get $\prob(E \cup G) \leq 8/m$ by union bound.  

Using the definitions of $E$ and $G$, we have the following:
\begin{align}
    \prob\big(E^{\complement}\big) &= \prob\left(\bigcap_{i=1}^N\left[\left\{1 \leq q(\theta_{z, i}(m), \Eu)\right\} \bigcap \left\{v_{z, i, t}^{\textup{UCB}} -\Ev \leq q(\omega_{i, z}(t), \Ev)\right\}\right]\right) \nonumber\\
    &\geq 1-\dfrac{4}{m} \label{eq:prob_ec} \\
    \prob\big(G^{\complement}\big) &= \prob\left(\bigcap_{i=1}^N \left[\left\{u_{i, z, m}^{\textup{UCB}} \geq u_{i, z}\right\} \bigcap \left\{v_{z, i, t}^{\textup{UCB}} \geq v_{z, i}\right\}\right]\right) \nonumber\\
    &\geq 1-\dfrac{4}{m}. \label{eq:prob_gc}
\end{align}

Using the above expressions and De Morgan's Law, we get:
\begin{align}
    \prob\big(E^{\complement} \cap G^{\complement}\big) = 1 - \prob\big(E \cup G\big) \geq 1 - 8/m. \label{eq:lemma2_ineq1}
\end{align}

Using \eqref{eq:prob_gc} and Lemma \ref{lemma_rucb_lower_bound} (in \S \ref{supplementary_lemmas}), we have:
\begin{align}
    &\prob\left(\bigcap_{i=1}^N\left[\left\{r_{i, z}^{\textup{UCB}}(b_m) \geq r_{i, z}(t)\right\} \bigcap \left\{v_{z, i, t}^{\textup{UCB}} \geq v_{z, i}\right\}\right]\right) \geq 1 - \dfrac{4}{m}, \label{eq:lemma2_ineq2}
\end{align}
where $r_{i, z}(t)$ and $r_{i, z}^{\textup{UCB}}(b_m)$ are given by \eqref{eq:rit} and \eqref{eq:ucb_reward}, respectively. 

We know that $r_{i, z}(t), r_{i, z}^{\textup{UCB}}(b_m) < 1$ (by definition), $\Ev \leq v_{z, i}$ (from \eqref{eq:biased_estimate}), and $\Eu \leq u_{i, z}$ (from \eqref{eq:biased_estimate_u}). Using this and \eqref{eq:prob_ec}, we have
\begin{align}
    &\prob\left(\bigcap_{i=1}^N\left[\left\{r_{i, z}^{\textup{UCB}}(b_m) -r_{i, z}(t) \leq q(\theta_{z, i}(m), u_{i, z})\right\} \bigcap \left\{v_{z, i, t}^{\textup{UCB}} - v_{z, i} \leq q(\omega_{i, z}(t), v_{z, i})\right\}\right]\right) \geq 1-\dfrac{4}{m}, \nonumber
\end{align}
which leads to the following:
\begin{align}
    \prob\left(\bigcap_{i=1}^N\bigg[v_{z, i, t}^{\textup{UCB}}r_{i, z}^{\textup{UCB}}(b_m) \leq \big(r_{i, z}(t) + q(\theta_{z, i}(m), u_{i, z})\big)\big(v_{z, i} + q(\omega_{i, z}(t), v_{z, i})\big)\bigg]\right) \geq 1-\dfrac{4}{m}. \nonumber 
\end{align}

Simplifying the above expression and using $r_{i, z}(t), r_{i, z}^{\textup{UCB}}(b_m) < 1$, we get:
\begin{align}
    &\prob\left(\bigcap_{i=1}^N\bigg[v_{z, i, t}^{\textup{UCB}}r_{i, z}^{\textup{UCB}}(b_m) - v_{z, i} r_{i, z}(t) \leq q(\theta_{z, i}(m), u_{i, z}) + q(\omega_{i, z}(t), v_{z, i}) + q(\theta_{z, i}(m), u_{i, z})q(\omega_{i, z}(t), v_{z, i})\bigg]\right) \nonumber\\
    &\geq 1-\dfrac{4}{m}. \label{eq:lemma2_ineq3}
\end{align}

From \eqref{eq:lemma2_ineq1} and \eqref{eq:lemma2_ineq2}, we know that $\bigcap_{i=1}^N\left[\left\{r_{i, z}^{\textup{UCB}}(b_m) \geq r_{i, z}(t)\right\} \bigcap \left\{v_{z, i, t}^{\textup{UCB}} \geq v_{z, i}\right\}\right] \supseteq G^{\complement}$. Similarly, from \eqref{eq:lemma2_ineq2} and \eqref{eq:lemma2_ineq3}, we have $\bigcap_{i=1}^N\bigg[v_{z, i, t}^{\textup{UCB}}r_{i, z}^{\textup{UCB}}(b_m) - v_{z, i} r_{i, z}(t) \leq q(\theta_{z, i}(m), u_{i, z}) + q(\omega_{i, z}(t), v_{z, i}) + q(\theta_{z, i}(m), u_{i, z})q(\omega_{i, z}(t), v_{z, i})\bigg] \supseteq E^{\complement}$. Thus, we have the following: 
\begin{align*}
    \prob\Bigg(&\bigcap_{i=1}^N\bigg[v_{z, i, t}^{\textup{UCB}}r_{i, z}^{\textup{UCB}}(b_m) - v_{z, i} r_{i, z}(t) \leq q(\theta_{z, i}(m), u_{i, z}) + q(\omega_{i, z}(t), v_{z, i}) + q(\theta_{z, i}(m), u_{i, z})q(\omega_{i, z}(t), v_{z, i})\bigg], \\
    &\bigcap_{i=1}^N\left[\left\{r_{i, z}^{\textup{UCB}}(b_m) \geq r_{i, z}(t)\right\} \bigcap \left\{v_{z, i, t}^{\textup{UCB}} \geq v_{z, i}\right\}\right] \Bigg) \\
    &\geq \prob\bigg(E^{\complement} \cap G^{\complement}\bigg) \geq 1-\dfrac{8}{m}.
\end{align*}

Using the above information and combining the results from \eqref{eq:lemma2_ineq1}, \eqref{eq:lemma2_ineq2}, and \eqref{eq:lemma2_ineq3}, with probability at least $1-8/m$, we get the following:
\begin{align}
    \bigcap_{i=1}^N\bigg[0 &\leq v_{z, i, t}^{\textup{UCB}}r_{i, z}^{\textup{UCB}}(b_m) - v_{z, i} r_{i, z}(t)
    \leq q(\theta_{z, i}(m), u_{i, z}) + q(\omega_{i, z}(t), v_{z, i}) + q(\theta_{z, i}(m), u_{i, z})q(\omega_{i, z}(t), v_{z, i})\bigg]. \nonumber
\end{align}

Finally, combining the above result and Lemma \ref{lemma_deltaM_difference_upperbound} (in \S \ref{supplementary_lemmas}), with probability at least $1-8/m$, we get:
\begin{align}
    0 \leq &\uM(\pS_t, \bC(t-1), c_t = z) - \cM(\pS_t, \bC(t-1), c_t = z) \nonumber\\
    &\leq \sum_{i \in \pS_t} \bigg\{\dfrac{v_{z, i, t-1}^{\textup{UCB}}r_{i, z}^{\textup{UCB}}(b_m) - v_{z, i}r_{i, z}(t)}{1+\sum_{i \in \pS_t}v_{z, i}} \bigg\} \nonumber \\
    &\leq \sum_{i \in \pS_t}\bigg(q(\theta_{z, i}(m), u_{i, z}) + q(\omega_{i, z}(t), v_{z, i}) + q(\theta_{z, i}(m), u_{i, z})q(\omega_{i, z}(t), v_{z, i})\bigg). \nonumber
\end{align}

\hfill $\square$

\subsection{Proof of Theorem \ref{thm_regret_upper_bound}}
\label{proof_thm_regret_upper_bound}

Recall the definitions of $\omega_{i, z}(t), \av_{z, i}(t), v_{z, i, t}^{\textup{UCB}}, \theta_{z, i}(m), \au_{i, z}(m)$, $u_{i, z, m}^{\textup{UCB}}$, $\cM(S_t, \bC(t-1), c_t)$, $\uM(S_t, \bC(t-1), c_t)$, $S_t^*$, and $\pS_t$ from \eqref{eq:assortment_count}, \eqref{eq:avg_proposal_count}, \eqref{eq:vucb}, \eqref{eq:proposal_set_size}, \eqref{eq:avg_match_count}, \eqref{eq:uucb}, \eqref{eq:deltaM}, \eqref{eq:deltaM_ucb}, \eqref{eq:optimal_assortment_oracle}, and \eqref{eq:optimistic_assortment}, respectively.

Using \eqref{eq:regret_def}, the regret of the TWL-UCB algorithm is given by: 
\begin{align}
    Reg{_{_\textup{TWL-UCB}}}(T) &= \sum_{m = 1}^M \E \bigg[\gamma_m\sum_{t = b_m}^{mK} \bigg\{\cM(S_t^*, \bC(t-1), c_t) - \cM(\pS_t, \bC(t-1), c_t)\bigg\}\bigg] \nonumber \\ 
    &= \sum_{m=1}^M \E\bigg[\Delta V_m\big(b_m, \bC(b_m-1)\big) \bigg], \label{eq:regret_revisited}
\end{align}
where $b_m \doteq (m-1)K + 1$, and $\Delta V_m\big(b_m, \bC(b_m-1)\big)$ is defined as:
\begin{align}
    \Delta V_m\big(b_m, \bC(b_m-1)\big) &\doteq \gamma_m\sum_{t = b_m}^{mK} \bigg\{\cM(S_t^*, \bC(t-1), c_t) - \cM(\pS_t, \bC(t-1), c_t)\bigg\}. \label{eq:deltaV}
\end{align}

{\color{black} Before diving into the detailed proof of Theorem \ref{thm_regret_upper_bound}, we provide an overview of the main steps below. To bound $Reg{_{_\textup{TWL-UCB}}}(T)$ in \eqref{eq:regret_revisited}, we compute $\E\bigg[\Delta V_m\big(b_m, \bC(b_m-1)\big) \bigg]$ under two mutually exclusive and collectively exhaustive probability regimes, $\cH_m$ and $\cH_m^{\complement}$, as: 
\[\E\bigg[\Delta V_m\big(b_m, \bC(b_m-1)\big) \bigg] = \E\bigg[\Delta V_m\big(b_m, \bC(b_m-1)\big)\I\big[\cH_m\big] \bigg] + \E\bigg[\Delta V_m\big(b_m, \bC(b_m-1)\big)\I\big[\cH_m^{\complement}\big] \bigg].\]

For the above analysis, we identify the regimes $\cH_m$ and $\cH_m^{\complement}$ and the corresponding probabilities in \eqref{eq:e_t} through \eqref{eq:theta_m_bound}. Then we segregate $\E\bigg[\Delta V_m\big(b_m, \bC(b_m-1)\big) \bigg]$ under $\cH_m$ and $\cH_m^{\complement}$, as shown in \eqref{eq:E_deltaV} through \eqref{eq:regret_epoch}. For $\cH_m$, we show that: 
\[\E\bigg[\Delta V_m\big(b_m, \bC(b_m-1)\big)\I\big[\cH_m\big] \bigg] \leq \sum_{t=b_m}^{mK}\gamma_m \dfrac{8}{m}.\] 

To upper bound $\E\bigg[\Delta V_m\big(b_m, \bC(b_m-1)\big)\I\big[\cH_m^{\complement}\big] \bigg]$, we use Lemma \ref{lemma_exp_deltaM_monotone} (in \S \ref{supplementary_lemmas}) to transform the problem of bounding $\Delta V_m\big(b_m, \bC(b_m-1)\big)\I\big[\cH_m^{\complement}\big]$  as explained in \eqref{eq:deltaM_monotone_eq} through \eqref{eq:deltaV_first}: 
\begin{align*}
    \Delta V_m\big(b_m, \bC(b_m-1)\big)\I\big[\cH_m^{\complement}\big] &= \gamma_m\sum_{t = b_m}^{mK} \big\{\cM(S_t^*, \bC(t-1), c_t) - \cM(\pS_t, \bC(t-1), c_t)\big\}\I\big[\cH_m^{\complement}\big]  \\
    &\leq \gamma_m\sum_{t = b_m}^{mK} \big\{\uM(\pS_t, \bC(t-1), c_t) - \cM(\pS_t, \bC(t-1), c_t)\big\}\I\big[\cH_m^{\complement}\big]. 
\end{align*}

We combine the bounds on $\E\bigg[\Delta V_m\big(b_m, \bC(b_m-1)\big)\I\big[\cH_m\big] \bigg]$ and $\E\bigg[\Delta V_m\big(b_m, \bC(b_m-1)\big)\I\big[\cH_m^{\complement}\big] \bigg]$ to obtain the bound on $\E\bigg[\Delta V_m\big(b_m, \bC(b_m-1)\big)\bigg]$ for each $m$ as shown in \eqref{eq:exp_regret_epoch}. Finally, we aggregate the bounds on $\E\bigg[\Delta V_m\big(b_m, \bC(b_m-1)\big)\bigg]$ for all $m \in \sM$, shown in \eqref{eq:regret_semifinal}, and simplify thereafter to obtain the bound on $Reg{_{_\textup{TWL-UCB}}}(T)$ in \eqref{eq:regret_revisited}. 

We now present the formal, detailed proof of Theorem \ref{thm_regret_upper_bound} below.
}

Define $\cE_t$ and $\cG_m$ as follows:
\begin{align}
    &\cE_t \doteq \bigcup_{i=1}^N \left\{v_{z, i, t}^{\textup{UCB}} < v_{z, i}\right\} \cup \left\{v_{z, i, t}^{\textup{UCB}} > \Ev + \beta_1\dfrac{\log(Nt)}{\omega_{i, z}(t)} + \beta_2\sqrt{\dfrac{\log(Nt)}{\omega_{i, z}(t)}\Ev}\right\}, \label{eq:e_t} \\
    &\cG_m \doteq \bigcup_{i=1}^N \left\{u_{i, z, m}^{\textup{UCB}} < u_{i, z}\right\} \cup \left\{u_{i, z, m}^{\textup{UCB}} > \Eu + \beta_3\dfrac{\log(Nm)}{\theta_{z, i}(m)} + \beta_4\sqrt{\dfrac{\log(Nm)}{\theta_{z, i}(m)}\Eu}\right\}. \label{eq:g_m}
\end{align}

From Lemma \ref{lemma_ucb}, we have:
\begin{equation}
    \begin{aligned}\label{eq:ucb_v}
        &\prob \bigg(v_{z, i, t}^{\textup{UCB}} - \Ev > \beta_1\dfrac{\log(Nt)}{\omega_{i, z}(t)} + \beta_2\sqrt{\dfrac{\log(Nt)}{\omega_{i, z}(t)}\Ev} \bigg) \leq \dfrac{2}{Nt},\\
        &\prob(v_{z, i, t}^{\textup{UCB}} < v_{z, i}) \leq \dfrac{2}{Nt},
    \end{aligned}
\end{equation}
\begin{equation}
    \begin{aligned}\label{eq:ucb_u}
        &\prob \bigg(u_{i, z, m}^{\textup{UCB}} - \Eu > \beta_3\dfrac{\log(Nm)}{\theta_{z, i}(m)} + \beta_4\sqrt{\dfrac{\log(Nm)}{\theta_{z, i}(m)}\Eu} \bigg) \leq \dfrac{2}{Nm},\\
        &\prob(u_{i, z, m}^{\textup{UCB}} < u_{i, z}) \leq \dfrac{2}{Nm}.
    \end{aligned}
\end{equation}

By union bound and \eqref{eq:ucb_v}, we get \(\prob(\cE_t) \leq 4/t\), and by union bound and \eqref{eq:ucb_u}, we get \(\prob(\cG_m) \leq 4/m\). Additionally, using \eqref{eq:ralpha_lower_bound} and \eqref{eq:ralpha1_lower_bound} from \S \ref{proof_lemma_ucb}, and by union bound, we get:
\begin{equation}
    \begin{aligned}\label{eq:theta_lower_bound_v2}
        &\prob\left(\bigcap_{i=1}^N \left\{ \omega_{i, z}(t) \leq \beta_1\log{(Nt)}\right\}\right) \geq 1-\dfrac{2}{t}, \\
        &\prob\left(\bigcap_{i=1}^N\left\{\theta_{z, i}(m) \leq \beta_3\log{(Nm)}\right\}\right) \geq 1-\dfrac{2}{m}.
    \end{aligned}
\end{equation}

Define $\cH_m \doteq \cE_t \cup \cG_m$. Then by union bound and $m \leq t$, $\prob(\cH_m) \leq 8/m$, which leads to $\prob\big(\cH_m^{\complement}\big) \geq 1 - 8/m$. Using \eqref{eq:theta_lower_bound_v2} and the definitions of $\cE_t, \cG_m$, and $\cH_m$, we know that $\bigcap_{i=1}^N\left\{\theta_{z, i}(m) \leq \beta_3\log{(Nm)}\right\} \supseteq \cH_m^{\complement}$. Therefore, we get:
\begin{align}
    \prob\left(\cH_m^{\complement}, \;\; \bigcap_{i=1}^N\left\{\theta_{z, i}(m) \leq \beta_3\log{(Nm)}\right\}\right) \geq 1 - \dfrac{8}{m}. \label{eq:theta_m_bound}
\end{align}

We now split $\E\bigg[\Delta V_m\big(b_m, \bC(b_m-1)\big)\bigg]$ from \eqref{eq:regret_revisited} into probability regimes of two complementary scenarios, $\cH_m$ and $\cH_m^{\complement}$, as shown below: 
\begin{align}
    &\E\bigg[\Delta V_m\big(b_m, \bC(b_m-1)\big)\bigg] \nonumber\\ 
    &= \E\bigg[\Delta V_m\big(b_m, \bC(b_m-1)\big) \I\big[\cH_m\big] + \Delta V_m\big(b_m, \bC(b_m-1)\big) \I\big[\cH_m^{\complement}\big] \bigg] \nonumber \\
    &= \E\bigg[\Delta V_m\big(b_m, \bC(b_m-1)\big) \I\big[\cH_m\big]\bigg] + \E\bigg[\Delta V_m\big(b_m, \bC(b_m-1)\big) \I\big[\cH_m^{\complement}\big] \bigg]. \label{eq:E_deltaV}
\end{align}

Note that $V_m\big(b_m, \bC(b_m-1)\big) = \E\left[\sum_{t=b_m}^{mK} \gamma_m\cM(S_t, \bC(t-1), c_t)\right] \leq \sum_{t=b_m}^{mK}\gamma_m$ since $\cM(S_t, \bC(t-1), c_t) \leq 1$ for all $t$. Therefore, $\Delta V_m\big(b_m, \bC(b_m-1)\big) \leq \sum_{t=b_m}^{mK}\gamma_m$ and therefore from \eqref{eq:E_deltaV} we get:
\begin{align}
    \E\bigg[\Delta V_m\big(b_m, \bC(b_m-1)\big)\bigg] &\leq \sum_{t=b_m}^{mK}\gamma_m\E\bigg[\I\big[\cH_m\big]\bigg] + \E\bigg[\Delta V_m\big(b_m, \bC(b_m-1)\big) \I\big[\cH_m^{\complement}\big] \bigg] \nonumber \\ 
    &\leq \sum_{t=b_m}^{mK}\gamma_m\dfrac{8}{m} + \E\bigg[\Delta V_m\big(b_m, \bC(b_m-1)\big) \I\big[\cH_m^{\complement}\big] \bigg]. \label{eq:regret_epoch}
\end{align}

When $\cH_m^{\complement}$ is true, then from Lemma \ref{lemma_exp_deltaM_monotone} (in \S \ref{supplementary_lemmas}), we have:
\begin{align}
    \uM(\pS_t, \bC(t-1), c_t) \geq \uM(S_t^*, \bC(t-1), c_t) \geq \cM(S_t^*, \bC(t-1), c_t). \label{eq:deltaM_monotone_eq}
\end{align}

Using \eqref{eq:deltaV}, \eqref{eq:deltaM_monotone_eq}, and Lemma \ref{lemma_optimal_value_bound}, we can therefore express $\Delta V_m\big(b_m, \bC(b_m-1)\big) \I\big[\cH_m^{\complement}\big]$ in \eqref{eq:regret_epoch} as follows:
\begin{align}
    &\Delta V_m\big(b_m, \bC(b_m-1)\big) \I\big[\cH_m^{\complement}\big] \nonumber \\
    &= \gamma_m\sum_{t = b_m}^{mK} \big\{\cM(S_t^*, \bC(t-1), c_t) - \cM(\pS_t, \bC(t-1), c_t)\big\}\I\big[\cH_m^{\complement}\big] \nonumber  \\
    &\leq \gamma_m\sum_{t = b_m}^{mK} \big\{\uM(\pS_t, \bC(t-1), c_t) - \cM(\pS_t, \bC(t-1), c_t)\big\}\I\big[\cH_m^{\complement}\big] \nonumber\\
    &\leq \gamma_m\sum_{t =b_m}^{t = mK}\sum_{i \in \pS_t}\bigg(q(\omega_{i, c_t}(t), v_{c_t, i}) + q(\theta_{c_t, i}(m), u_{i, c_t}) + q(\omega_{i, c_t}(t), v_{c_t, i})q(\theta_{c_t, i}(m), u_{i, c_t})\bigg) \nonumber\\
    &= \gamma_m\sum_{t =b_m}^{t = mK} \sum_{i \in \pS_t}\bigg[\beta_1\dfrac{\log(Nt)}{\omega_{i, c_t}(t)} + \beta_2\sqrt{\dfrac{\log(Nt)}{\omega_{i, c_t}(t)}v_{c_t, i}} + \beta_3\dfrac{\log(Nm)}{\theta_{c_t, i}(m)} + \beta_4\sqrt{\dfrac{\log(Nm)}{\theta_{c_t, i}(m)}u_{i, c_t}}\bigg] \nonumber \\
    &+ \gamma_m\sum_{t =b_m}^{t = mK} \sum_{i \in \pS_t}\bigg[\beta_1\dfrac{\log(Nt)}{\omega_{i, c_t}(t)} + \beta_2\sqrt{\dfrac{\log(Nt)}{\omega_{i, c_t}(t)}v_{c_t, i}}\bigg]\bigg[\beta_3\dfrac{\log(Nm)}{\theta_{c_t, i}(m)} + \beta_4\sqrt{\dfrac{\log(Nm)}{\theta_{c_t, i}(m)}u_{i, c_t}}\bigg] \nonumber \\
    &\leq \beta \gamma_m\sum_{t =b_m}^{t = mK} \sum_{i \in \pS_t}\Bigg[\dfrac{\log(Nt)}{\omega_{i, c_t}(t)} + \sqrt{\dfrac{\log(Nt)}{\omega_{i, c_t}(t)}v_{c_t, i}} + \dfrac{\log(Nt)}{\theta_{c_t, i}(m)} + \sqrt{\dfrac{\log(Nt)}{\theta_{c_t, i}(m)}u_{i, c_t}} \nonumber \\
    &+\dfrac{\log^2(Nt)}{\theta_{c_t, i}(m)\omega_{i, c_t}(t)} + \dfrac{\log^{1.5}(Nt)\sqrt{v_{c_t, i}}}{\theta_{c_t, i}(m)\sqrt{\omega_{i, c_t}(t)}} + \dfrac{\log^{1.5}(Nt)\sqrt{u_{i, c_t}}}{\sqrt{\theta_{c_t, i}(m)}\omega_{i, c_t}(t)} + \dfrac{\log(Nt)\sqrt{u_{i, c_t}v_{c_t, i}}}{\sqrt{\theta_{c_t, i}(m)\omega_{i, c_t}(t)}}\Bigg], \label{eq:deltaV_first}
\end{align}
where the last inequality follows from $m \leq t$, and \mbox{$\beta = \max\set{\beta_i, \prod_{i \neq j}\beta_i\beta_j \; \forall i, j \in \set{1, \ldots, 4}}$}. Using Assumption \ref{assumption:mnl}, and $\theta_{c_t, i}(m) \leq \omega_{i, c_t}(t))$, we can  simplify \eqref{eq:deltaV_first} to get the following:
\begin{align}
    &\Delta V_m\big(b_m, \bC(b_m-1)\big) \I\big[\cH_m^{\complement}\big] \nonumber \\
    &\leq \beta \gamma_m\sum_{t =b_m}^{t = mK} \sum_{i \in \pS_t}\bigg[\dfrac{\log^2(Nt)}{\theta_{c_t, i}^2(m)} + \dfrac{2\log^{1.5}(Nt)}{\theta_{c_t, i}^{1.5}(m)} + \dfrac{3\log(Nt)}{\theta_{c_t, i}(m)} + 2\sqrt{\dfrac{\log(Nt)}{\theta_{c_t, i}(m)}}\bigg] \nonumber\\
    &\leq \beta \gamma_m\sum_{t =b_m}^{t = mK} \sum_{i \in \pS_t}\bigg[\dfrac{\log^2(mKN)}{\theta_{c_t, i}^2(m)} + \dfrac{2\log^{1.5}(mKN)}{\theta_{c_t, i}^{1.5}(m)} + \dfrac{3\log(mKN)}{\theta_{c_t, i}(m)} + 2\sqrt{\dfrac{\log(mKN)}{\theta_{c_t, i}(m)}}\bigg], \label{eq:delta_v_complement}
\end{align}
where the last inequality follows from $t \leq mK$. Substituting $\Delta V_m\big(b_m, \bC(b_m-1)\big) \I\big[\cH_m^{\complement}\big]$ from \eqref{eq:delta_v_complement} in \eqref{eq:regret_epoch} and considering $\gamma_m \leq \bR$, we get: 
\begin{align}
    \E\bigg[\Delta V_m\big(b_m, \bC(b_m-1)\big)\bigg] &\leq \bR\sum_{t=b_m}^{mK}\dfrac{8}{m} \nonumber \\ 
    &+ \beta\bR \E\left[\log^2(mKN)\sum_{t =b_m}^{t = mK}\sum_{i \in \pS_t}\dfrac{1}{\theta_{c_t, i}^2(m)} + 2\log^{1.5}(mKN)\sum_{t =b_m}^{t = mK}\sum_{i \in \pS_t}\dfrac{1}{\theta_{c_t, i}^{1.5}(m)} \nonumber \right. \\ 
    &\left. + 3\log(mKN)\sum_{t =b_m}^{t = mK}\sum_{i \in \pS_t}\dfrac{1}{\theta_{c_t, i}(m)} + 2\sqrt{\log(mKN)}\sum_{t =b_m}^{t = mK}\sum_{i \in \pS_t}\sqrt{\dfrac{1}{\theta_{c_t, i}(m)}} \right]. \label{eq:exp_regret_epoch}
\end{align}

Substituting $\E\bigg[\Delta V_m\big(b_m, \bC(b_m-1)\big)\bigg]$ from \eqref{eq:exp_regret_epoch} in \eqref{eq:regret_revisited}, we get: 
\begin{align}
    Reg{_{_\textup{TWL-UCB}}}(T) &\leq 8K\bR\sum_{m = 1}^M\dfrac{1}{m} \nonumber \\ 
    &+ \beta\bR \sum_{m = 1}^M\E\left[\sum_{t =b_m}^{t = mK}\sum_{i \in \pS_t}\dfrac{\log^2(mKN)}{\theta_{c_t, i}^2(m)} + 2\sum_{t =b_m}^{t = mK}\sum_{i \in \pS_t}\dfrac{\log^{1.5}(mKN)}{\theta_{c_t, i}^{1.5}(m)} \nonumber \right. \\
    &\left. + 3\sum_{t =b_m}^{t = mK}\sum_{i \in \pS_t}\dfrac{\log(mKN)}{\theta_{c_t, i}(m)} + 2\sum_{t =b_m}^{t = mK}\sum_{i \in \pS_t}\sqrt{\dfrac{\log(mKN)}{\theta_{c_t, i}(m)}} \right]. \label{eq:regret_semifinal}
\end{align}

{\color{black} Under the MNL model, whenever seller $i$ is included in an offered assortment, the probability that customer type $c_t$ extends a proposal to seller $i$ is always at least:
\begin{align*}
p _{min} \doteq \min_{z \in \cC, j \in \cS} \frac{v_{z, j}}{1 + \sum_{h \in \pS_t} v_{z, h}} \ge \frac{\min_{z \in \cC, j \in \cS} v_{z, j}}{1 + B} > 0.
\end{align*}

To evaluate the sums involving $\theta_{c_t, i}(m)$ in \eqref{eq:regret_semifinal}, we must account for the fact that $\theta_{c_t, i}(m)$ only increments when customer $c_t$ extends a proposal. Suppose that $\theta_{c_t, i}(m) = k_i$. Given $k_i$, let $\Delta \omega_{i}^{(k_i)}$ denote the number of periods in which seller $i$ is included in the offered assortment before customer type $c_t$ extends the next proposal to seller $i$. 

Let $\mathcal{X}_{i, l}$ denote the event that seller $i$ does not receive a proposal in the $l$-th assortment inclusion. 
Because the MNL model guarantees that the probability of a proposal is always at least $p_{min}$, regardless of previous outcomes, we have:
\begin{align}
\mathbb{P}\big(\mathcal{X}_{i, l} \mid \mathcal{X}_{i, 1}, \ldots, \mathcal{X}_{i, l-1}\big) 
= 1 - \mathbb{P}\big(\mathcal{X}_{i, l}^{\complement} \mid \mathcal{X}_{i,1}, \ldots, \mathcal{X}_{i,l-1}\big) 
\le 1 - p_{min}. \nonumber
\end{align}

For any integer $L \geq 1$, we can write:
\begin{align*}
    \mathbb{P}\big(\Delta \omega_{i}^{(k_i)} > L\big) &\doteq \mathbb{P}\big(\text{no proposal to $i$ in next $L$ assortment inclusions}\big) \nonumber\\
    &=\mathbb{P}\big(\mathcal{X}_{i, 1}, \ldots, \mathcal{X}_{i, L}\big)\\
    &= \prod_{l=1}^L \mathbb{P}\big(\mathcal{X}_{i, l} \mid \mathcal{X}_{i, 1} \cap \cdots \cap \mathcal{X}_{i, l-1}\big) \leq \prod_{l=1}^L (1-p_{min}) = (1-p_{min})^L \nonumber.
\end{align*}

Thus, for any $i \in \cS$ and $c_t \in \cC$, the expected number of assortment inclusions made before the next proposal is received is bounded by:
\begin{align}
    \E\left[\Delta \omega_{i}^{(k_i)}\right] &= \sum_{L=0}^{\infty} \mathbb{P}\big(\Delta \omega_{i}^{(k_i)} > L\big) \leq \sum_{L=0}^{\infty} (1-p_{min})^L = \dfrac{1}{p_{min}}. \label{eq:proposal_timeline}
\end{align}

Using \eqref{eq:theta_m_bound}, \eqref{eq:proposal_timeline}, $\sum_{1 \leq x < \infty} \dfrac{1}{x^p} < \infty \;\forall p > 1$, and $\sum_{m = 1}^M \dfrac{1}{m} \leq 1 + \log(M) \leq 1 + \log(T)$, we can therefore bound $Reg{_{_\textup{TWL-UCB}}}(T)$ in \eqref{eq:regret_semifinal} as:
\begin{align}
    &Reg{_{_\textup{TWL-UCB}}}(T) \nonumber\\ 
    &\leq 8K\bR[1 + \log(T)] \nonumber \\ 
    &+ \beta \bR\left\{\E\left[\sum_{m = 1}^M\sum_{t =b_m}^{t = mK}\sum_{i \in \pS_t}\dfrac{\log^2(NT)}{\theta_{c_t, i}^2(m)}\right] + 2\E\left[\sum_{m = 1}^M\sum_{t =b_m}^{t = mK}\sum_{i \in \pS_t}\dfrac{\log^{1.5}(NT)}{\theta_{c_t, i}^{1.5}(m)}\right] \nonumber \right.  \\
    &\left. + 3\E\left[\sum_{m = 1}^M\sum_{t =b_m}^{t = mK}\sum_{i \in \pS_t}\dfrac{\log(NT)}{\theta_{c_t, i}(m)}\right] + 2\E\left[\sum_{m = 1}^M\sum_{t =b_m}^{t = mK}\sum_{i \in \pS_t}\sqrt{\dfrac{\log(NT)}{\theta_{c_t, i}(m)}}\right]\right\} \nonumber \tag*{(using \mbox{$mK \leq MK = T$})} \\
    &\leq \alpha_1\log(T) \nonumber \\ 
    &+ \beta \bR \left\{\E\left[\sum_{i = 1}^N\sum_{m = 1}^M\sum_{t =b_m}^{t = mK}\dfrac{\log^2(NT)}{\theta_{c_t, i}^2(m)}\right] + 2\E\left[\sum_{i = 1}^N\sum_{m = 1}^M\sum_{t =b_m}^{t = mK}\dfrac{\log^{1.5}(NT)}{\theta_{c_t, i}^{1.5}(m)}\right] \nonumber \right. \\ 
    &\left. + 3\E\left[\sum_{i = 1}^N\sum_{m = 1}^M\sum_{t =b_m}^{t = mK}\dfrac{\log(NT)}{\theta_{c_t, i}(m)}\right] + 2\E\left[\sum_{i = 1}^N\sum_{m = 1}^M\sum_{t =b_m}^{t = mK}\sqrt{\dfrac{\log(NT)}{\theta_{c_t, i}(m)}}\right]\right\} \nonumber \tag*{(using \mbox{$|\pS_t| \leq B \leq N$} and \mbox{$\alpha_1 \in \R_+$})}  \\
    &\leq \alpha_1\log(T) \nonumber \\ 
    &+ \beta\bR \sum_{i = 1}^N \sum_{z \in \cC} \left\{\E\left[\sum_{k_{i} = 1}^{\beta_3\log(NT)}\Delta \omega_{i}^{(k_i)}\dfrac{\log^2(NT)}{k_{i}^2}\right] + 2\E\left[\sum_{k_{i} = 1}^{\beta_3\log(NT)}\Delta \omega_{i}^{(k_i)}\dfrac{\log^{1.5}(NT)}{k_{i}^{1.5}}\right] \nonumber \right. \\ 
    &\left. + 3\E\left[\sum_{k_{i} = 1}^{\beta_3\log(NT)}\Delta \omega_{i}^{(k_i)}\dfrac{\log(NT)}{k_{i}}\right] + 2\E\left[\sum_{k_{i} = 1}^{\beta_3\log(NT)}\Delta \omega_{i}^{(k_i)}\sqrt{\dfrac{\log(NT)}{k_{i}}}\right]\right\}.  \label{eq:transition_equation}
\end{align}  

{\color{black} In \eqref{eq:transition_equation}, we transform the summation involving $\theta_{c_t, i}(m)$ over $t$ into a summation over its discrete values $k_i$ for all $i \in \cS$; the rationale for this transformation is explained further. The inner sum over $t$ evaluates the terms at the current value of $\theta_{c_t, i}(m)$. However, $\theta_{c_t, i}(m)$ may not increment at every $t$; it remains fixed at some value $k_i$ for exactly $\Delta \omega_{i}^{(k_i)}$ periods, which is the number of times seller $i$ is offered to customer type $c_t$ before receiving the next proposal. Consequently, instead of adding the terms sequentially over time $t$, we can group them by $k_i$. Because the exact sequence of arriving customer types $c_t$ is unknown, we can safely upper-bound the regret by evaluating this grouped sum over all possible customer types $z \in \cC$. For any given customer type $z \in \cC$, the term $1/k_i^p$, where $p \geq 1,$ appears in this bound exactly $\Delta \omega_{i}^{(k_i)}$ times. Furthermore, under the high-probability event established in \eqref{eq:theta_m_bound}, the maximum value of $\theta_{c_t, i}(m)$ is bounded above by $\beta_3\log(NT)$. 

Finally, we simplify \eqref{eq:transition_equation} as follows:}
\begin{align*}
    &Reg{_{_\textup{TWL-UCB}}}(T) \nonumber\\
    &\leq \alpha_1\log(T) \nonumber \\ 
    &+ \beta\bR \sum_{i = 1}^N \sum_{z \in \cC} \left\{\sum_{k_{i} = 1}^{\beta_3\log(NT)}\dfrac{\log^2(NT)}{k_{i}^2}\E\left[\Delta \omega_{i}^{(k_i)}\right] + 2\sum_{k_{i} = 1}^{\beta_3\log(NT)}\dfrac{\log^{1.5}(NT)}{k_{i}^{1.5}}\E\left[\Delta \omega_{i}^{(k_i)}\right] \nonumber \right. \\ 
    &\left. + 3\sum_{k_{i} = 1}^{\beta_3\log(NT)}\dfrac{\log(NT)}{k_{i}}\E\left[\Delta \omega_{i}^{(k_i)}\right] + 2\sum_{k_{i} = 1}^{\beta_3\log(NT)}\sqrt{\dfrac{\log(NT)}{k_{i}}}\E\left[\Delta \omega_{i}^{(k_i)}\right]\right\}  \nonumber \\
    &\leq \alpha_1\log(T) \nonumber \\ 
    &+ \dfrac{\beta\bR }{p_{min}} \sum_{i = 1}^N \sum_{z \in \cC} \left\{\sum_{k_{i} = 1}^{\beta_3\log(NT)}\dfrac{\log^2(NT)}{k_{i}^2} + 2\sum_{k_{i} = 1}^{\beta_3\log(NT)}\dfrac{\log^{1.5}(NT)}{k_{i}^{1.5}} \nonumber \right. \\ 
    &\left. + 3\sum_{k_{i} = 1}^{\beta_3\log(NT)}\dfrac{\log(NT)}{k_{i}} + 2\sum_{k_{i} = 1}^{\beta_3\log(NT)}\sqrt{\dfrac{\log(NT)}{k_{i}}}\right\}  \nonumber \\
    &\stackrel{(a)}{\leq} \alpha_1\log(T) \nonumber\\
    &+ \alpha_2 \beta \bR |\cC|N \bigg\{\log^2(NT) + \log^{1.5}(NT) + \log(NT)\log\big(\log(NT)\big) + \sqrt{\log(NT)}\sqrt{\log(NT)}\bigg\} \nonumber \\
    &\leq \alpha\log^2(NT), \; \text{where $\alpha \in \R_+$, and inequality $(a)$ follows from the fact that \(\sum_{x = 1}^{n} \dfrac{1}{\sqrt{x}} < 2\sqrt{n}\).} \nonumber
\end{align*}

}

\hfill $\square$ 

\section{Proofs of Lemmas and Theorem in the Main Text: Lower bound}
\label{ec_thm_2}

\subsection{Proof of Lemma \ref{lemma_inst_regret_suboptimal}}
\label{proof_lemma_inst_regret_suboptimal}
 
Suppose that the arriving customer in period $t$ in epoch $m$ is of type $c_{t} = z \in \cC$. Under $\Theta_{\overline{S}}$, $v_{z, i} = v(1+\ep_z)$ for $i \in {\overline{S}}$ and $v_{z, i} = v$ for $i \notin {\overline{S}}$ (see \eqref{eq:param_space} for the definition of $\Theta_{\overline{S}}$). Therefore, under $\Theta_{\overline{S}}$, ${\overline{S}}$ is the ``single best assortment" by Lemma \ref{lemma_optimal_assortment_size} (in \S \ref{supplementary_lemmas}). The one-step regret under $\Theta_{\overline{S}}$ is then given by:
\begin{align*}
    \gamma_m \left(\cM({\overline{S}}) - \cM(S_t)\right) &= \dfrac{(1+u)\sum_{i \in {\overline{S}}}v_{z, i}r_{i, z}(t)}{1 + \sum_{i \in {\overline{S}}}v_{z, i}} - \dfrac{(1+u)\sum_{i \in S_{t}}v_{z, i}r_{i, z}(t)}{1 + \sum_{i \in S_t}v_{z, i}} \\
    &= u \bigg[\dfrac{Bv(1+\ep_z)}{1 + Bv(1+\ep_z)} - \dfrac{|S_t \cap {\overline{S}}|v(1+\ep_z) + v(B-|S_t \cap {\overline{S}}|)}{1 + |S_t \cap {\overline{S}}|v(1+\ep_z) + v(B-|S_t \cap {\overline{S}}|)} \bigg] \tag*{(using \eqref{eq:reward_parameter_space})} \\
    &= uv \bigg[\dfrac{B(1+\ep_z)}{1 + Bv(1+\ep_z)} - \dfrac{B + \ep_z|S_t \cap {\overline{S}}|}{1 + v(B + \ep_z|S_t \cap {\overline{S}}|)} \bigg] \\
    &= uv \bigg[\dfrac{\ep_z(B - |S_t \cap {\overline{S}}|)}{(1 + Bv(1+\ep_z))(1 + v(B + \ep_z|S_t \cap {\overline{S}}|))}\bigg] \\
    &\geq \dfrac{uv\ep_z(B - |S_t \cap {\overline{S}}|)}{(1 + B)^2}. \tag*{(using \mbox{$0 < v \leq 0.5, 0 < \ep_z \leq 1$} , and \mbox{$|S_t \cap {\overline{S}}| \leq B$})}
\end{align*}

\hfill $\square$ 

\subsection{Proof of Lemma \ref{lemma_pinskers_ineq}}
\label{proof_lemma_pinskers_ineq}

Using $\eta_i(m) \doteq \sum_{t = b_m}^{mK}\I\big[i \in S_t\big]$ for $S_t \in \sS_B$, we have:
\begin{align*}
    &\Bigg|\E_{\tS}\bigg[\sum_{t = b_m}^{mK}\I\big[i \in S_t\big]\bigg] - \E_{\overline{S}}\bigg[\sum_{t = b_m}^{mK}\I\big[i \in S_t\big]\bigg]\Bigg|\\ 
    &= \bigg|\E_{\tS}\big[\eta_i(m)\big] - \E_{\overline{S}}\big[\eta_i(m)\big]\bigg| \\ 
    &= \Bigg|\sum_{x = 0}^K \bigg[x\prob_{\tS}\bigg(\eta_i(m) = x\bigg) - x\prob_{\overline{S}}\bigg(\eta_i(m) = x\bigg)\bigg]\Bigg|\\
    &\leq \sum_{x = 0}^K x\Bigg|\prob_{\tS}\bigg(\eta_i(m) = x\bigg) - \prob_{\overline{S}}\bigg(\eta_i(m) = x\bigg)\Bigg| \tag*{(using triangle inequality)} \\
    &\leq K\sum_{x = 0}^K \Bigg|\prob_{\tS}\bigg(\eta_i(m) = x\bigg) - \prob_{\overline{S}}\bigg(\eta_i(m) = x\bigg)\Bigg|\\
    &= K||\prob_{\tS} - \prob_{\overline{S}}||_1 \leq  K\sqrt{2\textup{KL}_m\big(\prob_{\tS}\parallel \prob_{\overline{S}}\big)} = K\sqrt{2\textup{KL}_m\big(\prob_{\tS} \parallel \prob_{\tS \cup \set{i}}\big)}.  \tag*{(by Pinsker's Inequality)}
\end{align*}

\hfill $\square$ 

\subsection{Proof of Lemma \ref{lemma_kl_divergence_neighbor}}
\label{proof_lemma_kl_divergence_neighbor}

Let $\hS_t \in \sS$ be the assortment offered at time $t$ to an incoming customer of type $c_t$, where $|\hS_t| \leq B$. If $i \notin \hS_t$, then $\textup{KL}_m\big(\prob_{\tS}(\cdot \mid \hS_t)\parallel \prob_{\overline{S}}(\cdot \mid \hS_t)\big) = 0$ at time $t$ since $\prob_{\tS}(\cdot \mid \hS_t)$ and $\prob_{\overline{S}}(\cdot \mid \hS_t)$ have the same distributions. Therefore, we focus on the case when $i \in \hS_t$. 

Let $|\hS_t| = B_{\pi} \leq B$. For any $\tS \in \sS_{B-1}^{-i}$, we have $|\hS_t \cap \tS| \leq B-1$. Let $|\hS_t \cap \tS| = J$. Then, for ${\overline{S}} \in \sS_B$, we get $|\hS_t \cap {\overline{S}}| = J + 1 \leq B$. Let $v_{c_t, k}^{\overline{S}}$ and $v_{c_t, k}^{\tS}$ represent the MNL parameters of customer of type $c_t \in \cC$ for seller $k \in \cS$ under $\Theta_{\overline{S}}$ and $\Theta_{\tS}$, respectively. Then the probability that customer of type $c_t$ chooses $j \in \hS_t \cup \set{0}$, where $``\set{0}"$ represents an outside option or no choice on the platform, is $p_{c_t, j}^{\overline{S}} = v_{c_t, j}^{\overline{S}}\big/(1+\sum_{k \in \hS_t}v_{c_t, k}^{\overline{S}})$ and $p_{c_t, j}^{\tS} = v_{c_t, j}^{\tS}\big/(1+\sum_{k \in \hS_t}v_{c_t, k}^{\tS})$ under $\prob_{\overline{S}}$ and $\prob_{\tS}$, respectively. Under $\prob_{\overline{S}}$ and $\prob_{\tS}$, $p_{c_t, j}^{\overline{S}}$ and $p_{c_t, j}^{\tS}$ are, respectively, given by:
\begin{align}
    p_{c_t, j}^{\overline{S}} &= \dfrac{v_{c_t, j}^{\overline{S}}}{1+\sum_{k \in \hS_t \cap {\overline{S}}}v_{c_t, k}^{\overline{S}} + \sum_{k \notin \hS_t \cap {\overline{S}}}v_{c_t, k}^{\overline{S}}} \nonumber  \\ 
    &= \dfrac{v_{c_t, j}^{\overline{S}}}{1+|\hS_t\cap {\overline{S}}|v(1+\ep_{c_t}) + v(|\hS_t| - |\hS_t\cap {\overline{S}}|)} = \dfrac{v_{c_t, j}^{\overline{S}}}{1+v(B_{\pi} + \ep_{c_t}(J+1))}. \label{eq:prob_S} \\
    p_{c_t, j}^{\tS} &= \dfrac{v_{c_t, j}^{\tS}}{1+\sum_{k \in \hS_t \cap \tS}v_{c_t, k}^{\tS} + \sum_{k \notin \hS_t \cap \tS}v_{c_t, k}^{\tS}} \nonumber \\ 
    &= \dfrac{v_{c_t, j}^{\tS}}{1+|\hS_t\cap \tS|v(1+\ep_{c_t}) + v(|\hS_t| - |\hS_t\cap \tS|)} = \dfrac{v_{c_t, j}^{\tS}}{1+v(B_{\pi} + \ep_{c_t}J)}. \label{eq:prob_Stilde}
\end{align}

Case 1: $j = 0$.
\begin{align}
    |p_{c_t, 0}^{\tS} - p_{c_t, 0}^{\overline{S}}| &= \bigg|\dfrac{1}{1+v(B_{\pi} + \ep_{c_t}J)} - \dfrac{1}{1+v(B_{\pi} + \ep_{c_t}(J+1))}\bigg| \nonumber  \\ 
    &= \bigg|\dfrac{v\ep_{c_t}}{(1+v(B_{\pi} + \ep_{c_t}J))(1+v(B_{\pi} + \ep_{c_t}(J+1)))}\bigg| \leq v\ep_{c_t}.  \label{eq:case1}
\end{align}

Case 2: $j = i$, where $i \notin \tS$ by design.
\begin{align}
    |p_{c_t, j}^{\tS} - p_{c_t, j}^{\overline{S}}| &= \bigg|\dfrac{v}{1+v(B_{\pi} + \ep_{c_t}J)} - \dfrac{v(1+\ep_{c_t})}{1+v(B_{\pi} + \ep_{c_t}(J+1))}\bigg| \nonumber  \\
    &= v\bigg|\dfrac{1}{1+v(B_{\pi} + \ep_{c_t}J)} - \dfrac{1}{1+v(B_{\pi} + \ep_{c_t}(J+1))} + \dfrac{-\ep_{c_t}}{1+v(B_{\pi} + \ep_{c_t}(J+1))}\bigg| \nonumber  \\
    &\leq v\bigg\{\bigg|\dfrac{1}{1+v(B_{\pi} + \ep_{c_t}J)} - \dfrac{1}{1+v(B_{\pi} + \ep_{c_t}(J+1))}\bigg| + \bigg|\dfrac{-\ep_{c_t}}{1+v(B_{\pi} + \ep_{c_t}(J+1))}\bigg|\bigg\} \nonumber \\ 
    &\text{(by triangle inequality)} \nonumber \\
    &= v(|p_{c_t, 0}^{\tS} - p_{c_t, 0}^{\overline{S}}| + \ep_{c_t}) \leq v(v\ep_{c_t} + \ep_{c_t}) = v^2\ep_{c_t} + v\ep_{c_t} \leq 2v\ep_{c_t}. \label{eq:case2} \\
    &\text{(using \eqref{eq:case1} and \mbox{$v \in (0, 0.5]$})} \nonumber
\end{align}

Case 3: $j \neq \set{i, 0}$, where $i \notin \tS$ by design.
\begin{align}
    |p_{c_t, j}^{\tS} - p_{c_t, j}^{\overline{S}}| &= \bigg|\dfrac{v(1+\ep_{c_t})}{1+v(B_{\pi} + \ep_{c_t}J)} - \dfrac{v(1+\ep_{c_t})}{1+v(B_{\pi} + \ep_{c_t}(J+1))}\bigg| \nonumber  \\ 
    &= v(1+\ep_{c_t})|p_{c_t, 0}^{\tS} - p_{c_t, 0}^{\overline{S}}| \leq v^2(1+\ep_{c_t})\ep_{c_t} \nonumber\\ 
    &= v^2\ep_{c_t} + v^2\ep^2_{c_t} \leq 2v^2\ep_{c_t}. \label{eq:case3}\\
    &\text{(using \eqref{eq:case1} and \mbox{$\ep_{c_t} \in (0, 1]$})} \nonumber
\end{align}

Since $B_{\pi} \leq B, J+1 \leq B$, $0 < v \leq 0.5$, and $0 < \ep_{c_t} \leq 1$, we have: 
\begin{equation}
    \begin{aligned}\label{eq:prob_bound}
        &p_{c_t, 0}^{\overline{S}} = \dfrac{1}{1 + v(B_{\pi} + \ep_{c_t}(J+1))} \geq \dfrac{1}{B+1}, \\
        &p_{c_t, j}^{\overline{S}} = \dfrac{v_{c_t, j}^{\overline{S}}}{1 + v(B_{\pi} + \ep_{c_t}(J+1))} \geq \dfrac{v(1+\ep_{c_t})}{B+1} \geq \dfrac{v}{B+1}, \;\; \text{where $j \neq 0$}.
    \end{aligned}
\end{equation}

From Lemma \ref{lemma_chen_2018} (in \S \ref{supplementary_lemmas}), we have:
\begin{align}
    \textup{KL}_m\big(\prob_{\tS}(\cdot \mid \hS_t)\parallel \prob_{\overline{S}}(\cdot \mid \hS_t)\big) &\leq \sum_{j} \dfrac{(p_{c_t, j}^{\tS} - p_{c_t, j}^{\overline{S}})^2}{p_{c_t, j}^{\overline{S}}} \nonumber \\
    &= \dfrac{(p_{c_t, 0}^{\tS} - p_{c_t, 0}^{\overline{S}})^2}{p_{c_t, 0}^{\overline{S}}} + \dfrac{(p_{c_t, i}^{\tS} - p_{c_t, i}^{\overline{S}})^2}{p_{c_t, i}^{\overline{S}}} + \sum_{j \neq i}\dfrac{(p_{c_t, j}^{\tS} - p_{c_t, j}^{\overline{S}})^2}{p_{c_t, j}^{\overline{S}}} \nonumber \\
    &\leq \dfrac{v^2\ep_{c_t}^2}{1/(B+1)} + \dfrac{4v^2\ep_{c_t}^2}{v/(B+1)} + \sum_{j \neq i}\dfrac{4v^4\ep_{c_t}^2}{v/(B+1)}  \tag*{(using \eqref{eq:case1} through \eqref{eq:prob_bound})} \nonumber \\
    &\leq (B+1)v^2\ep_{c_t}^2 + 4(B+1)v\ep_{c_t}^2 + 4B(B+1)v^3\ep_{c_t}^2 \nonumber \\ 
    &\leq \dfrac{(B+1)\ep_{c_t}^2}{4} + 2(B+1)\ep_{c_t}^2 + \dfrac{B(B+1)}{2}\ep_{c_t}^2 \;\; \text{(using $0 < v \leq 0.5$)} \nonumber \\
    &\leq (B+1)\ep_{c_t}^2 (\dfrac{1}{4} + 2 + \dfrac{B}{2}) \leq \dfrac{(B+1)(2B + 9)}{4}\max_{z \in \cC}\ep_{z}^2.  \label{eq:kl_div}
\end{align}

Using divergence decomposition \citep{lattimore2020bandit} (see, for e.g., \cite{chen2018note, miao2022online} for related applications) and \eqref{eq:kl_div}, KL divergence measured in epoch $m$ is therefore given by:
\begin{align*}
    \textup{KL}_m\big(\prob_{\tS} \parallel \prob_{\overline{S}}\big) = \textup{KL}_m\big(\prob_{\tS} \parallel \prob_{\tS \cup \set{i}}\big) 
    &\leq \dfrac{(B+1)(2B + 9)}{4}\max_{z \in \cC}\ep_{z}^2.
\end{align*}

\hfill $\square$ 

\subsection{Proof of Theorem \ref{thm_regret_lower_bound}}
\label{proof_thm_regret_lower_bound}

{\color{black}
Let $\min\set{\ep_z}_{z \in \cC} = \ep$. Recall from \S \ref{lb_proof_outline} that $S_t \in \sS_B$ represents a full-capacity assortment such that $S_t \supseteq \hS_t$, where $\hS_t$ is the actual assortment offered by policy $\pi$. By the monotonicity of the $\cM(S)$ function with respect to $|S|$, as established in \eqref{eq:expected_reward_S} in Lemma \ref{lemma_optimal_assortment_size} (in \S \ref{supplementary_lemmas}), the sequence $\set{\hS_t}_{t\leq T}$ suffers higher regret compared to $\set{S_t}_{t \leq T}$. Moreover, $\gamma_m = (1 + u) \in (1, 2]$ for $u \in (0, 1]$ (using \eqref{eq:param_space} and \eqref{eq:gamma_epoch}). 

The worst-case regret over all possible parameterizations is lower-bounded by the worst-case regret over our constructed family of adversarial environments $\set{\Theta_{\overline{S}} \mid \overline{S} \in \sS_B}$. Furthermore, the maximum regret over this family is lower-bounded by the average regret. Therefore, we can lower-bound the worst-case regret as:}
\begin{align}
    \sup_{(\bU, \bV)} Reg_{\pi, (\bU, \bV)}(T) &\geq \max_{\overline{S} \in \sS_B} Reg_{\pi, \Theta_{\overline{S}}}(T) \nonumber\\ 
    &= \max_{\overline{S} \in \sS_B} \sum_{m=1}^M\gamma_m\E_{\overline{S}}\left[ \sum_{t=b_m}^{mK}\bigg\{\cM(\overline{S}) - \cM(\hS_t)\bigg\} \right] \nonumber\\
    &\geq \max_{\overline{S} \in \sS_B} \sum_{m=1}^M\gamma_m\E_{\overline{S}}\left[ \sum_{t=b_m}^{mK}\bigg\{\cM(\overline{S}) - \cM(S_t)\bigg\} \right] \nonumber\\
    &\geq \dfrac{1}{|\sS_B|}\sum_{\overline{S} \in \sS_B} \sum_{m=1}^M \gamma_m \E_{\overline{S}}\left[ \sum_{t=b_m}^{mK} \bigg\{\cM(\overline{S}) - \cM(S_t)\bigg\} \right] \nonumber\\
    &\geq \dfrac{1}{|\sS_B|}\sum_{\overline{S} \in \sS_B} \sum_{m = 1}^M  \E_{\overline{S}}\left[\sum_{t = b_m}^{mK}\dfrac{uv\ep_{c_t}(B - |S_t \cap \overline{S}|)}{(1 + B)^2}\right] \tag*{(from Lemma \ref{lemma_inst_regret_suboptimal})} \nonumber\\
    &\geq \dfrac{uv\ep}{(B+1)^2}\sum_{m = 1}^M \dfrac{1}{|\sS_B|}\sum_{\overline{S} \in \sS_B} \E_{\overline{S}}\left[ \sum_{t = b_m}^{mK} (B - |S_t \cap {\overline{S}}|)\right] \tag*{(using \mbox{$\ep_{c_t} \geq \ep$} for all \mbox{$c_t$})} \nonumber\\
    &= \dfrac{uv\ep}{(B+1)^2}\sum_{m = 1}^M \dfrac{1}{|\sS_B|}\sum_{{\overline{S}} \in \sS_B} \sum_{t = b_m}^{mK} \E_{\overline{S}}\bigg[B - |S_t \cap {\overline{S}}|\bigg] \nonumber\\
    &= \dfrac{uv\ep}{(B+1)^2}\sum_{m = 1}^M \dfrac{1}{|\sS_B|}\sum_{{\overline{S}} \in \sS_B} \sum_{t = b_m}^{mK} \bigg\{B - \sum_{i\in {\overline{S}}}\E_{\overline{S}}\bigg[\I\big[i \in S_t\big]\bigg]\bigg\} \nonumber\\
    &= \dfrac{uv\ep}{(B+1)^2}\sum_{m = 1}^M \left\{\sum_{t = b_m}^{mK}B - \dfrac{1}{|\sS_B|}\sum_{{\overline{S}} \in \sS_B} \sum_{i \in {\overline{S}}}\E_{\overline{S}}\left[\sum_{t = b_m}^{mK}\I\big[i \in S_t\big]\right]\right\} \nonumber\\
    &= \dfrac{uv\ep}{(B+1)^2}\sum_{m = 1}^M \bigg\{KB - \dfrac{1}{|\sS_B|}\sum_{{\overline{S}} \in \sS_B} \sum_{i\in {\overline{S}}}\E_{\overline{S}}\big[\eta_i(m)\big]\bigg\}, \label{eq:reg_lb_first_exp}
\end{align}
where $\eta_i(m) \doteq \sum_{t = b_m}^{mK}\I\big[i \in S_t\big]$. From Lemma \ref{lemma_pinskers_ineq}, we get:
\begin{align}
    \E_{\overline{S}}\big[\eta_i(m)\big] \leq \E_{\tS}\big[\eta_i(m)\big] + K\sqrt{2\textup{KL}(\prob_{\tS}\parallel \prob_{\tS \cup \set{i}})}. \label{eq:lemma5_inference} 
\end{align}

Using \eqref{eq:lemma5_inference}, we can rewrite \(\dfrac{1}{|\sS_B|}\sum_{{\overline{S}} \in \sS_B} \sum_{i\in {\overline{S}}}\E_{\overline{S}}\big[\eta_i(m)\big]\) from \eqref{eq:reg_lb_first_exp} as:
\begin{align}
    &\dfrac{1}{|\sS_B|}\sum_{{\overline{S}} \in \sS_B}\sum_{i \in {\overline{S}}}\E_{\overline{S}}\big[\eta_i(m)\big] \nonumber \\ 
    &= \dfrac{1}{|\sS_B|}\sum_{i=1}^N\sum_{{\overline{S}} \in \sS_B, i \in {\overline{S}}} \E_{\overline{S}}\big[\eta_i(m)\big] \nonumber \\ 
    &= \dfrac{1}{|\sS_B|}\sum_{i=1}^N\sum_{\tS \in \sS_{B-1}^{-i}}\E_{\tS \cup \set{i}}\big[\eta_i(m)\big] \nonumber\\
    &\leq \dfrac{1}{|\sS_B|}\sum_{i=1}^N\sum_{\tS \in \sS_{B-1}^{-i}}\bigg[\E_{\tS}\big[\eta_i(m)\big] +  K\sqrt{2\textup{KL}(\prob_{\tS}\parallel \prob_{\tS \cup \set{i}})} \bigg] \nonumber\\
    &= \dfrac{1}{|\sS_B|}\sum_{\tS \in \sS_{B-1}}\sum_{i \notin \tS}\E_{\tS}\big[\eta_i(m)\big] +  \dfrac{K}{|\sS_B|}\sum_{\tS \in \sS_{B-1}}\sum_{i = 1}^N\sqrt{2\textup{KL}(\prob_{\tS}\parallel \prob_{\tS \cup \set{i}})} \nonumber\\
    &\leq \dfrac{1}{|\sS_B|}\sum_{\tS \in \sS_{B-1}}\sum_{i = 1}^N\E_{\tS}\big[\eta_i(m)\big] +  \dfrac{K}{|\sS_B|}\sum_{\tS \in \sS_{B-1}}\sum_{i=1}^N\sqrt{2\textup{KL}(\prob_{\tS}\parallel \prob_{\tS \cup \set{i}})} \nonumber\\
    &= \dfrac{|\sS_{B-1}|}{|\sS_B|}\E_{\tS}\bigg[KB\bigg] +  \dfrac{K}{|\sS_B|}\sum_{i=1}^N\sum_{\tS \in \sS_{B-1}}\sqrt{2\textup{KL}(\prob_{\tS}\parallel \prob_{\tS \cup \set{i}})} \tag*{(using \mbox{$\sum_{t=b_m}^{mK}\sum_{i=1}^N\I\big[i \in S_t\big] = KB$})} \nonumber\\
    &= \dfrac{KB^2}{N-B+1} + \dfrac{K}{|\sS_B|}\sum_{i=1}^N\sum_{\tS \in \sS_{B-1}^{-i}}\sqrt{2\textup{KL}(\prob_{\tS}\parallel \prob_{\tS \cup \set{i}})}. \label{eq:reg_lb_second_exp}
\end{align}

Using Lemma \ref{lemma_kl_divergence_neighbor}, \eqref{eq:reg_lb_first_exp}, and \eqref{eq:reg_lb_second_exp}, we get:
\begin{align}
    &\max_{{\overline{S}}\in \sS_B} Reg_{\pi, \Theta_{\overline{S}}}(T) \nonumber\\
    &\geq \dfrac{uv\ep}{(B+1)^2}\sum_{m=1}^M\left[KB - \dfrac{KB^2}{N - B+1} - \dfrac{K}{|\sS_B|}\sum_{i=1}^N\sum_{\tS \in \sS_{B-1}}\sqrt{2\textup{KL}(\prob_{\tS}\parallel \prob_{\tS \cup \set{i}})} \right] \nonumber\\
    &\geq \dfrac{uv\ep}{(B+1)^2}\sum_{m=1}^M\left[\dfrac{KB(N - 2B + 1)}{N - B + 1} - \dfrac{NK|\sS_{B-1}|}{|\sS_B|}\sqrt{2(B+1)\big(\dfrac{2B + 9}{4}\big)\max_{z \in \cC}\ep_{z}^2} \right] \nonumber\\
    &\geq \dfrac{uv\ep}{(B+1)^2}\sum_{m=1}^M\left[\dfrac{B(N - 2B + 1)}{N - B + 1} - \dfrac{NB}{2(N-B+1)}\sqrt{\dfrac{(B+1)(2B + 9)}{2}}\max_{z \in \cC}\ep_{z}\right] \nonumber  \\
    &\text{(using \mbox{$\ep_{z} \leq 1 \; \forall z \in \cC, K = 1$})} \nonumber\\
    &= \dfrac{uv\ep B(N - 2B + 1)}{(B+1)^2(N-B+1)}\sum_{m=1}^M\bigg[1 - \dfrac{N}{2(N-2B+1)}\sqrt{\dfrac{(B+1)(2B + 9)}{2}}\max_{z \in \cC}\ep_{z}\bigg]. \label{eq:reg_lb_third_exp}
\end{align}

Recall that $\min\set{\ep_z}_{z \in \cC} = \ep$. We set $\ep = \dfrac{N-2B+1}{2\sqrt{2}N\sqrt{(B+1)(2B + 9)}}$. Then, we can bound $\ep_z$ as: $\ep \leq \ep_{z} \leq 2\ep \leq 1$ for all $z \in \cC$. Therefore, from \eqref{eq:reg_lb_third_exp} and \(\max_{z \in \cC}\ep_{z} = 2\ep = \dfrac{N-2B+1}{\sqrt{2}N\sqrt{(B+1)(2B + 9)}}\), we get:
\begin{align}
    &\max_{{\overline{S}}\in \sS_B} Reg_{\pi, \Theta_{\overline{S}}}(T) \nonumber\\ 
    &\geq \dfrac{uvB(N - 2B + 1)}{(B+1)^2(N-B+1)} \dfrac{N-2B+1}{2\sqrt{2}N\sqrt{(B+1)(2B + 9)}}\sum_{m=1}^M\bigg[1 - \dfrac{1}{2} \bigg] \nonumber\\
    &\geq \dfrac{uvB(N-2B+1)^2 M}{4\sqrt{2}N(B+1)^2(N-B+1)\sqrt{(B+1)(2B+9)}} \nonumber\\
    &= \dfrac{uvB(N-2B+1)^2 NT}{4\sqrt{2}N^2(B+1)^2(N-B+1)\sqrt{(B+1)(2B+9)}} \tag*{(using \mbox{$M = T$})} \nonumber \\
    &> \dfrac{uv B(N-2B+1)^2\log^2(NT)}{4\sqrt{2}N^2(B+1)^2(N-B+1)\sqrt{(B+1)(2B+9)}}\tag*{(using \mbox{$NT > \log^2(NT)$})}\nonumber\\
    &= \bigg(\dfrac{1}{N}\bigg) \bigg(\dfrac{N-2B+1}{4\sqrt{2}(B+1)}\bigg)u \bigg(\dfrac{1}{N(B+1)}\bigg)\bigg(\dfrac{N-2B+1}{\sqrt{(B+1)(2B+9)}}\bigg)v B\log^2(NT).\nonumber
\end{align}

Finally, we set $u = \min\bigg\{1, \dfrac{4\sqrt{2}(B+1)}{N-2B+1}\bigg\}$, and $v = \min\bigg\{\dfrac{1}{2}, \dfrac{\sqrt{(B+1)(2B+9)}}{N-2B+1}\bigg\}$ in the above expression to obtain the following lower bound on $\sup_{(\bU, \bV)} Reg_{\pi, (\bU, \bV)}(T)$:
\begin{align*}
    &\sup_{(\bU, \bV)} Reg_{\pi, (\bU, \bV)}(T) \nonumber\\
    &\geq \max_{\overline{S}\in \sS_B} Reg_{\pi, \Theta_{\overline{S}}}(T) \nonumber\\
    &\geq \min\bigg\{\dfrac{1}{N}, \dfrac{N-2B+1}{4\sqrt{2}N(B+1)}\bigg\} \min\bigg\{\dfrac{1}{N(B+1)}, \dfrac{N-2B+1}{2N(B+1)\sqrt{(B+1)(2B+9)}}\bigg\}B\log^2(NT)  \\ 
    &\geq \mathcal{B} \log^2(NT), \;\; \text{where $\mathcal{B} \in \R_+$.}
\end{align*}

\hfill $\square$ 

\section{Proof of Theorem \ref{thm_info_transparency}}
\label{ec_info_robust}

For the TWL problem, we have $\exp{(\mathbf{x}_{i, z}^{\top}\boldsymbol{\vartheta})} \leq 1$ under Assumption \ref{assumption:mnl}. This leads to $\mathbf{x}_{i, z}^{\top}\boldsymbol{\vartheta} \leq 0$, which is equivalent to $\sum_{j=1}^d x_{i, z}^j\vartheta_j \leq 0$. Note that for Assumption \ref{assumption:mnl} to be true for any value of $d$, we must have $x_{i, z}^j\vartheta_j \leq 0$ for each $j$. Let $\cx_{i, z}$ be the modified feature vector such that there is at least one feature from $\mathbf{x}_{i, z}$ which is now hidden by the platform manager in $\cx_{i, z}$. Suppose that the $k$-th feature is hidden, i.e., $x_{i, z}^k = 0$. Clearly, $x_{i, z}^k\vartheta_k = 0$ for $\cx_{i, z}$. Therefore, we get $(\cx_{i, z})^{\top}\boldsymbol{\vartheta} = \sum_{j=1}^d x_{i, z}^j\vartheta_j = \sum_{j \neq k} x_{i, z}^j\vartheta_j + x_{i, z}^k\vartheta_k = \sum_{j \neq k} x_{i, z}^j\vartheta_j \leq 0$, leading to $\exp{\left((\cx_{i, z})^{\top}\boldsymbol{\vartheta}\right)} \leq 1$. The above result is true even when several features are masked. This indicates that when the platform manager reveals only a fraction of the customer-specific information vector to the sellers, then also the sellers' revised preference parameters will continue to be bounded by 1, i.e., $\Tilde{u}_{i, z} = \exp{\left((\cx_{i, z})^{\top}\boldsymbol{\vartheta}\right)} \leq 1$, where $\Tilde{u}_{i, z}$ represents seller $i$'s revised preference parameter for customer of type $z$. Using this revised framework, we proceed to prove Theorem \ref{thm_info_transparency} below.

Recall the definitions of $\omega_{i, z}(t), \av_{z, i}(t), v_{z, i, t}^{\textup{UCB}}, \theta_{z, i}(m), \au_{i, z}(m)$, $u_{i, z, m}^{\textup{UCB}}$, $\cM(S_t, \bC(t-1), c_t)$, $\uM(S_t, \bC(t-1), c_t)$, $S_t^*$, and $\pS_t$ from \eqref{eq:assortment_count}, \eqref{eq:avg_proposal_count}, \eqref{eq:vucb}, \eqref{eq:proposal_set_size}, \eqref{eq:avg_match_count}, \eqref{eq:uucb}, \eqref{eq:deltaM}, \eqref{eq:deltaM_ucb}, \eqref{eq:optimal_assortment_oracle}, and \eqref{eq:optimistic_assortment}, respectively.

Using Lemma \ref{lemma_ucb}, we have each with probability at least $1-2/Nm$,
\begin{align*}
    \Tilde{u}_{i, z, m}^{\textup{UCB}} &\geq \Tilde{u}_{i, z},\\
    \Tilde{u}_{i, z, m}^{\textup{UCB}} - \E\big[\Bar{\Tilde{u}}_{i, z}(m)\big] &\leq \Tilde{\beta_3} \dfrac{\log{Nm}}{\theta_{z, i}(m)} + \Tilde{\beta_4} \sqrt{\dfrac{\log{Nm}}{\theta_{z, i}(m)}\E\big[\Bar{\Tilde{u}}_{i, z}(m)\big]}, 
\end{align*}
where $\Tilde{\beta_3}, \Tilde{\beta_4} \in \mathbb{R}_+$.

From Lemma \ref{lemma_optimal_value_bound} and \eqref{eq:deltaM_monotone_eq}, we have with probability at least $1-8/m$, 
\begin{align*}
    &\cM(S_t^*, \bC(t-1), c_t) - \cM(\pS_t, \bC(t-1), c_t) \\
    &\leq \sum_{i \in \pS_t}\bigg(q(\omega_{i, c_t}(t), v_{c_t, i}) + q(\theta_{c_t, i}(m), \Tilde{u}_{i, c_t}) + q(\omega_{i, c_t}(t), v_{c_t, i}) q(\theta_{c_t, i}(m), \Tilde{u}_{i, c_t})\bigg),  
\end{align*}
where $q(\theta_{c_t, i}(m), \Tilde{u}_{i, c_t}) \doteq \Tilde{\beta_3} \dfrac{\log{Nm}}{\theta_{c_t, i}(m)} + \Tilde{\beta_4} \sqrt{\dfrac{\log{Nm}}{\theta_{c_t, i}(m)}\Tilde{u}_{i, c_t}}$, and $q(\omega_{i, c_t}(t), v_{c_t, i})$ is defined in Lemma \ref{lemma_optimal_value_bound}. 

Incorporating the above results in \eqref{eq:regret_epoch} and simplifying as shown earlier in \S \ref{proof_thm_regret_upper_bound} , we get:
\begin{align*}
    &\E\bigg[\Delta V_m\big(b_m, \bC(b_m-1)\big)\bigg] \\
    &\leq  \sum_{t=b_m}^{mK}\gamma_m\dfrac{8}{m} + \gamma_m\sum_{t=b_m}^{mK}\big\{\cM(S_t^*, \bC(t-1), c_t) - \cM(\pS_t, \bC(t-1), c_t)\big\}\I\big[\cH_m^{\complement}\big]\\ 
    &\leq \dfrac{8\bR K}{m} + \bR\E\Bigg[\sum_{t=b_m}^{mK}\sum_{i \in \pS_t}\bigg\{q(\omega_{i, c_t}(t), v_{c_t, i}) + q(\theta_{c_t, i}(m), \Tilde{u}_{i, c_t}) + q(\omega_{i, c_t}(t), v_{c_t, i})q(\theta_{c_t, i}(m), \Tilde{u}_{i, c_t})\bigg\}\Bigg]\\
    &\leq \dfrac{8\bR K}{m} \\  
    &+ \bR\E\Bigg[\sum_{t=b_m}^{mK}\sum_{i \in \pS_t}\bigg\{\beta_1 \dfrac{\log{Nt}}{\omega_{i, c_t}(t)} + \beta_2 \sqrt{\dfrac{\log{Nt}}{\omega_{i, c_t}(t)}v_{c_t, i}} + \Tilde{\beta_3} \dfrac{\log{Nm}}{\theta_{c_t, i}(m)} + \Tilde{\beta_4} \sqrt{\dfrac{\log{Nm}}{\theta_{c_t, i}(m)}\Tilde{u}_{i, c_t}} \\
    &+ \bigg(\beta_1 \dfrac{\log{Nt}}{\omega_{i, c_t}(t)} + \beta_2 \sqrt{\dfrac{\log{Nt}}{\omega_{i, c_t}(t)}v_{c_t, i}}\bigg)\bigg(\Tilde{\beta_3} \dfrac{\log{Nm}}{\theta_{c_t, i}(m)} + \Tilde{\beta_4} \sqrt{\dfrac{\log{Nm}}{\theta_{c_t, i}(m)}\Tilde{u}_{i, c_t}}\bigg)\bigg\}\Bigg],
\end{align*}
which on further simplification, as shown in \S \ref{proof_thm_regret_upper_bound}, will yield the following:
\begin{align*}
    &\E\bigg[\Delta V_m\big(b_m, \bC(b_m-1)\big)\bigg] \nonumber\\ 
    &\leq \dfrac{8\bR K}{m}\nonumber \\ 
    &+ \beta \bR \E\Bigg[\log^2(mKN)\sum_{t =b_m}^{t = mK}\sum_{i \in \pS_t}\dfrac{1}{\theta_{c_t, i}^2(m)} + 2\log^{1.5}(mKN)\sum_{t =b_m}^{t = mK}\sum_{i \in \pS_t}\dfrac{1}{\theta_{c_t, i}^{1.5}(m)}\\ 
    &+ 3\log(mKN)\sum_{t =b_m}^{t = mK}\sum_{i \in \pS_t}\dfrac{1}{\theta_{c_t, i}(m)} + 2\sqrt{\log(mKN)}\sum_{t =b_m}^{t = mK}\sum_{i \in \pS_t}\dfrac{1}{\sqrt{\theta_{c_t, i}(m)}}\Bigg], 
\end{align*}
which is exactly the same as $\E\bigg[\Delta V_m\big(b_m, \bC(b_m-1)\big)\bigg]$ under complete information transparency (see \eqref{eq:exp_regret_epoch}). Upon substituting the above inequality in the expression of regret in \eqref{eq:regret_revisited} and simplifying the resulting regret expression as explained in \eqref{eq:regret_semifinal} and onward, we get:
\begin{align*}
    Reg{_{_\textup{TWL-UCB}}}(T) = \mathcal{O}\big(\log^2(NT)\big).
\end{align*}

Thus, \(Reg{_{_\textup{TWL-UCB}}}(T) = \sum_{m=1}^M\E\bigg[\Delta V_m\big(b_m, \bC(b_m-1)\big)\bigg]\) under partial information transparency, which is the same as in Theorem \ref{thm_regret_upper_bound}.

\hfill $\square$ 

\section{Proof of Theorem \ref{thm_cust_reward}}
\label{ec_customer_reward_robust}

{\color{black} 



Define $\underline{S}_t^*$ and $\underline{\pS}_t$ as the optimal assortments under the clairvoyant policy and the TWL-UCB policy, respectively, under the customer-dependent reward structure, such that $\underline{S}_t^*$ and $\underline{\pS}_t$ are, respectively, given by:
\begin{align}
    \underline{S}_t^* &= \argmax_{S_{t} \in \sS} \Bigg\{\gamma_{m, c_t}\cM(S_{t}, \bC(t-1), c_{t}) + \E\bigg[\underline{V}_m^*\big(t+1, \bC(t)\big) \mid t, \bC(t-1)\bigg]\Bigg\}, \label{eq:S_star} \\
    \underline{\pS}_t &= \argmax_{S_t \in \sS} \Bigg\{\sum_{i \in S_{t}}\gamma_{m, c_t}\dfrac{v_{c_{t}, i, t-1}^{\textup{UCB}}r_{i, c_{t}}^{\textup{UCB}}(b_m)}{1 + \sum_{i \in S_{t}}v_{c_t, i, t-1}^{\textup{UCB}}}\Bigg\}, \label{eq:S_hat}
\end{align}
where $\underline{V}_m^*\big(t+1, \bC(t)\big)$ is given by:
\begin{align}
    \underline{V}_m^*\big(t+1, \bC(t)\big) = \max_{S_{t+1} \in \sS} \Bigg\{ \E\bigg[\sum_{l = t+1}^{mK}\gamma_{m, c_l}\cM(S_{l}, \bC(l-1), c_{l}) \mid t+1, \bC(t)\bigg]\Bigg\}. \nonumber 
\end{align}

Using \eqref{eq:reward_epoch} and the arguments in the proof of Lemma \ref{lemma_exp_reward} (in \S \ref{supplementary_lemmas}), the total reward received by the platform in epoch $m$ under the customer-dependent reward structure is given by:
\begin{align}
    \mathcal{R}(m, \bU, \bV) &= \sum_{t=b_m}^{mK}\gamma_{m, c_t}\cM(S_t, \bC(t-1), c_t). \label{eq:reward_customer_dependent}
\end{align}

Using \eqref{eq:regret_def}, \eqref{eq:S_star}, \eqref{eq:S_hat}, and \eqref{eq:reward_customer_dependent}, the regret over a horizon of $T$ periods under the customer-dependent reward can therefore be bounded as follows: 
\begin{align}
    &Reg{_{_\textup{TWL-UCB}}}(T) \nonumber \\ 
    &= \sum_{m=1}^M\E\left[\sum_{t=b_m}^{mK}\gamma_{m, c_t}\left\{\cM(\underline{S}_t^*, \bC(t-1), c_t) - \cM(\underline{\pS}_t, \bC(t-1), c_t)\right\}\right] \nonumber\\
    &\leq \max_{m, z}\gamma_{m, z} \sum_{m=1}^M\E\left[\sum_{t=b_m}^{mK}\left\{\cM(\underline{S}_t^*, \bC(t-1), c_t) - \cM(\underline{\pS}_t, \bC(t-1), c_t)\right\}\right] \nonumber\\
    &= \max_{m, z}\gamma_{m, z}\sum_{m = 1}^M \E\Bigg[\sum_{t = b_m}^{mK} \bigg\{\cM(\underline{S}_t^*, \bC(t-1), c_t) - \cM(\underline{\pS}_t, \bC(t-1), c_t)\bigg\}\I\big[\cH_m + \cH_m^{\complement}\big]\Bigg] \nonumber \\
    &= \max_{m, z}\gamma_{m, z}\sum_{m=1}^M \E\left[\sum_{t = b_m}^{mK}  \bigg\{\cM(\underline{S}_t^*, \bC(t-1), c_t) - \cM(\underline{\pS}_t, \bC(t-1), c_t)\bigg\}\I\big[\cH_m\big] \right] \nonumber \\
    &+ \max_{m, z}\gamma_{m, z}\sum_{m=1}^M\E\left[\sum_{t = b_m}^{mK}\bigg\{\cM(\underline{S}_t^*, \bC(t-1), c_t) - \cM(\underline{\pS}_t, \bC(t-1), c_t)\bigg\}\I\big[\cH_m^{\complement}\big] \right], \label{eq:reg_v2_exp1}
\end{align}
where we split the total expected regret into two complementary probability regimes as discussed in \S \ref{proof_thm_regret_upper_bound}. 
}

In \S \ref{proof_thm_regret_upper_bound}, we show that $\prob(\cH_m) \leq 8/m$, so $\prob(\cH_m^{\complement}) \geq 1 - 8/m$. Therefore, from \eqref{eq:reg_v2_exp1}, we have:
\begin{align}
    &Reg{_{_\textup{TWL-UCB}}}(T) \nonumber \\ 
    &\leq \max_{m, z}\gamma_{m, z}\sum_{m=1}^M \E\Bigg[\sum_{t = b_m}^{mK}  \bigg\{\cM(\underline{S}_t^*, \bC(t-1), c_t) - \cM(\underline{\pS}_t, \bC(t-1), c_t)\bigg\}\I\big[\cH_m\big] \Bigg] \nonumber \\
    &+ \max_{m, z}\gamma_{m, z}\sum_{m=1}^M \E\Bigg[\sum_{t = b_m}^{mK}  \bigg\{\cM(\underline{S}_t^*, \bC(t-1), c_t) - \cM(\underline{\pS}_t, \bC(t-1), c_t)\bigg\}\I\big[\cH_m^{\complement}\big] \Bigg] \nonumber \\
    &\leq \max_{m, z}\gamma_{m, z}\sum_{m=1}^M \sum_{t=b_m}^{mK}\prob(\cH_m) \nonumber \\ 
    &+ \max_{m, z}\gamma_{m, z}\E\Bigg[ \sum_{m=1}^M\sum_{t=b_m}^{mK}\bigg(\sum_{i \in \pS_t} q(\omega_{i, c_t}(t), v_{c_t, i}) + q(\theta_{c_t, i}(m), u_{i, c_t}) + q(\omega_{i, c_t}(t), v_{c_t, i})q(\theta_{c_t, i}(m), u_{i, c_t})\bigg) \Bigg] \nonumber\\ 
    &\text{(using \mbox{$\cM(S_t, \bC(t-1), c_t) \leq 1 \; \forall S_t$}, Lemma \ref{lemma_optimal_value_bound}, and \eqref{eq:deltaM_monotone_eq} as discussed in \S \ref{proof_thm_regret_upper_bound})} \nonumber \\
    &\leq \max_{m, z}\gamma_{m, z}\sum_{m=1}^M \sum_{t=b_m}^{mK}\dfrac{8}{m} \nonumber \\ 
    &+ \max_{m, z}\gamma_{m, z}\E\Bigg[\sum_{m=1}^M\sum_{t=b_m}^{mK}\bigg(\sum_{i \in \pS_t}q(\omega_{i, c_t}(t), v_{c_t, i}) + q(\theta_{c_t, i}(m), u_{i, c_t}) + q(\omega_{i, c_t}(t), v_{c_t, i})q(\theta_{c_t, i}(m), u_{i, c_t})\bigg) \Bigg] \nonumber\\
    &\leq \max_{m, z}\gamma_{m, z}8K(1 + \log{T}) \nonumber\\ 
    &+ \max_{m, z}\gamma_{m, z}\E\Bigg[\sum_{m=1}^M\sum_{t=b_m}^{mK}\bigg(\sum_{i \in \pS_t}q(\omega_{i, c_t}(t), v_{c_t, i}) + q(\theta_{c_t, i}(m), u_{i, c_t}) + q(\omega_{i, c_t}(t), v_{c_t, i})q(\theta_{c_t, i}(m), u_{i, c_t})\bigg) \Bigg]. \nonumber\\
    &\text{\Bigg(using \mbox{$\sum_{m=1}^{M} \dfrac{1}{m} \leq 1 + \log(M) \leq 1 + \log(T)$}\Bigg)} \nonumber
\end{align}

In the above expression, we can simplify the term, 
\begin{align*}
    \E\Bigg[\sum_{m=1}^M\sum_{t=b_m}^{mK}\sum_{i \in \pS_t}q(\omega_{i, c_t}(t), v_{c_t, i}) + q(\theta_{c_t, i}(m), u_{i, c_t}) + q(\omega_{i, c_t}(t), v_{c_t, i})q(\theta_{c_t, i}(m), u_{i, c_t})\Bigg], \nonumber 
\end{align*} 
similar to the way presented in \S \ref{proof_thm_regret_upper_bound} (see \eqref{eq:regret_epoch} and thereafter for detailed analysis). Combining all the results and setting $\max_{m, z}\gamma_{m, z} = \Hat{\gamma}$, we get the following:
\begin{align}
    Reg{_{_\textup{TWL-UCB}}}(T) = \mathcal{O}\big(\log^2(NT)\big).\nonumber
\end{align}

\hfill $\square$ 

\end{document}